\newtheorem{theorem}{Theorem}
\newtheorem{lemma}[theorem]{Lemma}
\newtheorem{cor}[theorem]{Corollary}
\newtheorem{assum}{Assumption}
\theoremstyle{definition}
\definecolor{codegreen}{rgb}{0,0.6,0}
\definecolor{codegray}{rgb}{0.5,0.5,0.5}
\definecolor{codepurple}{rgb}{0.58,0,0.82}
\definecolor{backcolour}{rgb}{0.95,0.95,0.92}
\lstdefinestyle{mystyle}{
    backgroundcolor=\color{backcolour},   
    commentstyle=\color{codegreen},
    keywordstyle=\color{magenta},
    numberstyle=\tiny\color{codegray},
    stringstyle=\color{codepurple},
    basicstyle=\ttfamily\footnotesize,
    breakatwhitespace=false,         
    breaklines=true,                 
    captionpos=b,                    
    keepspaces=true,                 
    numbers=left,                    
    numbersep=5pt,                  
    showspaces=false,                
    showstringspaces=false,
    showtabs=false,                  
    tabsize=2
}
\newcommand{\indep}{\perp \!\!\! \perp}
\title{Deep Learning Methods for Proximal Inference via Maximum Moment Restriction}
\author{
Benjamin Kompa\thanks{Denotes equal contribution} \\
Department of Biomedical Informatics\\
Harvard Medical School \\
 \texttt{benjamin\_kompa@hms.harvard.edu}
 \And
 David R. Bellamy\textsuperscript{*} \\
 Department of Epidemiology, CAUSALab\\
 Harvard School of Public Health\\
 \texttt{david\_bellamy@g.harvard.edu}\\
 \And
Thomas Kolokotrones \\
Department of Epidemiology\\
Harvard School of Public Health\\
\texttt{thomas\_kolokotrones@hms.harvard.edu}\\
\And
James M. Robins \\
Department of Epidemiology, CAUSALab\\
Harvard School of Public Health\\
\texttt{robins@hsph.harvard.edu}\\
\And
Andrew L. Beam \\
Department of Epidemiology, CAUSALab\\
Harvard School of Public Health\\
\texttt{andrew\_beam@hms.harvard.edu}\\
}
\begin{document}

\maketitle

\begin{abstract}
  The No Unmeasured Confounding Assumption is widely used to identify causal effects in observational studies. Recent work on proximal inference has provided alternative identification results that succeed even in the presence of unobserved confounders, provided that one has measured a sufficiently rich set of \emph{proxy variables}, satisfying specific structural conditions. However, proximal inference requires solving an ill-posed integral equation. Previous approaches have used a variety of machine learning techniques to estimate a solution to this integral equation, commonly referred to as the \emph{bridge function}. However, prior work has often been limited by relying on pre-specified kernel functions, which are not data adaptive and struggle to scale to large datasets. In this work, we introduce a flexible and scalable  method based on a deep neural network to estimate causal effects in the presence of unmeasured confounding using proximal inference. Our method achieves state of the art performance on two well-established proximal inference benchmarks. Finally, we provide theoretical consistency guarantees for our method. 
  
\end{abstract}

\section{Introduction}\label{sec:introduction}
\looseness=-1
Causal inference is concerned with estimating the effect of a treatment $A$ on an outcome $Y$ from either observational data or the results of a randomized experiment. An estimand of primary importance is the \emph{average causal effect} (ACE), which is the expected difference in $Y$ caused by changing the treatment from value $a$ to $a'$ for each unit in the study population, and is defined as a contrast between the expected value of the potential outcomes at the two levels of the treatment: $\mathbb{E}[Y^{a'}] - \mathbb{E}[Y^a]$. However, in observational settings, the ACE is rarely equal to the observed difference in conditional means, $\mathbb{E}[Y | A=a'] - \mathbb{E}[Y | A=a]$ due to confounding. In an attempt to eliminate the influence of confounding, investigators measure putative confounders $X$ and subsequently make adjustments for $X$ in their analyses.

Given $X$, common approaches, such as standardization and inverse probability weighting (\citet{hernan_robins_2021}), obtain valid estimates of the ACE given that the following assumptions hold: i) Positivity: $Pr[A = a|X = x] > 0$ for all $x$ in the population, ii) Consistency: $Y^a = Y$ for all individuals with $A=a$, iii) No unmeasured confounding which results in conditional exchangeability: $Y^a \indep A | X$., and iv) No model misspecification.



The assumptions are typically unverifiable for continuous data. While model misspecification is likely in all real-world scenarios, flexible models and doubly robust estimators have been developed to mitigate the effect of this assumption\cite{10.1093/aje/kwq439}. Therefore, the assumption of conditional exchangeability, or equivalently, the No Unmeasured Confounding Assumption (NUCA), is the defining characteristic of this broad set of approaches to causal effect estimation (\citet{hernan2006estimating}). However, in many settings, it is unrealistic to assume that we are able to measure a sufficient set of confounders for $A$ and $Y$ such that conditional exchangeability holds.

\emph{Proximal inference} is a recently introduced framework that allows for the identification of causal effects even in the presence of unmeasured confounders \citep{Miao2018-tr, Tchetgen_Tchetgen2020-nw}. Proximal inference requires categorizing the measured covariates into three groups: treatment-inducing proxy variables $Z$, outcome-inducing proxy variables $W$, and ``backdoor'' variables $X$ that affect both $A$ and $Y$ (i.e. typical confounders). See Figure \ref{fig:figure1} for an example of a directed acyclic graph (DAG) that admits identification under the assumptions of proximal inference. The proxy sets $W$ and $Z$ must contain sufficient information about the remaining unobserved confounders $U$, a condition that can be formalized by completeness assumptions. Under these and several other conditions, one can estimate average potential outcomes from data even in the presence of unmeasured confounding. Proximal inference has potential applications in medical settings, where a natural question is the effect of a treatment on an outcome in the presence of unmeasured confounding. Before applying proximal inference to real world problems, more validation is required before they can be used safely to inform medical decision-making.

Existing methods for proximal inference can be divided into two categories: two-stage regression procedures and methods that impose a maximum moment restriction (MMR). In two-stage regression procedures, the first stage aims to predict outcome-inducing proxy variables $W$ as a function of $A$, $X$, and $Z$. Then, the second stage regression estimates outcomes $Y$ as a function of the predicted $\hat{W}$ and the treatment $A$, and measured confounders $X$. \citet{Tchetgen_Tchetgen2020-nw} introduced the first estimation technique for proximal inference which was a two-stage procedure that used a model based on ordinary least squares regression. \citet{Mastouri2021-rn} extended this framework by replacing simple linear regression with kernel ridge regression. \citet{Xu2021-io} increased feature flexibility further by incorporating neural networks as feature maps instead of kernels.

\looseness=-1
In contrast, MMR methods are single-stage procedures to estimate average potential outcomes. \citet{Muandet2020-yv} introduced MMR for reproducing kernel Hilbert spaces (RKHS). MMR critically relies on the optimization of a V-statistic or U-statistic for learning a function needed to calculate the ACE. \citet{Zhang2020-qr} used an MMR method to obtain point identification of the ACE in the instrumental variable (IV) setting and incorporated neural networks into their method trained with the V-statistic as a loss function and optimized using stochastic gradient descent. \citet{Mastouri2021-rn} demonstrated that the MMR framework with kernel functions can be used for proximal inference as well as IV regression. 

In this work, we introduce a new method, \textit{Neural Maximum Moment Restriction} (NMMR) which is a flexible neural network approach that is trained to minimize a loss function derived from either a U-statistic or V-statistic to satisfy MMR in the proximal setting. The method introduced in this work makes several novel contributions to the proximal inference literature: 
\begin{compactitem}
\looseness=-1
\item We introduce a new, single stage method based on neural networks for estimating potential outcomes and the ACE in the presence of unmeasured confounding. 
\looseness=-1
\item We provide new theoretical consistency guarantees for our method. 
\looseness=-1
\item We demonstrate state-of-the-art (SOTA) performance on two well-established proximal inference benchmark tasks.
\looseness=-1
\item We show for the first time how to incorporate domain-specific inductive biases using a convolutional model on a proximal inference task that uses images.
\looseness=-1
\item We provide the first unbiased estimate of the MMR risk function using the U-statistic rather than V-statistic in the proximal setting.

\end{compactitem}

\begin{figure}[htbp]
    \centering
    \includegraphics[width=\textwidth]{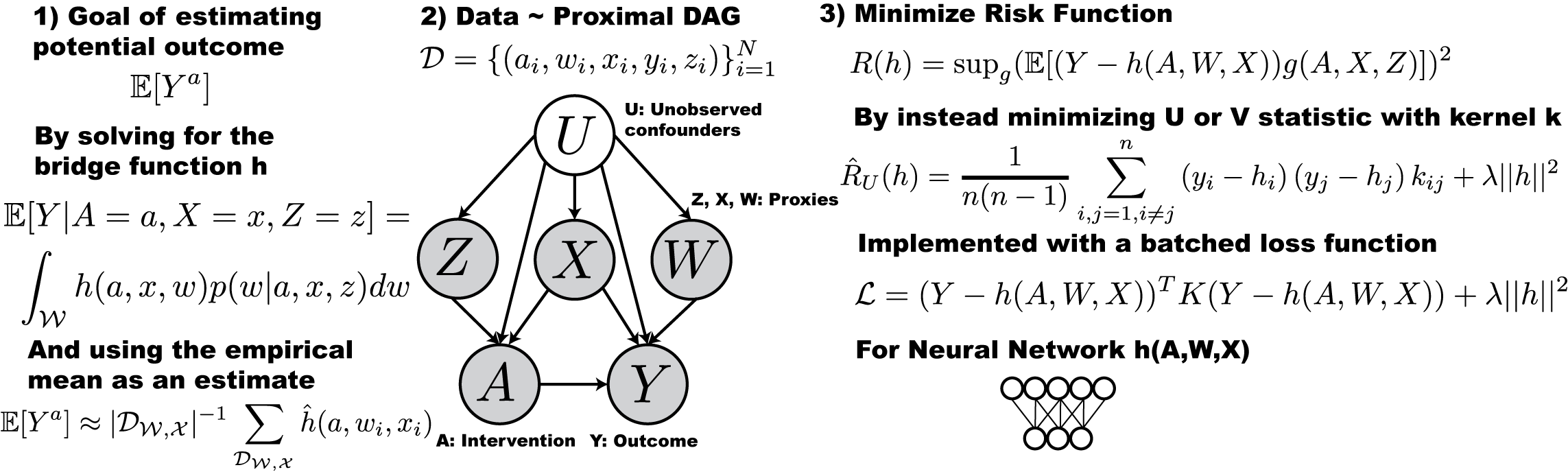}
    \caption{A summary of NMMR. Our method estimates the bridge function $h$, which can be used to compute the average potential outcome $\mathbb{E}[Y^a]$. We rely on structural assumptions for the causal DAG generating the data. NMMR uses a U or V statistic to train a neural network that solves a risk function that reflects a maximum moment restriction function. }
    \label{fig:figure1}
\end{figure}

\section{Background: Proximal Inference}
In this manuscript, capital, caligraphic, and lowercase letters (e.g. $A$, $\mathcal{A}$, and $a$) denote random variables and their corresponding ranges and realizations, respectively. Estimates of random variables and functions will be indicated by hats, e.g. $\hat{Y}$ and $\hat{h}$ are estimates of $Y$ and $h$, respectively.

Our goal is to estimate, for each level of treatment, $a$, the expected potential outcome $\mathbb{E}[Y^{a}]$. Without loss of generality, we refer to $A$ as a treatment, though it could refer to any (possibly continuous) intervention. Proximal inference allows us to do this, in the presence of unobserved confounders, $U$, provided that we have a sufficiently rich set of proxies, $(W, Z)$, that obey certain structural assumptions.  We may also include observed confounders, $X$. We also require the following: 

\begin{assum} \label{a:indep}

Given $(A, U, W, X, Y, Z)$, $Y \indep Z | A, U, X$ and $W \indep (A, Z) | U, X$.

\end{assum}

Figure \ref{fig:figure1} provides an example of a DAG that satisfies these assumptions. The following completeness conditions formalize the notion that the proxies are ``sufficiently rich'':

\begin{assum} \label{a:comp U}

For all $f \in L^2$ and all $a \in \mathcal{A}, x \in \mathcal{X}$, $\mathbb{E}[ f(U) | A=a, X=x, Z=z ] = 0$ for all $z \in \mathcal{Z}$ if and only if $f(U) = 0$ almost surely.

\end{assum}

\begin{assum} \label{a:comp Z}

For all $f \in L^2$ and all $a \in \mathcal{A}, x \in \mathcal{X}$, $\mathbb{E}[ f(Z) | A=a, W=w, X=x ] = 0$ for all $w \in \mathcal{W}$ if and only if $f(Z) = 0$ almost surely.

\end{assum}
\looseness=-1
    We will use two other assumptions at various points in the paper.  The first guarantees the uniqueness of the bridge function, while the second ensures the risk function does not have false zeros.

\begin{assum} \label{a:comp W}

	$\mathbb{E} \left[ f \left( A, W, X \right) \middle| A, X, Z \right] = 0$ $\mathrm{P}_{A, X, Z}$-almost surely if and only if $f \left( A, W, X \right) = 0$ $\mathrm{P}_{A, W, X}$-almost surely.

\end{assum}

\begin{assum} \label{a:ISPD}

	$k : \left( \mathcal{A} \times \mathcal{X} \times \mathcal{Z} \right)^2 \to \mathbb{R}$ is continuous, bounded, and Integrally Strictly Positive Definite (ISPD), so that $\int f \left( \xi \right) k \left( \xi, \xi' \right) f \left( \xi' \right) d \xi d \xi' > 0$ if and only if $f \neq 0$ $\mathrm{P}_{A, Z, X}$-almost surely.

\end{assum}

Assumptions \ref{a:indep}-\ref{a:comp Z} together with several regularity assumptions (see assumptions (v)-(vii) in \citep{Miao2018-tr}) ensure that there exists a function $h$ such that:
\begin{equation}\label{eq:integral_equation}
    \mathbb{E}[Y|A=a, X=x, Z=z] = \int_\mathcal{W} h(a,w,x)p(w|a,x,z)dw
\end{equation}
Equation \ref{eq:integral_equation} is a Fredholm integral equation of the first kind; its solution, $h$, is often called the ``bridge function.'' Theorem 1 of \citet{Miao2018-tr} shows that the expected potential outcomes are given by:
\begin{equation}\label{eq:potential_outcome}
    \mathbb{E}[Y^{a}] = \int_{\mathcal{W},\mathcal{X}} h(a,w,x)p(w,x)dw dx = \mathbb{E}_{W,X}[h(a, W, X)]
\end{equation}

We can obtain unbiased estimates of the expected potential outcomes, $\mathbb{E}[Y^a]$, by splitting the sample, using the first part of the data to estimate the bridge function $h$, by some $\hat{h}$, and using the second part of the data to compute the empirical mean of $\hat{h}$ with $a$ fixed to the value of interest, $\hat{\mathbb{E}}[Y^{a}]=\frac{1}{M}\sum_{i=1}^M \hat{h}(a, w_i, x_i)$. From these, we can obtain other quantities of interest, like the ACE.

In what follows, all norms will be $L^2$ with respect to the relevant probability measure, unless otherwise noted. If necessary, we will explicitly denote the norm of $f$ with respect to a probability measure $\mathcal{P}_{U, V, \dots}$ by $\| f \|_{\mathcal{P}_{U, V, \dots}}$ or $\| f \|_{2, \mathcal{P}_{U, V, \dots}}$.

\section{Related Work} \label{sec:related_work}

\citet{Kuroki2014-ob} first established identification of a causal effect in the setting of unobserved confounders by leveraging noisy proxy variables, $W$, to ``recover'' the distribution of $U$, potentially using external datasets to estimate $p(w|u)$. Tchetgen-Tchetgen and colleagues extended these results to allow for identification without recovery of $U$ in \citet{Miao2018-tr} and \citet{Tchetgen_Tchetgen2020-nw}, also providing a 2-Stage Least Squares (2SLS) method to identify and estimate causal effects under the assumption that the bridge function is linear. \citet{Cui2020-di} introduced a bridge function for Inverse Probability Weighting (IPW), which enabled IPW and Doubly Robust (DR) proximal estimators, for which they presented influence functions under binary treatment.  This was extended in \citet{Ghassami-2022} and further explored by \citet{Kallus2021-hi}, who provided alternative identification assumptions as well as results for general treatments.  The latter two works also consider the use of adversarial methods for estimation, which were previously utilized by Lewis and Syrgkanis \cite{Lewis2018-et} and \citet{Bennett2019-xb} in the Instrumental Variable (IV) setting and \citet{Dikkala2020-sz} in conditional moment models.

Other early investigators include \citet{Deaner2018-ik}, who developed machine learning techniques for proximal inference introducing a method based on a two-stage penalized sieve distance minimization. Several later works similarly employed two-stage regressions with increasingly flexible basis functions to estimate potential outcomes. \citet{Mastouri2021-rn} developed a two-stage kernel ridge regression (Kernel Proxy Variables ``KPV'') to estimate the bridge function $h$, allowing more flexibility than the linear basis of \citet{Tchetgen_Tchetgen2020-nw}. \citet{Xu2021-io} further improved upon this with an adaptive basis derived from neural networks. Their two stage regression method, Deep Feature Proxy Variables (DFPV), established the previous SOTA performance on the proximal benchmark tasks that we consider in our work. Singh and colleagues also considered two stage kernel models in the IV setting \cite{Singh2019-nc} and RKHS techniques for proximal inference \cite{Singh2020-pd}.

An alternative approach based on maximum moment restriction (MMR) uses single-stage estimators of the bridge function. MMR-based methods were established in \citet{Muandet2020-yv} as a way to enforce conditional moment restrictions \citep{Newey2003-ia}. \citet{Zhang2020-qr} introduced the MMR framework to the IV setting, which can be considered a subset of proximal inference without outcome-inducing proxies $W$ and with additional exclusion restrictions \citep{Tchetgen_Tchetgen2020-nw}. There are now several machine learning methods that can be applied in the IV setting \citep{Hartford2017-wm, Lewis2018-et, Singh2019-nc, Dikkala2020-sz, Kato2021-zo} 

Of note, \citet{Zhang2020-qr} introduced MMR-IV, which is related to work by \citet{Lewis2018-et} and \citet{Dikkala2020-sz}. MMR-IV involves optimizing a family of risk functions based on U or V-statistics \cite{Serfling2009-gk}. However, \citet{Zhang2020-qr} only considered IV, rather than proximal, inference and only optimized neural networks by a loss that corresponds to the V-statistic. The V-statistic provides a biased estimate \citep{Serfling2009-gk} of its corresponding risk function, such as $R(h)$ in Equation \ref{eq:r_k}.

Finally, \citet{Mastouri2021-rn} introduced an MMR-based method for proximal inference called Proximal Maximum Moment Restriction (PMMR). PMMR extends the MMR framework to the proximal setting through the use of kernel functions and also optimizes Equation \ref{eq:r_k} via a V-statistic. For a comparison of our model to PMMR and MMR-IV, see Table \ref{tab:related_works}.

\section{Our Method: Neural Maximum Moment Restriction (NMMR)} \label{sec:NMMR}
In this work we propose \emph{Neural Maximum Moment Restriction} (NMMR): a method to estimate expected potential outcomes $\mathbb{E}[Y^a]$ in the presence of unmeasured confounding. We use deep neural networks due to their flexibility, scalability, and adaptability to diverse data types (e.g. using convolutions for images). By rewriting maximum moment restrictions \citep{Muandet2020-yv} as U and V-statistics \citep{Mastouri2021-rn, Zhang2020-qr}, we show how a single-stage neural network procedure can be used to estimate the bridge function.

Following \citet{Muandet2020-yv} and \citet{Zhang2020-qr}, we rewrite the integral equation (\ref{eq:integral_equation}) as a \emph{conditional moment restriction}: $\mathbb{E}[ Y - h(A,W,X) | A, X, Z] = 0$. Then, for any measurable $g : \mathcal{A} \times \mathcal{X} \times \mathcal{Z} \to \mathbb{R}$, $\mathbb{E}[(Y - h(A,W,X))g(A,X,Z)|A, X, Z] = 0$ , and, thus, $\mathbb{E}[(Y - h(A,W,X))g(A,X,Z)] = 0$, so we can use unconditional expectations instead of conditional ones.  This is the basis of the MMR framework of \citet{Muandet2020-yv}. Since this yields an infinite number of moment restrictions we employ a minimax strategy to estimate $h$ by minimizing the risk $R(h)$ for the worst-case value of $g$:
\begin{align} \label{eq:r_k}
    R(h) &= \sup_{\|g\| \leq 1} (\mathbb{E}[(Y-h(A,W,X))g(A,X,Z)])^2
\end{align}
Following \citet{Zhang2020-qr}'s work in the IV setting, \citet{Mastouri2021-rn} (Lemma 2) showed that, if $g$ is an element of an RKHS, $R(h)$ can be rewritten in the form
\begin{align*}
    R_k(h) &= \mathbb{E}[
                            (Y - h(A,W,X))
                            (Y' - h(A',W',X'))
                            k( (A,X,Z), (A',X',Z') )
                        ]
\end{align*}
where $(A', W', X', Y', Z')$ are independent copies of the random variables $(A, W, X, Y, Z)$ and $ k: (\mathcal{A}\times\mathcal{Z}\times\mathcal{X})^2 \to \mathbb{R}$ is a continuous, bounded, and Integrally Strictly Positive Definite (ISPD) kernel. Then, if $h$ satisfies $R_k(h)=0$,  $\mathbb{E}[Y  -h(A,W,X)|A, X, Z]=0$ $\mathrm{P}_{A,X,Z}$-almost surely. Thus, if we can find a neural network $h$ that satisfies $R_k(h)=0$, we will have obtained a $\mathrm{P}_{A, X, Z}$-almost sure solution to Equation \ref{eq:integral_equation} and can compute any expected potential outcome by using Equation \ref{eq:potential_outcome}.

The empirical risk $\hat{R}_{k, n}$ given data $\mathcal{D}=\{(a_i, w_i, x_i, y_i, z_i)\}_{i=1}^N$ can be written as either a U or V-statistic, respectively \citep{Serfling2009-gk}:
\begin{align*}
    \hat{R}_{k, U, n}(h)
        &= \frac{1}{n (n - 1)} \sum_{ i, j = 1, i\neq j }^n { \left( y_i - h_i \right) \left( y_j - h_j \right) k_{ij} } \\
    \hat{R}_{k, V, n}(h)
        &= \frac{1}{n^2} \sum_{ i, j = 1 }^n { \left( y_i - h_i \right) \left( y_j - h_j \right) k_{ij} }
\end{align*}
where $h_i = h \left( a_i, w_i, x_i \right)$ and $k_{ij} = k \left( \left( a_i, z_i, x_i \right), \left( a_j, z_j, x_j \right) \right)$. $\hat{R}_{k, U, n}(h)$ is the minimum variance unbiased estimator of $R_k(h)$ \citep{Serfling2009-gk}, while $\hat{R}_{k, V, n}(h)$ is a biased estimator of $R_k(h)$. In order to prevent overfitting, we add an additional penalty to our risk function $\Lambda : \mathcal{H} \times \Theta_\mathcal{H} \to \mathbb{R}_+$, which is a function of $h$ as well as, possibly, its parameters, $\theta_h$ (e.g. network weights). Specifically, we take $\Lambda$ to be an $L^2$ penalty on the weights so $\Lambda \left[h, \theta_h \right] = \sum_i \theta_{h, i}^2$.  We then denote the penalized risk functions by $\hat{R}_{k, U, \lambda, n}(h) = \hat{R}_{k, U, n}(h) + \lambda \Lambda \left[h, \theta_h \right]$ and $\hat{R}_{k, V, \lambda, n}(h) = \hat{R}_{k, V, n}(h) + \lambda \Lambda \left[h, \theta_h \right]$, respectively.  In practice, $\hat{R}_{k, U, \lambda, n}(h)$ is slightly biased, but, in simulations, is much less biased than even the unpenalized $\hat{R}_{k, V, n}(h)$. Previous work either did not consider the U-statistic \citep{Mastouri2021-rn}, or did not utilize the U-statistic \citep{Zhang2020-qr}. In our work, we introduce two variants of our method, NMMR-U and NMMR-V,  where the former is optimized with a U-statistic and the latter a V-statistic. We train the neural networks in both variants with the regularized loss function:
\begin{align*}
    \mathcal{L} &= ( Y- h(A,W,X))^t K(Y - h(A,W,X)) + \lambda \Lambda \left[ h, \theta_h \right]
\end{align*}
where $(Y-h(A,W,X))$ is a vector of residuals from the neural network's predictions and  $K$ is a kernel matrix with entries $k_{ij}$.  We choose $k$ to be an RBF kernel (see Appendix \ref{appendix:hp_model_arch}). If $\mathcal{L}$ represents a V-statistic, we include the main diagonal elements of $K$, while if $\mathcal{L}$ represents a U-statistic, we set the main diagonal to be 0.  Once we've obtained an optimal neural network $\hat{h}$, we can compute an estimate of the expected potential outcome with data from a held-out dataset with $M$ data points
\begin{align*}
    D_{\mathcal{W}, \mathcal{X}}
        = \{ (w_i, x_i) \}_{i=1}^M,
    \quad \hat{\mathbb{E}[Y^{a}]}
        = \frac{1}{M} \sum_{i=1}^M \hat{h}(a, w_i, x_i)
\end{align*}
\looseness=-1
In contrast to PMMR \citep{Mastouri2021-rn}, which uses kernels as feature maps for proxy and treatment variables, NMMR uses adaptive feature maps from neural networks. NMMR is similar to MMR-IV \citep{Zhang2020-qr}, but MMR-IV is restricted to the instrumental variable (IV) setting rather than the proximal inference setting. Table \ref{tab:related_works} places NMMR in context with existing methods for proximal inference and IV regression.
\begin{table}[H]
\centering
\caption{Comparison of the most related methods to NMMR.}
\resizebox{.9\textwidth}{!}{%
\begin{tabular}{@{}lcccc@{}}
\toprule
\multicolumn{1}{c}{Method}   & Setting  & \# of Stages & Hypothesis Class & Optimization Objective \\ \midrule
KPV \citep{Mastouri2021-rn}  & Proximal & 2           & Kernels          & 2-stage least squares  \\
DFPV \citep{Xu2021-io}       & Proximal & 2           & Neural Networks  & 2-stage least squares  \\
MMR-IV \citep{Zhang2020-qr}  & IV       & 1           & Neural Networks  & V-statistic            \\
PMMR \citep{Mastouri2021-rn} & Proximal & 1           & Kernels          & V-statistic            \\
\textbf{NMMR-V (ours)}       & Proximal & 1           & Neural Networks  & V-statistic            \\
\textbf{NMMR-U (ours)}       & Proximal & 1           & Neural Networks  & U-statistic            \\ \bottomrule
\end{tabular}%
}
\label{tab:related_works}
\end{table}

\section{Consistency of NMMR}\label{sec:consistency}

   	In this section we provide a probabilistic bound on the distance of the estimated bridge function, $\hat{h}_{k, \lambda, n}$, from the true bridge function, $h^*$, in terms of the Radamacher complexity $\mathcal{R}_n(\mathcal{F})$ of a class of functions $\mathcal{F}$ derived from elements of the hypothesis space $\mathcal{H}$ and the fixed kernel, $k$. Note that $\hat{R}_{k, \lambda, n}(h)= \hat{R}_{k, n}(h)+ \lambda \Lambda \left[ h, \theta_h \right]$ (see Section \ref{sec:NMMR}). We use this bound to demonstrate that, under mild conditions, $\hat{h}_{k, \lambda, n}$ converges in probability to $h^*$, and that, under an additional completeness assumption, $h^*$ is unique $\mathrm{P}_{A, W, X}$-almost surely. This provides a consistent estimate of $\mathbb{E}[Y^a]$.

\begin{theorem} \label{AXZ Conv}

	Let $\tilde{h}_k$ minimize $R_k(h)$ and $\hat{h}_{k, U, \lambda, n}$ minimize $\hat{R}_{k, U, \lambda, n}(h)$ for $h \in \mathcal{H}$, $k : ( \mathcal{A} \times \mathcal{X} \times \mathcal{Z} )^2 \to [ - M_k, M_k ]$, $\Lambda : \mathcal{H} \times \Theta_h \to [ 0, M_\lambda ]$, and let $h^* : \mathcal{A} \times \mathcal{W} \times \mathcal{X} \to \mathbb{R}$ satisfy $\mathbb{E} \left[ Y - h^*(A, W, X) \middle| A, X, Z \right] = 0$ $\mathrm{P}_{A, X, Z}$-almost surely, where
\begin{align*}
	R_k(h)
		&= \mathbb{E} \left[
						\left( Y - h \left( A, W, X \right) \right)
						\left( Y' - h \left( A', W', X' \right) \right)
						k \left( \left( A, X, Z \right), \left( A', X', Z' \right) \right)
					\right] \\
	\hat{R}_{k, U, \lambda, n}(h)
		&= \frac{1}{ n (n - 1) } \sum_{i, j = 1, i \neq j}^n
				\left[
					\left( y_i - h \left( a_i, w_i, x_i \right) \right)
					\left( y_j - h \left( a_j, w_j, x_j \right) \right)
				\right. \\
				&\hspace{12em} \times
				\left.
					k \left( \left( a_i, x_i, z_i \right), \left( a_j, x_j, z_j \right) \right)
				\right]
					+ \lambda \Lambda[ h, \theta_h ]
\end{align*}
	Also let,
\begin{align*}
	d_k^2 \left( h, h' \right)
		&= \mathbb{E} \left[
						\left( h \left( A, W, X \right) - h' \left( A, W, X \right) \right)
						\left( h \left( A', W', X' \right) - h' \left( A', W', X' \right) \right)
						\right.\\
						&\hspace{6em} \times \left.
						k \left( \left( A, X, Z \right), \left( A', X', Z' \right) \right)
					\right]
\end{align*}
	Then, $d_k^2 \left( h^*, h \right) = R_k(h)$ and, with probability at least $1 - \delta$,
\begin{align*}
	d_k^2 \left( h^*, \hat{h}_{k, U, \lambda, n} \right)
		&\leq d_k^2 \left( h^*, \tilde{h}_k \right)
			+ \lambda M_\lambda
			+ 8 M \mathbb{E}_{A, X, Z} \left(
					\mathcal{R}_{n-1} \left( \mathcal{F}'_{A, X, Z} \right)
						+ \mathcal{R}_n \left( \mathcal{F}'_{A, X, Z} \right)
				\right) \\
			&\qquad+ 16 M^2 M_k \left( \frac{2}{n} \log \frac{2}{\delta} \right)^\frac{1}{2}
			+ 10 \left( 2 \log 2 \right)^\frac{1}{2} M^2 M_k n^{ - \frac{1}{2} } \\
		&\leq d_k^2 \left( h^*, \tilde{h}_k \right)
			+ \lambda M_\lambda
			+ 8 M \left(
					\mathcal{R}_{n-1} \left( \mathcal{F}' \right)
						+ \mathcal{R}_n \left( \mathcal{F}' \right)
				\right) \\
			&\qquad+ 16 M^2 M_k \left( \frac{2}{n} \log \frac{2}{\delta} \right)^\frac{1}{2}
			+ 10 \left( 2 \log 2 \right)^\frac{1}{2} M^2 M_k n^{ - \frac{1}{2} }
\end{align*}
	Further, if Assumption \ref{a:ISPD} holds, so $k$ is ISPD, then $d_k$ is a metric on $L^2_{\mathcal{AXZ}}$ and, if the right hand side of the inequality goes to zero as $n$ goes to infinity, \\
	$d_k \left(
			\mathbb{E} \left[ h^* \middle| A, X, Z \right]
				- \mathbb{E} \left[ \hat{h}_{k, \lambda, n} \middle| A, X, Z \right]
		\right)
		\xrightarrow{\mathrm{P}} 0$
	so
	$\mathbb{E} \left[ \hat{h}_{k, \lambda, n} \middle| A, X, Z \right]
		\xrightarrow{\mathrm{P}}
		\mathbb{E} \left[ h^* \middle| A, X, Z \right]$
	in $d_k$.  Also,
	$\left\|
			\mathbb{E} \left[ h^* \middle| A, X, Z \right]
				- \mathbb{E} \left[ \hat{h}_{k, \lambda, n} \middle| A, X, Z \right]
		\right\|_{ \mathrm{P}_{A, X, Z} }
		\xrightarrow{\mathrm{P}} 0$
	so
	$\mathbb{E} \left[ \hat{h}_{k, \lambda, n} \middle| A, X, Z \right]
		\xrightarrow{\mathrm{P}}
		\mathbb{E} \left[ h^* \middle| A, X, Z \right]$
	in $L^2 \left( \mathrm{P}_\mathcal{A, X, Z} \right)-norm$.
{\footnotesize
\begin{align*}
	&\mathcal{F}'_{a, x, z}
		= \left\{ f_{a, x, z} \ \middle| \
			\exists_{h \in \mathcal{H}}
			\forall_{ a' \in \mathcal{A}, x' \in \mathcal{X}, z' \in \mathcal{Z} }
				f_{a, x, z} \left( a', w', x', z' \right)
					= h \left( a', w', x' \right) k \left( \left( a', x', z' \right), \left( a, x, z \right) \right)
		\right\} \\
	&\mathcal{F}'
		= \left\{ f \ \middle| \
			\exists_{ h \in \mathcal{H}, a \in \mathcal{A}, x \in \mathcal{X}, z \in \mathcal{Z} }
			\forall_{ a' \in \mathcal{A}, x' \in \mathcal{X}, z' \in \mathcal{Z} }
				f \left( a', w', x', z' \right)
					= h \left( a', w', x' \right) k \left( \left( a', x', z' \right), \left( a, x, z \right) \right)
		\right\}
\end{align*}
}
\end{theorem}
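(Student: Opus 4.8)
The overall plan is a standard excess-risk comparison against $R_k$, preceded by the algebraic identity that makes $R_k$ play the role of a squared distance. For the identity $d_k^2(h^*,h)=R_k(h)$ I would write $Y-h(A,W,X)=\big(Y-h^*(A,W,X)\big)+\big(h^*(A,W,X)-h(A,W,X)\big)$ and the same for the independent primed copy, then expand $R_k(h)$ into four products. Conditioning on $(A,X,Z,A',X',Z')$ and using that the primed and unprimed copies are independent, every conditional expectation factorizes across the two copies, so the three products carrying a factor $Y-h^*$ or $Y'-h^*$ vanish by the defining property $\mathbb{E}[Y-h^*(A,W,X)\mid A,X,Z]=0$ $\mathrm{P}_{A,X,Z}$-a.s., leaving exactly $d_k^2(h^*,h)$; boundedness of $k$ and of the residuals (by $M_k$ and $M$) keeps every expectation finite. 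The same factorization rewrites $d_k^2(h,h')=\mathbb{E}\big[\overline\Delta(A,X,Z)\,\overline\Delta(A',X',Z')\,k(\cdot,\cdot)\big]$ with $\overline\Delta=\mathbb{E}[h-h'\mid A,X,Z]$, which I will later recognize as $\|\mu_{\overline\Delta}\|^2$ in the RKHS of $k$, $\mu_{\overline\Delta}$ being the $\overline\Delta$-weighted kernel mean embedding.

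Next, optimality of $\hat h_{k,U,\lambda,n}$ for the penalized empirical risk and $0\le\Lambda\le M_\lambda$ give $\hat R_{k,U,n}(\hat h_{k,U,\lambda,n})\le\hat R_{k,U,n}(\tilde h_k)+\lambda M_\lambda$; adding and subtracting $R_k$ and invoking the identity yields
\begin{align*}
d_k^2\big(h^*,\hat h_{k,U,\lambda,n}\big)&\le d_k^2\big(h^*,\tilde h_k\big)+\lambda M_\lambda\\
&\quad+\big[\hat R_{k,U,n}(\tilde h_k)-R_k(\tilde h_k)\big]+\sup_{h\in\mathcal{H}}\big[R_k(h)-\hat R_{k,U,n}(h)\big].
\end{align*}
Here $d_k^2(h^*,\tilde h_k)$ is the approximation error (zero when $\mathcal{H}$ is rich enough in $d_k$), $\lambda M_\lambda$ is the penalty slack, the third bracket is a centered bounded U-statistic handled by Hoeffding's inequality for U-statistics, and the fourth is the uniform deviation of a U-process. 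A McDiarmid step on the latter (each of the $n$ samples enters $2(n-1)$ summands of size $\le M^2M_k$), union-bounded with the U-statistic tail, produces the $16M^2M_k(2n^{-1}\log(2/\delta))^{1/2}$ term; what remains is to bound $\mathbb{E}\sup_h[R_k(h)-\hat R_{k,U,n}(h)]$.

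That last bound is where I expect the real obstacle, because the U-statistic summand is bilinear in $h$ (a product $(y_i-h_i)(y_j-h_j)$), so one cannot symmetrize or contract directly. The device is the leave-one-out representation $\hat R_{k,U,n}(h)=\tfrac1n\sum_{i}(y_i-h_i)\,\hat g_h^{(i)}$ with $\hat g_h^{(i)}=\tfrac1{n-1}\sum_{j\ne i}(y_j-h_j)k_{ij}$, paired with $R_k(h)=\mathbb{E}_\zeta\big[(Y-h(A,W,X))\,g_h(A,X,Z)\big]$ where $g_h(a,x,z)=\mathbb{E}_{\zeta'}\big[(Y'-h(A',W',X'))\,k((a,x,z),(A',X',Z'))\big]$ for an independent copy $\zeta'$. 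Splitting $\hat R_{k,U,n}(h)-R_k(h)$ into an inner error $\tfrac1n\sum_i(y_i-h_i)\big[\hat g_h^{(i)}-g_h(A_i,X_i,Z_i)\big]$ and an outer empirical process $\tfrac1n\sum_i(y_i-h_i)g_h(A_i,X_i,Z_i)-\mathbb{E}[(Y-h)g_h]$: in the inner error one factor is held fixed, and for a fixed conditioning point the difference $\hat g_h^{(-i)}(a,x,z)-g_h(a,x,z)$ is, up to the $h$-free piece $\mathbb{E}[Y'k((a,x,z),\cdot)]$, a centered average over the remaining $n-1$ samples of a member of $\mathcal{F}'_{a,x,z}$, so symmetrization gives $\mathcal{R}_{n-1}(\mathcal{F}'_{a,x,z})$, and since $(a,x,z)=(A_i,X_i,Z_i)$ is itself a sample, taking expectations gives $\mathbb{E}_{A,X,Z}\mathcal{R}_{n-1}(\mathcal{F}'_{A,X,Z})$; the outer process, treated symmetrically by writing $g_h$ as a population mixture over the $\mathcal{F}'_{a,x,z}$ classes and symmetrizing the length-$n$ sum, contributes $\mathbb{E}_{A,X,Z}\mathcal{R}_{n}(\mathcal{F}'_{A,X,Z})$, with the $h$-free residuals producing the $10(2\log2)^{1/2}M^2M_k n^{-1/2}$ term via a maximal inequality for bounded centered averages. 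The coarser second inequality is immediate from $\mathcal{F}'_{a,x,z}\subseteq\mathcal{F}'$ and monotonicity of Rademacher complexity. Most of the care goes into tracking the two sum lengths $n$ and $n-1$ and the attendant constants.

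Finally, under Assumption \ref{a:ISPD} the map $f\mapsto\mu_f$ is injective on $L^2_{\mathcal{AXZ}}$, so $d_k(f):=\|\mu_f\|$ is a genuine metric there and, by the first step, $d_k^2(h^*,\hat h)=d_k\big(\mathbb{E}[h^*\mid A,X,Z]-\mathbb{E}[\hat h\mid A,X,Z]\big)^2$; hence if the right-hand side of the bound tends to $0$ the inequality immediately gives convergence in probability of $\mathbb{E}[\hat h_{k,\lambda,n}\mid A,X,Z]$ to $\mathbb{E}[h^*\mid A,X,Z]$ in $d_k$. The $L^2(\mathrm{P}_{A,X,Z})$-norm convergence is a further subtlety, since $d_k$ induces a strictly weaker topology in general: I would use $d_k(f)\le\sqrt{M_k}\,\|f\|_{L^2}$ (Cauchy--Schwarz on $\langle f,\mu_f\rangle_{L^2}$ with $|k|\le M_k$) together with precompactness in $L^2$ of the relevant conditional-expectation class -- a mild condition on $\mathcal{H}$, implicitly needed anyway for $\mathcal{R}_n(\mathcal{F}')\to0$ -- so that along any subsequence an $L^2$-convergent sub-subsequence exists whose limit, by injectivity of $\mu$, must coincide with the $d_k$-limit, forcing full convergence in $L^2$-norm.
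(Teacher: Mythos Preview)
Your proposal is correct and follows essentially the same route as the paper. The identity $d_k^2(h^*,h)=R_k(h)$ is proved exactly as you describe; the excess-risk chain is the paper's Lemma~3/Corollary~4 (your inner/outer split is precisely their $\phi^{(1)},\phi^{(2)}$ decomposition, which produces the $\mathcal{R}_{n-1}$ and $\mathcal{R}_n$ terms); and the passage from the raw function class to $\mathcal{F}'_{A,X,Z}$ is their Lemma~5, where the four-way expansion of $(y-h)(y'-h')k$ yields two $2M\,\mathcal{R}_m(\mathcal{F}'_{A,X,Z})$ terms and one Massart term---this last is your ``$h$-free residual'' contribution, and the constant $10$ arises exactly from $4\big((n/(n-1))^{1/2}+1\big)\le 10$.

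Two small differences are worth noting. First, the paper does not isolate $\hat R_{k,U,n}(\tilde h_k)-R_k(\tilde h_k)$ and hit it with a separate Hoeffding bound; instead it applies the two-sided uniform deviation at both $\hat h$ and $\tilde h_k$, which is how the factor $16=2\times 8$ appears. Your variant is equally valid. Second, on the final $L^2(\mathrm{P}_{A,X,Z})$ convergence you are in fact \emph{more} careful than the paper: the paper simply asserts that ISPD implies $\|\cdot\|_{L^2}$-convergence from $d_k$-convergence, whereas you correctly observe that $d_k$ is a priori weaker and supply a subsequence/precompactness argument to close the gap.
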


    Corollary \ref{V Stat} provides a similar result for V-statistic estimators of $R(h)$, meaning we can choose to use either U or V-Statistics and have similar guarantees.  In Theorem \ref{AXZ Conv} if the quadratic form converges at a particular rate, say $n^{ - \frac{1}{2} }$, $\mathbb{E} \left[ \hat{h} \middle| A, X, Z \right] \xrightarrow{\mathrm{P}} \mathbb{E} \left[ h^* \middle| A, X, Z \right]$, under the metric induced by the kernel, $d_k$, at half the rate, in this case $n^{ - \frac{1}{4} }$. This is similar to \citet{Kallus2021-hi}'s findings in the unstabilized case.

\begin{theorem} \label{AWX Conv}

	Under Assumption \ref{a:comp W}, $h^*$ is the unique solution to the integral equation $\mathrm{P}_{A, W, X}$-almost surely.
	Further, if
	$\mathbb{E} \left[ \hat{h}_{k, \lambda, n} \middle| A, X, Z \right]
		\xrightarrow{\mathrm{P}}
		\mathbb{E} \left[ h^* \middle| A, X, Z \right]$,
	$\hat{h}_n \xrightarrow{\mathrm{P}} h^*$.
	
\end{theorem}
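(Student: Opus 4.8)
The plan is to treat the two assertions separately. For the uniqueness claim I would first put the integral equation~(\ref{eq:integral_equation}) in the form of a conditional moment restriction: since $\int_{\mathcal W} h(a,w,x)\,p(w\mid a,x,z)\,dw = \mathbb{E}[h(A,W,X)\mid A=a,X=x,Z=z]$, Equation~(\ref{eq:integral_equation}) says exactly that $\mathbb{E}[Y - h(A,W,X)\mid A,X,Z] = 0$ $\mathrm{P}_{A,X,Z}$-almost surely. If $h^*$ and $h'$ are both solutions, subtracting the two restrictions gives $\mathbb{E}[(h^* - h')(A,W,X)\mid A,X,Z] = 0$ $\mathrm{P}_{A,X,Z}$-almost surely, so applying Assumption~\ref{a:comp W} with $f = h^* - h'$ yields $h^* - h' = 0$ $\mathrm{P}_{A,W,X}$-almost surely, which is the first claim.

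For the convergence claim, let $T\colon L^2(\mathrm{P}_{A,W,X}) \to L^2(\mathrm{P}_{A,X,Z})$ be the conditional-expectation operator $(Tf)(a,x,z) = \mathbb{E}[f(A,W,X)\mid A=a,X=x,Z=z]$, which is linear and a contraction. The hypothesis gives $\| T\hat{h}_n - Th^* \|_{\mathrm{P}_{A,X,Z}} = \| \mathbb{E}[\hat{h}_n\mid A,X,Z] - \mathbb{E}[h^*\mid A,X,Z] \|_{\mathrm{P}_{A,X,Z}} \xrightarrow{\mathrm{P}} 0$, and Assumption~\ref{a:comp W} makes $T$ injective. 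The difficulty is that~(\ref{eq:integral_equation}) is a Fredholm equation of the first kind, so $T^{-1}$ is generically unbounded and injectivity alone does not let us pass from $T\hat{h}_n - Th^* \to 0$ to $\hat{h}_n - h^* \to 0$. I would close the gap with compactness of the hypothesis class: under the standing conditions on $\mathcal{H}$ (a fixed network architecture, weights in a compact set, continuous parametrization, and realizability $h^* \in \overline{\mathcal{H}}$), the closure $\overline{\mathcal{H}}$ is compact in $L^2(\mathrm{P}_{A,W,X})$, so $T$ restricted to $\overline{\mathcal{H}}$ is a continuous injection from a compact metric space, hence a homeomorphism onto its (compact) image; consequently $T^{-1}$ is uniformly continuous on $T(\overline{\mathcal{H}})$, i.e.\ for every $\varepsilon > 0$ there is $\delta > 0$ such that $h,h'\in\overline{\mathcal{H}}$ with $\| Th - Th' \|_{\mathrm{P}_{A,X,Z}} < \delta$ satisfy $\| h - h' \|_{\mathrm{P}_{A,W,X}} < \varepsilon$.

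With this modulus of continuity the conclusion is immediate: given $\varepsilon > 0$, take the corresponding $\delta$ and bound
\[
  \Pr\!\left( \| \hat{h}_n - h^* \|_{\mathrm{P}_{A,W,X}} \ge \varepsilon \right)
    \le \Pr\!\left( \| \mathbb{E}[\hat{h}_n\mid A,X,Z] - \mathbb{E}[h^*\mid A,X,Z] \|_{\mathrm{P}_{A,X,Z}} \ge \delta \right) \longrightarrow 0 ,
\]
so $\hat{h}_n \xrightarrow{\mathrm{P}} h^*$ in $L^2(\mathrm{P}_{A,W,X})$. Combining this with~(\ref{eq:potential_outcome}) and Jensen's inequality, plus an overlap condition relating $\mathrm{P}_{W,X}$ to $\mathrm{P}_{W,X\mid A=a}$, upgrades the statement to $\hat{\mathbb{E}}[Y^a] \xrightarrow{\mathrm{P}} \mathbb{E}[Y^a]$, as announced at the end of Section~\ref{sec:consistency}.

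The main obstacle is precisely this second step: injectivity of the conditional-expectation operator coming from Assumption~\ref{a:comp W} is not enough because the inverse problem is ill-posed, so the argument must lean on a compactness (or source-condition) assumption on the class over which $\hat{h}_n$ ranges, and one has to be explicit about which hypothesis on $\mathcal{H}$ supplies the equicontinuity of $T^{-1}$. An essentially equivalent alternative is a subsequence argument --- every subsequence of $\hat{h}_n$ has a further $L^2(\mathrm{P}_{A,W,X})$-convergent subsequence whose limit must equal $h^*$ by the uniqueness part --- but extracting that subsequence again requires compactness. Everything else is routine bookkeeping with conditional expectations and the $\varepsilon$-$\delta$ definition of convergence in probability.
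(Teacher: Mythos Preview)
Your uniqueness argument is essentially identical to the paper's: both subtract two solutions, observe that the difference has zero conditional expectation given $(A,X,Z)$ $\mathrm{P}_{A,X,Z}$-almost surely, and invoke Assumption~\ref{a:comp W}.

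For the convergence claim, your treatment is in fact more careful than the paper's. The paper's proof simply records that $\|\mathbb{E}[(\hat h_n - h^*)(A,W,X)\mid A,X,Z]\|_{\mathrm{P}_{A,X,Z}} \xrightarrow{\mathrm{P}} 0$ and then asserts, ``by assumption,'' that $\|\hat h_n - h^*\|_{\mathrm{P}_{A,W,X}} \xrightarrow{\mathrm{P}} 0$ --- i.e.\ it applies Assumption~\ref{a:comp W} as though injectivity of the conditional-expectation operator $T$ automatically yielded continuity of $T^{-1}$. You correctly identify this as the ill-posedness step (a point the paper itself raises in the discussion after Theorem~\ref{AWX Conv} via the measure $\tau$, but does not revisit in the proof), and you close it by restricting $T$ to a compact hypothesis class so that it becomes a homeomorphism onto its image and $T^{-1}$ is uniformly continuous there. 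This is a valid and standard way to make the passage rigorous; the price is the extra hypotheses of compactness of $\overline{\mathcal H}$ and realizability $h^*\in\overline{\mathcal H}$, which the paper does not state in the theorem but which are compatible with the boundedness conditions already imposed in Theorem~\ref{AXZ Conv}. In short: same destination, but you supply justification for a step the paper leaves implicit.
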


See Appendix \ref{appendix:proofs} for proofs of Theorems \ref{AXZ Conv} and \ref{AWX Conv}. Taken together, these results tell us that, as long as our optimization algorithm is successful in estimating $\hat{h}_{k, \lambda, n}$, it will asymptotically approach the true bridge function, $h^*$.  In order for this to occur, the right hand side of the inequalities in Theorem \ref{AXZ Conv} must go to zero, which requires not only that the Rademacher terms vanish, but also that $\tilde{h}_k$ must approach $h^*$ arbitrarily closely as $n$ increases.  In practice, this means increasing the complexity of the neural network, but doing so slowly enough the Rademacher complexity terms still decrease with sample size.  Following \citet{Xu2021-io}, we note that recent results from \citet{Neyshabur2018-on} suggest that the Rademacher complexity of a fixed network scales like $n^{ - \frac{1}{2} }$ (similar to many other popular hypothesis classes) and that, although we cannot compute the scaling of the Rademacher terms directly due to the presence of the kernel function, we expect that they will decline with sample size and that, as the neural network becomes more complex, their scaling will more closely resemble terms derived from a pure neural network.  Finally, we require that the regularization parameter decrease as sample size grows, which will, again, depend on the balance between increasing sample size, which tends to decrease the need for regularization, and increasing complexity, which tends to increase its importance.  Thus, by choosing an appropriate growth rate for the network complexity, we expect the aforementioned terms to vanish as $n$ increases to infinity, and, with them, the entire right hand side, making $\hat{h}_{k, \lambda, n}$ a consistent (likely $\sqrt{n}$) estimator of $h^*$.

We can also compare the convergence of the estimated bridge function to that of its projection onto $L^2_\mathcal{AXZ}$.  Prior literature has focused on a measure of ``ill-posedness'' $\tau = \sup_{h \in \mathcal{H}} \| h - h^* \|_2 \left\| \mathrm{E} \left[ h - h^* \middle| A, Z, X \right] \right\|_2^{-1}$.  If $\tau$ is finite, then the rate of convergence of the estimated bridge function will be at worst $\tau$ times that of its projection (it will be slower by a factor of $\tau$).  This will be the case whether we measure convergence using $\left\| \mathrm{E} \left[ h - h^* \middle| A, Z, X \right] \right\|_2$ or the metric induced by $k$.

\section{Experiments}\label{sec:experiments}
\subsection{Overview of Baseline Models}
\looseness=-1
We compare the performance of NMMR-U and NMMR-V to that of several previous approaches, which we describe briefly here. The baselines can be divided into two categories: structural and naive. Structural approaches leverage causal information about the data generating process. They include Kernel Proxy Variables (KPV) \citep{Mastouri2021-rn}, Proximal
Maximum Moment Restriction (PMMR) \citep{Mastouri2021-rn}, Deep Feature Proxy Variables (DFPV) \citep{xu2020learning}, Causal Effect Variational Autoencoder\citep{louizos2017causal} (CEVAE), and the two-stage least squares model (2SLS) from \citet{Miao2018-tr}. For a review of KPV, PMMR, and DFPV, see Section \ref{sec:related_work}. CEVAE is an autoencoder approach derived by \citet{Xu2021-io} from \citet{louizos2017causal}. 2SLS is a two-stage least squares method which assumes that the bridge function $h$ is linear \citep{Tchetgen_Tchetgen2020-nw}. 

\looseness=-1
The naive approaches serve as baselines and do not use causal information, instead directly regressing $A$ and $W$ on the outcome $Y$. These methods include a naive neural network (Naive net), ordinary least squares regression (LS), and ordinary least squares with quadratic features (LS-QF). Naive Net is a neural network that has undergone the same architecture search as NMMR (described further in Appendix \ref{appendix:hp_model_arch}) that is trained to predict $Y$ directly from $A$ and $W$ by minimizing observational MSE, $\frac{1}{n} \sum_{i=1}^n (y - \hat{y})^2$. Least Squares (LS) is the standard linear regression model that predicts $Y$ using a linear combination of $A$ and $W$. Least Squares with Quadratic Features (LS-QF) is the same as LS but with additional quadratic terms $A^2, W^2, AW$. 

\looseness=-1
We evaluate NMMR-U, NMMR-V and baseline methods on two synthetic benchmark tasks from \citet{Xu2021-io}. The first is a simulation of how ticket prices affect the number of tickets sold in the presence of a latent confounder: demand for travel (the Demand experiment). The second is an experiment where the goal is to recover a property of an image that is influenced by an unobserved confounder (the dSprite experiment). The Demand experiment is a low-dimensional estimation problem, whereas dSprite is high-dimensional as $A$ and $W$ are 64x64=4096-dimensional. dSprite leverages image-specific models, which are rarely used in the causal inference literature. Neither task uses $X$. For the Demand experiment we evaluate all the methods mentioned above, whereas for the dSprite experiment, we omit 2SLS, LS, and LS-QF because of their lack of scalability to high-dimensional settings.

\looseness=-1
Experiments were conducted in PyTorch 1.9.0 (Python 3.9.7), using an A100 40GB or TitanX 12GB GPU and CUDA version 11.2. They can be run in minutes for simpler models (LS, LS-QF, 2SLS) and in several hours for the larger experiments and more complex models (DFPV, NMMR). The code to reproduce our experiments can be accessed on GitHub.\footnote{https://github.com/beamlab-hsph/Neural-Moment-Matching-Regression}



\subsection{Demand Experiment}
\begin{figure}[htbp]
    \centering
    \includegraphics[width=\textwidth]{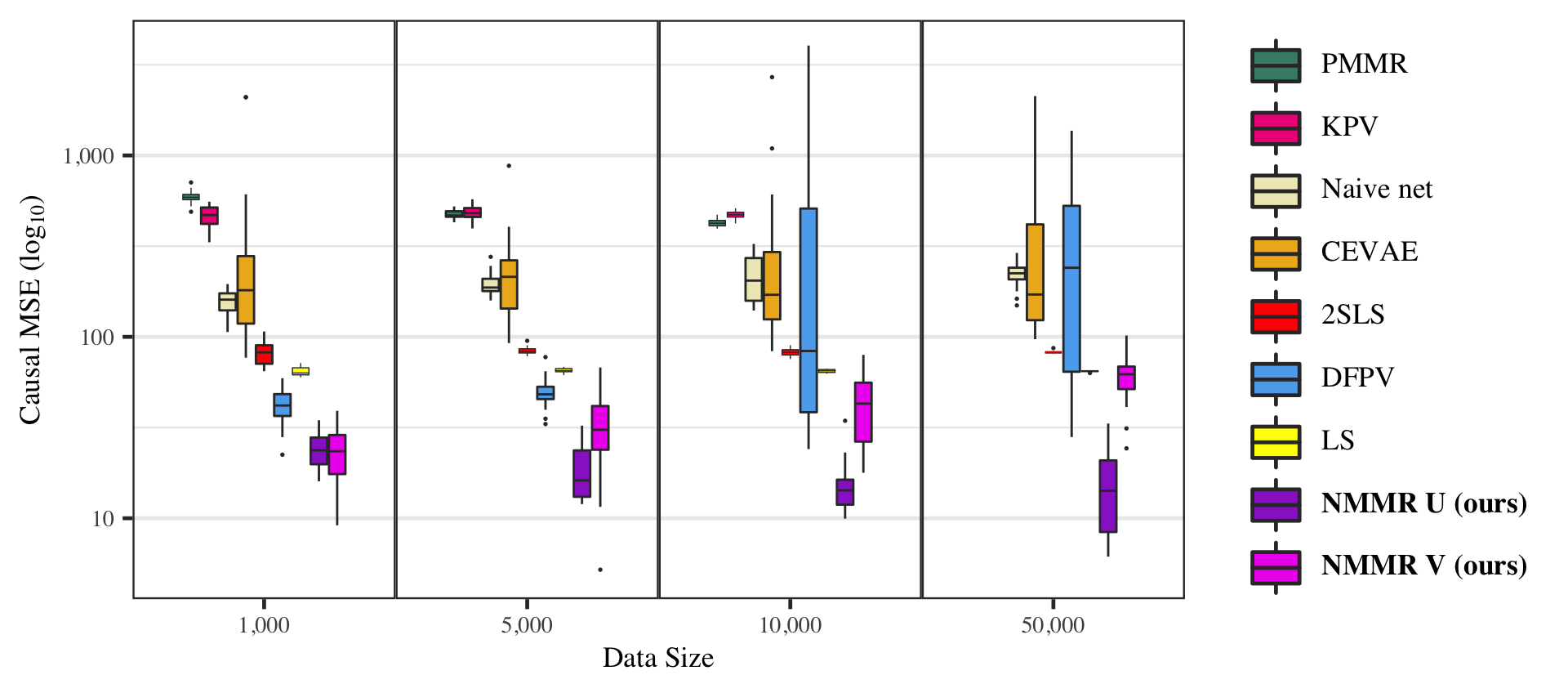}
    \caption{\textbf{NMMR-U and NMMR-V achieve state of the art performance across all sample sizes}. Causal MSE (c-MSE) of NMMR and baseline methods in the Demand experiment. Each method was replicated 20 times and evaluated on the same 10 test values of $\mathbb{E}[Y^a]$ each replicate. Each individual box plot represents 20 values of c-MSE. See Table \ref{tab:demand_table} for the statistics of each boxplots}
    \label{fig:demand_boxplot}
\end{figure}

\citet{Hartford2017-wm} introduced a data generating process for studying instrumental variable regression, and \citet{Xu2021-io} adapted it to the proximal setting. The goal is to estimate the effect of airline ticket price $A$ on sales $Y$, which is confounded by demand $U$ (e.g. seasonal fluctuations). We use the cost of fuel, $Z = (Z_1, Z_2)$, as a treatment-inducing proxy and number of views at a ticket reservation website, $W$, as an outcome-inducing proxy (Figure \ref{fig:demand_dag}). Additional simulation details and the structural equations underlying the causal DAG can be found in Appendix \ref{appendix:demand_data_gen}.

Each method was trained on simulated datasets with sample sizes of 1000, 5000, 10,000, and 50,000. To assess the performance of each method, we evaluated $a$ at 10 equally-spaced intervals between 10 and 30. We compared each method's estimated potential outcomes, $\hat{E}[Y^a]$, against estimates of the truth, $E[Y^a]$, obtained from Monte Carlo simulations (10,000 replicates) of the data generating process for each $a$. The evaluation metric is the causal mean squared error (c-MSE) across the 10 evaluation points of $a$: $\frac{1}{10} \sum_{i=1}^{10} (\mathbb{E}[Y^{a_i}] - \hat{\mathbb{E}}[Y^{a_i}])^2$. For MMR-based methods, predictions are computed using a heldout dataset, $\mathcal{D}_{\mathcal{W}}$ with 1,000 draws from $W$ so $\hat{\mathbb{E}}[Y^{a_i}]=|D_{\mathcal{W}}|^{-1}\sum_{j}^{|D_{\mathcal{W}}|} \hat{h}(a_i, w_j)$, i.e. a sample average of the estimated bridge function over $W$. We performed 20 replicates for each method on each sample size, where a single replicate yields one c-MSE value. Figure \ref{fig:demand_boxplot} summarizes the c-MSE distribution for each method across the four sample sizes. NMMR-U has the lowest c-MSE across all sample sizes, with NMMR-V a close second. DFPV encounters difficulties with the larger sample sizes of 10,000 and 50,000, potentially due to convergence issues with its feature maps. Similarly, PMMR and KPV could not scale to $n=50,000$.

For a more in-depth view of the potential outcome curve estimated by each method, we provide replicate-wise potential outcome prediction curves for each of the 4 sample sizes in Figures \ref{fig:demand_predcurve1k}-\ref{fig:demand_predcurve50k}. Least Squares estimates relatively unbiased prediction curves due to the nature of the data generating process and has very low variance. LS-QF matches some of the curvature, although its c-MSE distribution (not shown) is not better than LS. Kernel-based methods, KPV and PMMR, are highly biased. DFPV is less biased, but still suffers from a lack of flexibility. Both NMMR variants demonstrate the benefit of added flexibility and have lower variance, resulting in a lower c-MSE.

Finally, we also varied the variance of the Gaussian noise terms in the structural equations for $Z$ and $W$, in order to examine how each method performs with varying quality proxies for $U$ (see Appendix \ref{appendix:noise}). Figure \ref{fig:demand_noise_boxplot}, shows that NMMR-V is more robust to proxy noise than NMMR-U. This could be because U-statistics yield unbiased, but higher variance, estimators than V-statistics, so, when proxies are less reliable, the estimated risk function $R_k(h)$ is less stable. Kernel-based methods (KPV and PMMR) perform increasingly well with noisier proxies, which is likely related to the fact that they are less data-adaptive. Figures \ref{fig:demand_noise_predcurve_cevae} through \ref{fig:demand_noise_predcurve_nmmr_v} show replication-wise prediction curves across all 72 noise levels, with one grid plot per method.

\subsection{dSprite Experiment}
\begin{figure}[htbp]
    \centering
    \includegraphics[width=\textwidth]{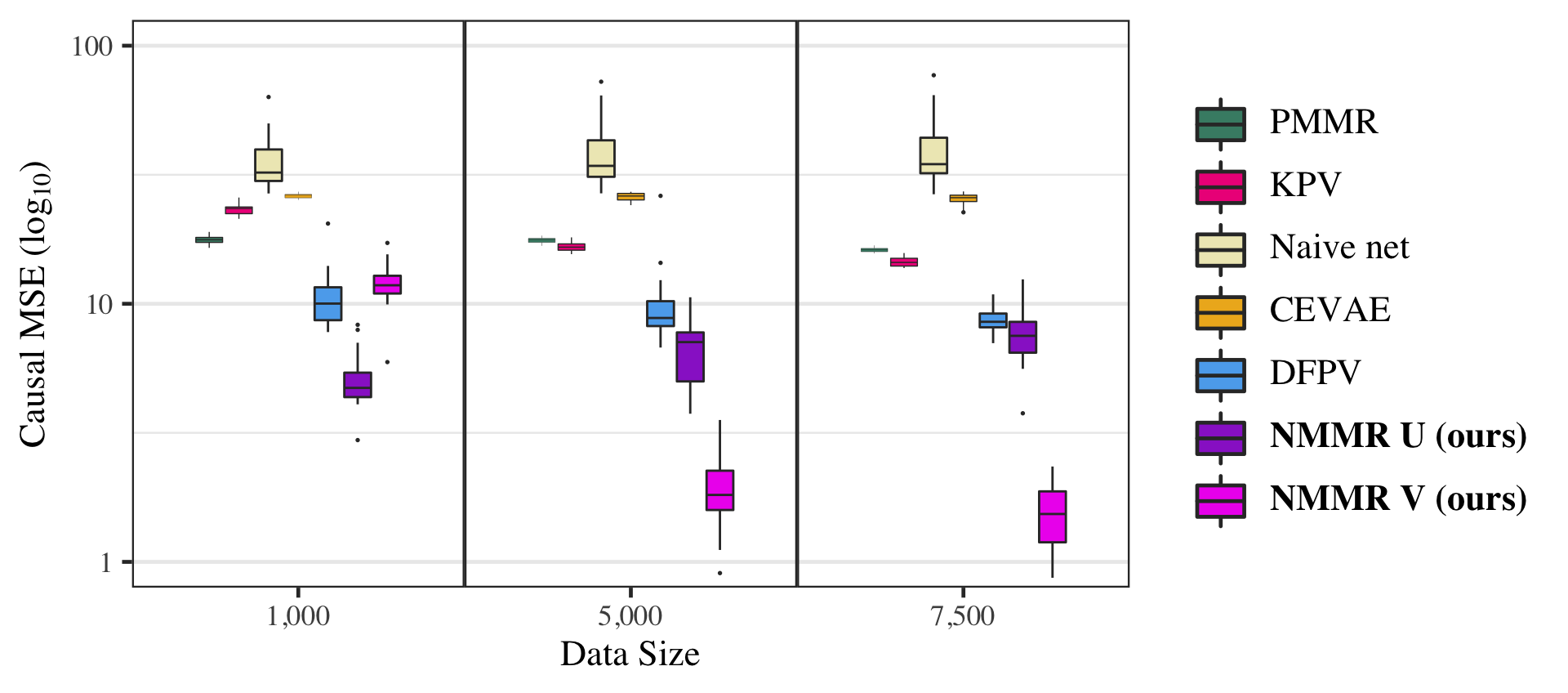}
    \caption{Causal MSE (c-MSE) of NMMR and baseline methods in the dSprite experiment. Each method was replicated 20 times and evaluated on the same 588 test images $A$ each replicate. Each individual box plot represents 20 values of c-MSE. See Table \ref{tab:dsprite_table} for the statistics of each boxplots}
    \label{fig:dsprite_boxplot}
\end{figure}
\looseness=-1
The second benchmark uses the dSprite dataset from \citet{dsprites17}, which was initially adapted to instrumental variable regression in \citet{xu2020learning}, and repurposed for proximal inference in \citet{Xu2021-io}. This image dataset consists of 2D shapes procedurally generated from 6 independent parameters: color, shape, scale, rotation, posX, and posY. All possible combinations of these parameters are present exactly once, generating 737,280 total images. In this experiment, we fix shape = heart, color = white, resulting in 245,760 images, each of which contains 64x64=4096 pixels. The causal DAG for this problem is shown in Figure \ref{fig:dsprite_dag}. The structural equations and detailed data generating mechanism underlying the causal DAG can be found in Appendix \ref{section:dsprite_datagen}.

\looseness=-1
In the DAG, $Fig(\cdot)$ represents the act of retrieving the image from the dSprite dataset with the given arguments. $A$ and $W$ are vectors representing noised images of a heart shape, where the heart has a size (\emph{scale}), orientation (\emph{rotation}), horizontal position (\emph{posX}), and vertical position (\emph{posY}). For an exemplar image $A$ and $W$, see Figure \ref{fig:dsprite_examples}. The benchmark computes
\begin{align*}
    \mathbb{E}[Y^a] &= \frac{ \frac{1}{10} \left\| vec(a)^t B \right\|_2^2 - 5000}{1000}
\end{align*}
where $B$ is a $4096\times 10$ matrix of $\mathcal{U}(0, 1)$ weights from \citet{Xu2021-io}. The observed outcome is computed as
\begin{align*}
    Y &= \frac{ \frac{1}{10} \left\| vec(A)^t B \right\|_2^2 - 5000}{1000}
                \times \frac{(31 \times U - 15.5)^2}{85.25}
            + \epsilon,
            \quad \epsilon \sim \mathcal{N}(0, 0.5)
\end{align*}
$U$ is a discrete uniform random variable with
\begin{align*}
    \mathbb{E} \left[ \frac{(31 \times U - 15.5)^2}{85.25} \right] = 1
\end{align*}
that dictates the vertical position of the shape in $A$, as well as the value of $Y$, making $U$ a confounder of $A, Y$. We hypothesized that a convolutional neural network would be exceptionally strong at recovering this information about $U$ from the images $A$ and $W$.



\looseness=-1

\looseness=-1
Similar to the Demand experiment, we trained each method on simulated datasets with sizes 1,000, 5,000, and 7,500, followed by an evaluation on the same test set as \citet{Xu2021-io}. This test set contains 588 images $A$ that span the range of scale, rotation, posX and posY values (see Appendix \ref{appendix:dsprite_test_set}) and the 588 corresponding values of $\mathbb{E}[Y^a]$. The evaluation metric is again c-MSE:
\begin{align*}
    \frac{1}{588} \sum_{i=1}^{588}
        \left(
                \mathbb{E}[Y^{a_i}]
                - \hat{\mathbb{E}}[Y^{a_i}]
        \right)^2
\end{align*}
We performed 20 replicates for each method on each sample size. Figure \ref{fig:dsprite_boxplot} shows that NMMR-U or NMMR-V is consistently lowest in c-MSE, with NMMR-V showing substantial improvement with increasing sample size. Due to the high dimensionality of the images $A$ and $W$, we could not evaluate Least Squares, LS-QF or 2SLS on this experiment. KPV and PMMR do not improve much with increasing sample size. The Naive net, which uses the same underlying convolutional neural network architecture as NMMR but is trained using observational MSE, performs second-to-worst, with a much larger c-MSE than NMMR-U or NMMR-V. This reinforces the need to use causal knowledge in scenarios where it is available. 

\section{Conclusion} \label{sec:conclusion}
\looseness=-1
In this work we have presented a novel method to estimate potential outcomes in the presence of unmeasured confounding using deep neural networks. Though our method is promising, it has several limitations. For very high dimensional data, calculating the kernel matrix $K$ in the loss function can be computationally intensive (see Appendix \ref{sec:batched_loss}). Additionally, mapping real world scenarios to DAGs that satisfy Assumption 1 is non-trivial and technically unverifiable (e.g. we cannot be truly sure that $W$ has no impact on $A$), though unverifiable assumptions are inherent to causal inference.

Further, the present work focuses on methods that estimate only the outcome bridge function, rather than also estimating the IPW bridge function, which would permit us to construct a doubly robust estimator, as is done in \citet{Cui2020-di} and \citet{Kallus2021-hi}.  However, our method extends naturally to this setting and we expect to explore such estimators in future work.

\looseness=-1
In summary, we provide a new single stage estimator and show how it can be trained on a U-statistic based loss in addition to existing approaches based on V-statistics. We further prove theoretical convergence properties of our method. On established proximal inference benchmarks, our method achieves state of the art performance in estimating causal quantities. Finally, since our approach is a single-stage neural network, it potentially unlocks new domains for causal inference where deep learning has had success, such as imaging.

\bibliography{neurips.bib}
\section*{Checklist}


\begin{enumerate}

\item For all authors...
\begin{enumerate}
  \item Do the main claims made in the abstract and introduction accurately reflect the paper's contributions and scope?
    \answerYes{The claims made in the abstract and introduction are reflected in Sections \ref{sec:NMMR}, \ref{sec:consistency}, and \ref{sec:experiments} }
  \item Did you describe the limitations of your work?
    \answerYes{See Section \ref{sec:conclusion}}
  \item Did you discuss any potential negative societal impacts of your work?
    \answerYes{See Section \ref{sec:introduction} where we discuss potential medical impacts.}
  \item Have you read the ethics review guidelines and ensured that your paper conforms to them?
    \answerYes{}
\end{enumerate}

\item If you are including theoretical results...
\begin{enumerate}
  \item Did you state the full set of assumptions of all theoretical results?
    \answerYes{See Section \ref{sec:consistency} and Appendix \ref{appendix:proofs}}
        \item Did you include complete proofs of all theoretical results?
    \answerYes{See Appendix \ref{appendix:proofs}}
\end{enumerate}

\item If you ran experiments...
\begin{enumerate}
  \item Did you include the code, data, and instructions needed to reproduce the main experimental results (either in the supplemental material or as a URL)?
    \answerYes{Available here \url{https://github.com/beamlab-hsph/Neural-Moment-Matching-Regression}}
  \item Did you specify all the training details (e.g., data splits, hyperparameters, how they were chosen)?
    \answerYes{See Appendix \ref{appendix:hp_model_arch} and \ref{appendix:experiments} as well as our code}
        \item Did you report error bars (e.g., with respect to the random seed after running experiments multiple times)?
    \answerYes{We reported results across 20 random seeds in all our Figures and Tables.}
        \item Did you include the total amount of compute and the type of resources used (e.g., type of GPUs, internal cluster, or cloud provider)?
    \answerYes{See details in Section \ref{sec:experiments}}
\end{enumerate}

\item If you are using existing assets (e.g., code, data, models) or curating/releasing new assets...
\begin{enumerate}
  \item If your work uses existing assets, did you cite the creators?
    \answerYes{We described the fork of assets from \citet{Xu2021-io} in Section \ref{sec:experiments}}
  \item Did you mention the license of the assets?
    \answerYes{See \ref{sec:experiments}, MIT license}
  \item Did you include any new assets either in the supplemental material or as a URL?
    \answerYes{Available here \url{https://github.com/beamlab-hsph/Neural-Moment-Matching-Regression}}
  \item Did you discuss whether and how consent was obtained from people whose data you're using/curating?
    \answerNA{Synthetic data}
  \item Did you discuss whether the data you are using/curating contains personally identifiable information or offensive content?
    \answerNA{Synthetic data}
\end{enumerate}

\item If you used crowdsourcing or conducted research with human subjects...
\begin{enumerate}
  \item Did you include the full text of instructions given to participants and screenshots, if applicable?
    \answerNA{}
  \item Did you describe any potential participant risks, with links to Institutional Review Board (IRB) approvals, if applicable?
    \answerNA{}
  \item Did you include the estimated hourly wage paid to participants and the total amount spent on participant compensation?
    \answerNA{}
\end{enumerate}

\end{enumerate}


\newpage
\appendix
\renewcommand{\thefigure}{S\arabic{figure}}
\setcounter{figure}{0}
\renewcommand{\thetable}{S\arabic{table}}
\setcounter{table}{0}

\section{Proofs} \label{appendix:proofs}

\begin{lemma} \label{Exp to Emp}

	Let $X$ be a random variable taking values in $\mathcal{X}$ and let $\mathcal{F}$ be a family of measurable functions with
	$f \in \mathcal{F}: \mathcal{X}^2 \to [ - M, M ]$,
	with each $f \in \mathcal{F}$ optionally, and possibly not uniquely, (partially) parameterized by $\theta_f \in \Theta_\mathcal{F}$,
	$\Lambda : \mathcal{F} \times \Theta_\mathcal{F} \to \left[ 0, M_\Lambda \right]$,
	and
	$\lambda \geq 0$.
	Then, for any $\delta > 0$, with probability at least $1 - \delta$, any IID sample $S = \left\{ x_i \right\}_{i = 1}^n$ drawn from $\mathrm{P}_X$ satisfies
	
	\begin{align*}
		\mathbb{E} f
			= \mathbb{E}_{X, X'} f \left( X, X' \right)
				&\leq \frac{1}{ n (n - 1) } \sum_{i, j = 1, i \neq j}^n f \left( x_i, x_j \right)
					+ \lambda \Lambda \left[ f, \theta_f \right]
					 \\
				&\qquad+ 2 \left(
							\mathcal{R}_{n-1} \left( \mathcal{F}_1 \right)
								+ \mathcal{R}_n \left( \mathcal{F}_2 \right)
							\right)
					+ 2 M \left( \frac{2}{n} \log \frac{1}{\delta} \right)^\frac{1}{2}
			\\
			\frac{1}{ n (n - 1) } \sum_{i, j = 1, i \neq j}^n f \left( x_i, x_j \right)
					+ \lambda \Lambda \left[ f, \theta_f \right]
				&\leq \mathbb{E} f
					+ \lambda M_\Lambda
					+ 2 \left(
							\mathcal{R}_{n-1} \left( \mathcal{F}_1 \right)
								+ \mathcal{R}_n \left( \mathcal{F}_2 \right)
					\right) \\
					&\quad+ 2 M \left( \frac{2}{n} \log \frac{1}{\delta} \right)^\frac{1}{2}
			\\ \\
		\mathbb{E} f
			= \mathbb{E}_{X, X'} f \left( X, X' \right)
				&\leq \frac{1}{ n (n - 1) } \sum_{i, j = 1, i \neq j}^n f \left( x_i, x_j \right)
					+ \lambda \Lambda \left[ f, \theta_f \right] \\
					&\qquad + 2 \left(
									\hat{\mathcal{R}}_{n - 1, S} \left( \mathcal{F}_1 \right)
										+ \mathcal{R}_S \left( \mathcal{F}_2 \right)
								\right)
				+ 6 M \left( \frac{2}{n} \log \frac{2}{\delta} \right)^\frac{1}{2}	\\
		\frac{1}{ n (n - 1) } \sum_{i, j = 1, i \neq j}^n f \left( x_i, x_j \right)
				+ \lambda \Lambda \left[ f, \theta_f \right]
			&\leq \mathbb{E} f
					+ \lambda M_\Lambda
					+ 2 \left(
							\hat{\mathcal{R}}_{n - 1, S} \left( \mathcal{F}_1 \right)
								+ \mathcal{R}_S \left( \mathcal{F}_2 \right)
						\right) \\
					&\quad+ 6 M \left( \frac{2}{n} \log \frac{2}{\delta} \right)^\frac{1}{2}	\end{align*}
	where $\mathcal{R}_S$ is the empirical Rademacher Complexity, given by:
	
	\begin{align*}
		\mathcal{R}_S \left( \mathcal{F} \right)
			= \mathbb{E}_\epsilon \sup_{ f \in \mathcal{F} } \frac{1}{n}
				\sum_{i = 1}^n \epsilon_i
				f \left( x_i \right)
	\end{align*}
where $\epsilon$ is a Rademacher random vector taking values uniformly in $\{ -1, 1 \}^n$, $S_{-i} = \left\{ x_j \right\}_{j = 1, j \neq i}^n$,
	$\hat{\mathcal{R}}_{n-1, S} \left( \mathcal{F} \right) = n^{-1} \sum_{i = 1}^n \mathcal{R}_{S_{-i}} \left( \mathcal{F} \right)$,
	$\mathcal{F}_1 = \left\{ g \ \middle| \ 
						\exists_{f \in \mathcal{F}, x' \in \mathcal{X}}
						\forall_{x \in \mathcal{X}}
							\left[ g(x) = f \left( x', x \right) \right]
					\right\}$, \\
	$\mathcal{F}_2 = \left\{ g \ \middle| \ 
						\exists_{f \in \mathcal{F}, x' \in \mathcal{X}}
						\forall_{x \in \mathcal{X}}
							\left[ g(x) = f \left( x, x' \right) \right]
					\right\}$,
	and
	$\mathcal{R}_n$ is the (expected) Rademacher complexity for a sample of size $n$, $\mathcal{R}_n = \mathbb{E}_S \mathcal{R}_S$, where the expectation is over all samples, $S$, of size $n$.
	
	Finally, if the elements of $\mathcal{F}$ are symmetric, so that $\forall_{f \in \mathcal{F}, x, x' \in \mathcal{X} } \left( f \left( x, x' \right) = f \left( x', x \right) \right)$, $\mathcal{F}_1 = \mathcal{F}_2$.

\end{lemma}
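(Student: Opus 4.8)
The plan is to treat the U-statistic $U_n(f):=\frac{1}{n(n-1)}\sum_{i\neq j}f(x_i,x_j)$ by a leave-one-out decomposition that turns it into two ordinary empirical processes over IID data, bound each by one-sided symmetrization, and then conclude with a bounded-differences (McDiarmid) concentration step; the penalty is inert since $0\le\Lambda\le M_\Lambda$, so the reasoning for all four inequalities is essentially the same argument read in two directions. Concretely, write $U_n(f)=\frac1n\sum_{i=1}^n h_i(f)$ with $h_i(f)=\frac1{n-1}\sum_{j\neq i}f(x_i,x_j)$, and set $\bar f(x):=\mathbb{E}_{X'}f(x,X')$, so $\mathbb{E}f=\mathbb{E}_X\bar f(X)$. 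Then $\mathbb{E}f-U_n(f)$ splits as $(\mathrm{I})=\mathbb{E}_X\bar f(X)-\frac1n\sum_i\bar f(x_i)$ plus $(\mathrm{II})=\frac1n\sum_i\bigl(\bar f(x_i)-h_i(f)\bigr)$, both of which have mean zero for each fixed $f$.

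For term $(\mathrm{I})$ I would note that $\bar f=\mathbb{E}_{X'}f(\cdot,X')$ is a mixture of the functions $f(\cdot,x')\in\mathcal{F}_2$, so the class $\{\bar f:f\in\mathcal{F}\}$ lies in the closed convex hull of $\mathcal{F}_2$; one-sided symmetrization over the full size-$n$ sample together with the invariance of Rademacher complexity under taking convex hulls gives $\mathbb{E}_S\sup_f(\mathrm{I})\le 2\mathcal{R}_n(\mathcal{F}_2)$ (equivalently, pull $\mathbb{E}_{X'}$ outside the supremum via $\sup\mathbb{E}\le\mathbb{E}\sup$ and use $f(\cdot,x')\in\mathcal{F}_2$). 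For term $(\mathrm{II})$, bound $\sup_f(\mathrm{II})\le\frac1n\sum_i\sup_f\bigl(\bar f(x_i)-h_i(f)\bigr)$ and take $\mathbb{E}_S$; conditioning on $x_i$, the remaining $n-1$ points $S_{-i}$ are IID and independent of $x_i$, and with $g:=f(x_i,\cdot)\in\mathcal{F}_1$ we have $h_i(f)=\frac1{n-1}\sum_{j\in S_{-i}}g(x_j)$ and $\bar f(x_i)=\mathbb{E}_{X'}g(X')$, so the standard symmetrization lemma applied to a sample of size $n-1$ yields $\mathbb{E}_{S_{-i}}\sup_f(\bar f(x_i)-h_i(f))\le 2\mathcal{R}_{n-1}(\mathcal{F}_1)$; averaging over $i$ preserves this. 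Hence $\mathbb{E}_S\sup_f[\mathbb{E}f-U_n(f)]\le 2\bigl(\mathcal{R}_{n-1}(\mathcal{F}_1)+\mathcal{R}_n(\mathcal{F}_2)\bigr)$, and since $\lambda\Lambda\ge0$ the same bound holds after subtracting $\lambda\Lambda[f,\theta_f]$ inside the supremum.

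To finish, apply McDiarmid to $\Phi(S)=\sup_f[\mathbb{E}f-U_n(f)-\lambda\Lambda[f,\theta_f]]$: each sample index appears in $2(n-1)$ of the $n(n-1)$ ordered pairs and each term varies by at most $2M$, so $\Phi$ has bounded differences $\le 4M/n$, giving $\Phi(S)\le\mathbb{E}\Phi(S)+2M\bigl(\tfrac2n\log\tfrac1\delta\bigr)^{1/2}$ with probability at least $1-\delta$; combining with the previous paragraph and rearranging gives the first inequality, and the second follows identically with $\Phi$ replaced by $\sup_f[U_n(f)-\mathbb{E}f]$ (symmetrization and the bounded-difference constant are unchanged) and $\lambda\Lambda[f,\theta_f]\le\lambda M_\Lambda$. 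For the empirical-Rademacher versions one additionally applies McDiarmid to $\mathcal{R}_S(\mathcal{F}_2)$ and to $\hat{\mathcal{R}}_{n-1,S}(\mathcal{F}_1)=\frac1n\sum_i\mathcal{R}_{S_{-i}}(\mathcal{F}_1)$, both of which have bounded differences $\le 2M/n$ and expectations $\mathcal{R}_n(\mathcal{F}_2)$ and $\mathcal{R}_{n-1}(\mathcal{F}_1)$; a union bound (splitting $\delta$) swaps in the empirical quantities and enlarges the additive constant to $6M\bigl(\tfrac2n\log\tfrac2\delta\bigr)^{1/2}$. The symmetric-case claim is immediate: if $f(x,x')=f(x',x)$ for all $f,x,x'$, then the defining conditions for $\mathcal{F}_1$ and $\mathcal{F}_2$ coincide, so $\mathcal{F}_1=\mathcal{F}_2$.

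The main obstacle is the passage from the dependent double sum $U_n$ to two ordinary IID empirical processes while correctly matching each ``slot'' of $f$ to its marginal class — the outer average contributes $\mathcal{R}_n(\mathcal{F}_2)$ and the inner leave-one-out average contributes $\mathcal{R}_{n-1}(\mathcal{F}_1)$ — so that the asymmetric term $\mathcal{R}_{n-1}(\mathcal{F}_1)+\mathcal{R}_n(\mathcal{F}_2)$ emerges with the stated constant $2$; by contrast the McDiarmid bookkeeping, the $\lambda\Lambda$ handling, and the convex-hull/mixture step for $\bar f$ are routine. A minor technical point to flag is measurability of the suprema, which we dispatch by the usual assumption that $\mathcal{F}$ is a pointwise-measurable (e.g.\ separable) class.
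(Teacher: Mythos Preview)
Your proposal is correct and follows essentially the same route as the paper: the paper also inserts the intermediate quantity $\tilde{E}_S f=\frac1n\sum_i\mathbb{E}_{X'}f(x_i,X')$ (your $\frac1n\sum_i\bar f(x_i)$), bounds the two resulting pieces by ghost-sample symmetrization to obtain $2\mathcal{R}_{n-1}(\mathcal{F}_1)$ and $2\mathcal{R}_n(\mathcal{F}_2)$ respectively, applies McDiarmid to $\sup_f(\hat{E}_S f-\mathbb{E}f)$ with bounded differences $4M/n$, handles $\lambda\Lambda$ trivially, and then applies a second McDiarmid to the Rademacher terms (treated jointly, with combined bounded difference $4M/n$) to pass to the empirical versions with constant $6M$. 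The only cosmetic differences are notational: your ``convex hull'' remark for $\bar f$ is exactly the paper's step $\sup_f\frac1n\sum_i\epsilon_i\mathbb{E}_X f(x_i,X)\le\sup_{f,x}\frac1n\sum_i\epsilon_i f(x_i,x)$, and your leave-one-out phrasing for term $(\mathrm{II})$ mirrors the paper's conditioning on $x_i$ before symmetrizing over $S_{-i}$.
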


	In particular, $\theta_f$ might be the weights associated with the neural network $f$.  Note that, as is the case for neural networks, $\theta_f$ may not be uniquely determined by $f$, so that multiple $\theta$s may be associated with the same $f$.  We may also take $\Theta_\mathcal{F} = \emptyset$, so that $f$ is regarded purely as an element of $\mathcal{F}$.

\begin{proof}

	Let
	$\hat{E}_S f = \frac{1}{ n ( n - 1 ) } \sum_{i, j = 1, j \neq i} f \left( x_i, x_j \right)$,
	$\hat{E}_{S, \lambda} f = \frac{1}{ n ( n - 1 ) } \sum_{i, j = 1, j \neq i} f \left( x_i, x_j \right)
						+ \lambda \Lambda \left[ f, \theta_f \right]$,
	$\tilde{E}_S f = \frac{1}{n} \sum_{i = 1}^n E_X f \left( x_i, X \right)$,
	$\phi(S) = \sup_{f \in \mathcal{F}} \left( \hat{\mathbb{E}}_S f - \mathbb{E} f \right)$,
	$\phi^{(1)}(S) = \sup_{f \in \mathcal{F}} \left( \hat{E}_S f - \tilde{E}_S f \right)$,
	and
	$\phi^{(2)}(S) = \sup_{f \in \mathcal{F}} \left( \tilde{\mathbb{E}}_S f - \mathbb{E} f \right)$.
	Note that $\mathbb{E}_S \hat{\mathbb{E}}_S f
				= \mathbb{E}_{X, X'} f \left( X, X' \right)
				= \mathbb{E} f
				= \mathbb{E}_{X, X'} f \left( X, X' \right)
				= \mathbb{E}_S \tilde{\mathbb{E}}_S f$.
	Also, let $S' = \left\{ x_i ' \right\}_{i = 1}^n$ and let $S_i = \left\{ x_{i, j} \right\}_{j = 1}^n$ be obtained from $S$ by replacing $x_i$ by $x_i'$, so that $x_{i, j} = x_j$ for $j \neq i$ and $x_{i, i} = x_i'$.  In order to apply McDiarmid's Inequality, we must find bounds, $c_i$ such that $\left| \phi \left( S_i \right) - \phi(S) \right| \leq c_i$ for $i = 1, \dots, n$.

\begin{align*}
	\left| \phi \left( S_i \right) - \phi(S) \right|
		&= \left|
				\sup_{f \in \mathcal{F} } \left( \hat{\mathbb{E}}_{S_i} f - \mathbb{E} f \right)
					- \sup_{f \in \mathcal{F} } \left( \hat{\mathbb{E}}_S f - \mathbb{E} f \right)
			 \right| \\
		&\leq \sup_{f \in \mathcal{F} }
			\left|
				 \left( \hat{\mathbb{E}}_{S_i} f - \mathbb{E} f \right)
					- \left( \hat{\mathbb{E}}_S f - \mathbb{E} f \right)
			 \right| \\
		&= \sup_{f \in \mathcal{F} } \left| \hat{\mathbb{E}}_{S_i} f - \hat{\mathbb{E}}_S f \right| \\
		&= \sup_{f \in \mathcal{F} }
			\left|
				\frac{1}{ n ( n - 1 ) } \sum_{j, k = 1, k \neq j}^n f \left( x_{i, j}, x_{i, k} \right)
					- \frac{1}{ n ( n - 1 ) } \sum_{j, k = 1, k \neq j}^n f \left( x_j, x_k \right)
			\right| \\
		&\leq \frac{1}{ n ( n - 1 ) } \sup_{f \in \mathcal{F} }
				\sum_{j, k = 1, k \neq j}^n
					\left|
						f \left( x_{i, j}, x_{i, k} \right)
							- f \left( x_j, x_k \right)
					\right| \\
		&= \frac{1}{ n ( n - 1 ) }
				\left(
					\sum_{j = 1, j \neq k, k = i}^n \sup_{f \in \mathcal{F} }
						\left|
							f \left( x_{i, j}, x_{i, i} \right)
								- f \left( x_j, x_i \right)
						\right|
				\right. \\
				&\hspace{8em}+ \left.
					\sum_{k = 1, k \neq j, j = i}^n \sup_{f \in \mathcal{F} }
						\left|
							f \left( x_{i, i}, x_{i, k} \right)
								- f \left( x_i, x_k \right)
						\right|
				\right) \\
		&= \frac{1}{ n ( n - 1 ) }
				\left(
					\sum_{j = 1, j \neq i}^n \sup_{f \in \mathcal{F} }
						\left|
							f \left( x_j, x_i' \right)
								- f \left( x_j, x_i \right)
						\right|
					\right. \\
					&\hspace{8em}+ \left.
					\sum_{k = 1, k \neq i}^n \sup_{f \in \mathcal{F} }
						\left|
							f \left( x_i', x_k \right)
								- f \left( x_i, x_k \right)
						\right|
				\right) \\
		&= \frac{1}{ n ( n - 1 ) }
				\left(
					\sum_{j = 1, j \neq i}^n \sup_{f \in \mathcal{F} } 2 M
					+ \sum_{k = 1, k \neq i}^n \sup_{f \in \mathcal{F} } 2 M
				\right)
		\leq \frac{2}{ n ( n - 1 ) } \cdot ( n - 1 ) \cdot 2 M \\
		&= 4 M n^{-1} \\
\end{align*}
	so we can choose $c_i = c = 4 M n^{-1}$.  The exponent in McDiarmid's Inequality is then
	$ - 2 \epsilon^2 \left( \sum_{i = 1}^n c_i^2 \right)^{-1}
		= - 2 \epsilon^2 \left( n \cdot \left( 4 M n^{-1} \right)^2 \right)^{-1}
		= - 2 \epsilon^2 \left( 16 M^2 n^{-1}\right)^{-1}
		= - \frac{1}{8} n M^{-2} \epsilon^2$.
		Setting $\frac{\delta}{2} = e^{ - \frac{1}{8} n M^{-2} \epsilon^2 }$
		gives
		$\epsilon = \left( - \frac{ 8 M^2 }{n} \log \frac{\delta}{2} \right)^\frac{1}{2}
			= M \left( \frac{8}{n} \log \frac{2}{\delta} \right)^\frac{1}{2}
			= 2 M \left( \frac{2}{n} \log \frac{2}{\delta} \right)^\frac{1}{2}$.
		Then, McDiarmid's Inequality yields,

\begin{align*}
	\mathrm{P} \left[ \phi(S) - \mathbb{E}_S \phi(S) \geq \epsilon \right] \leq \frac{\delta}{2},
		\quad
	\mathrm{P} \left[ \mathbb{E}_S \phi(S) - \phi(S) \geq \epsilon \right] \leq \frac{\delta}{2}
\end{align*}
	so that, with probability $1 - \frac{\delta}{2}$,
	$\phi(S) \leq \mathbb{E}_S \phi(S) + 2 M \left( \frac{2}{n} \log \frac{2}{\delta} \right)^\frac{1}{2}$.  	We now need to compute $\mathbb{E}_S \phi(S)$.  However, this is difficult to do directly, so we instead compute it separately for $\phi^{(1)}$ and $\phi^{(2)}$.  Let $\epsilon$ be a Rademacher random vector taking values uniformly in $\{ -1, 1 \}^n$. Then,

\begin{align*}
	\mathbb{E}_S \phi^{(1)}(S)
		&= \mathbb{E}_S \sup_{f \in \mathcal{F}} \left( \hat{E}_S f - \tilde{E}_S f \right) \\
		&= \mathbb{E}_S \sup_{f \in \mathcal{F}}
			\left(
				\frac{1}{ n ( n - 1 ) } \sum_{i, j = 1, j \neq i}^n f \left( x_i, x_j \right)
				- \frac{1}{n} \sum_{i = 1}^n E_X f \left( x_i, X \right)
			\right) \\
		&= \mathbb{E}_S \sup_{f \in \mathcal{F}}
			\left(
				\frac{1}{ n ( n - 1 ) } \sum_{i, j = 1, j \neq i}^n f \left( x_i, x_j \right)
				- \frac{1}{n} \sum_{i = 1}^n E_{S'} \frac{1}{n - 1}
					\sum_{j = 1, j \neq i}^n f \left( x_i, x'_j \right)
			\right) \\
		&= \mathbb{E}_S \sup_{f \in \mathcal{F}} E_{S'} \frac{1}{ n ( n - 1 ) }
			\sum_{i, j = 1, j \neq i}^n
				\left(
					f \left( x_i, x_j \right)
					- f \left( x_i, x'_j \right)
				\right) \\
		&\leq \mathbb{E}_{S, S'} \sup_{f \in \mathcal{F}} \frac{1}{ n ( n - 1 ) }
								\sum_{i, j = 1, j \neq i}^n
									\left(
										f \left( x_i, x_j \right)
										- f \left( x_i, x'_j \right)
									\right) \\
		&\leq \frac{1}{n} \sum_{i = 1}^n \mathbb{E}_{S, S'}
						\sup_{f \in \mathcal{F}} \frac{1}{ n - 1 }
							\sum_{j = 1, j \neq i}^n
								\left(
									f \left( x_i, x_j \right)
									- f \left( x_i, x'_j \right)
								\right) \\
		&= \frac{1}{n} \sum_{i = 1}^n \mathbb{E}_{x_i, S_{-i}, S'_{-i}}
					\sup_{f \in \mathcal{F}} \frac{1}{ n - 1 }
						\sum_{j = 1, j \neq i}^n
							\left(
								f \left( x_i, x_j \right)
								- f \left( x_i, x'_j \right)
							\right) \\
		&= \frac{1}{n} \sum_{i = 1}^n \mathbb{E}_\epsilon \mathbb{E}_{x_i, S_{-i}, S'_{-i}}
					\sup_{f \in \mathcal{F}} \frac{1}{ n - 1 }
						\sum_{j = 1, j \neq i}^n
							\epsilon_j
							\left(
								f \left( x_i, x_j \right)
								- f \left( x_i, x'_j \right)
							\right) \\
		&\leq \frac{1}{n} \sum_{i = 1}^n
			\left[
				\mathbb{E}_{x_i, S_{-i}, \epsilon}
					\sup_{f \in \mathcal{F}} \frac{1}{ n - 1 }
						\sum_{j = 1, j \neq i}^n
							\epsilon_j f \left( x_i, x_j \right)
			\right. \\
			&\hspace{6em}+ \left.
			    \mathbb{E}_{x_i, S'_{-i}, \epsilon}
					\sup_{f \in \mathcal{F}} \frac{1}{ n - 1 }
						\sum_{j = 1, j \neq i}^n
							- \epsilon_j f \left( x_i, x'_j \right)
			\right] \\
		&= \frac{1}{n} \sum_{i = 1}^n \mathbb{E}_{x_i}
			\left[
				\mathbb{E}_{S_{-i}, \epsilon}
					\sup_{f \in \mathcal{F}} \frac{1}{ n - 1 }
						\sum_{j = 1, j \neq i}^n
							\epsilon_j f \left( x_i, x_j \right)
			\right. \\
			&\hspace{6em}+ \left.
			    \mathbb{E}_{S_{-i}, \epsilon}
					\sup_{f \in \mathcal{F}} \frac{1}{ n - 1 }
						\sum_{j = 1, j \neq i}^n
							\epsilon_j f \left( x_i, x_j \right)
			\right] \\
		&= \frac{2}{n} \sum_{i = 1}^n \mathbb{E}_{x_i}
				\mathbb{E}_{S_{-i}, \epsilon}
					\sup_{f \in \mathcal{F}} \frac{1}{ n - 1 }
						\sum_{j = 1, j \neq i}^n
							\epsilon_j f \left( x_i, x_j \right) \\
		&= \frac{2}{n} \sum_{i = 1}^n \mathbb{E}_{x_n}
				\mathbb{E}_{S_{-n}, \epsilon}
					\sup_{f \in \mathcal{F}} \frac{1}{ n - 1 }
						\sum_{j = 1}^{n - 1}
							\epsilon_j f \left( x_n, x_j \right) \\
		&= 2 \mathbb{E}_{x_n}
				\mathbb{E}_{S_{-n}, \epsilon}
					\sup_{f \in \mathcal{F}} \frac{1}{ n - 1 }
						\sum_{j = 1}^{n - 1}
							\epsilon_j f \left( x_n, x_j \right)
		= 2 \mathbb{E}_X
				\mathbb{E}_{S_{-n}, \epsilon}
					\sup_{f \in \mathcal{F}} \frac{1}{ n - 1 }
						\sum_{i = 1}^{n - 1}
							\epsilon_i f \left( X, x_i \right) \\
		&\leq 2 \mathbb{E}_{S_{-n}, \epsilon}
				\sup_{ f \in \mathcal{F}, x \in \mathcal{X} } \frac{1}{ n - 1 }
						\sum_{i = 1}^{n - 1}
							\epsilon_i f \left( x, x_i \right)
		= 2 \mathbb{E}_{S_{-n}} \mathbb{E}_\epsilon
				\sup_{ f \in \mathcal{F}, x \in \mathcal{X} } \frac{1}{ n - 1 }
						\sum_{i = 1}^{n - 1}
							\epsilon_i f \left( x, x_i \right) \\
		&= 2 \mathcal{R}_{n - 1} \left( \mathcal{F}_1 \right)
\end{align*}
	where, in the seventh line, we note that the inner sum depends only on $S = \left\{ x_i \right\} \cup S_{-i}$ and $S'_{-i}$, but not $x'_i$, in the eighth line, we introduce Rademacher variables because reversing the order of the difference is equivalent to swapping elements between $S_{-i}$ and $S'_{-i}$ and, since the expectation is over all possible pairs of samples, its value is unchanged, in the tenth line, we note that negation simply interchanges pairs of Rademacher vectors, so the expectation is unchanged, and, in the final line,
	$\mathcal{F}_1 = \left\{ g \ \middle| \ 
							\exists_{f \in \mathcal{F}, x' \in \mathcal{X}}
							\forall_{x \in \mathcal{X}}
								\left[ g(x) = f \left( x', x \right) \right]
						\right\}$.

\begin{align*}
	\mathbb{E}_S \phi^{(2)}(S)
		&= \mathbb{E}_S \sup_{f \in \mathcal{F}}
			\left(
				\tilde{\mathbb{E}}_S f
					- \mathbb{E} f
				\right)
		= \mathbb{E}_S \sup_{f \in \mathcal{F}}
			\left(
				\tilde{\mathbb{E}}_S f
					- \mathbb{E}_{S'} \tilde{\mathbb{E}}_{S'} f
			\right)
		= \mathbb{E}_S \sup_{f \in \mathcal{F}} \mathbb{E}_{S'}
			\left(
				\tilde{\mathbb{E}}_S f
					- \tilde{\mathbb{E}}_{S'} f
			\right) \\
		&\leq \mathbb{E}_{S, S'} \sup_{f \in \mathcal{F}}
			\left(
				\tilde{\mathbb{E}}_S f
					- \tilde{\mathbb{E}}_{S'} f
			\right) \\
		&= \mathbb{E}_{S, S'} \sup_{f \in \mathcal{F}}
			\left(
				\frac{1}{n} \sum_{i = 1}^n E_X f \left( x_i, X \right)
					- \frac{1}{n} \sum_{i = 1}^n E_X f \left( x_i', X \right)
			\right) \\
		&= \mathbb{E}_{S, S'} \sup_{f \in \mathcal{F}}
			\left(
				\frac{1}{n} \sum_{i = 1}^n
					\left(
						E_X f \left( x_i, X \right)
							- E_X f \left( x_i', X \right)
					\right)
			\right) \\
		&= \mathbb{E}_\epsilon \mathbb{E}_{S, S'} \sup_{f \in \mathcal{F}}
			\left(
				\frac{1}{n} \sum_{i = 1}^n
					\epsilon_i
					\left(
						E_X f \left( x_i, X \right)
							- E_X f \left( x_i', X \right)
					\right)
			\right) \\
		&\leq \mathbb{E}_{S, \epsilon} \sup_{f \in \mathcal{F}}
				\frac{1}{n} \sum_{i = 1}^n
					\epsilon_i E_X f \left( x_i, X \right)
			+ \mathbb{E}_{S', \epsilon} \sup_{f \in \mathcal{F}}
				\frac{1}{n} \sum_{i = 1}^n
					- \epsilon_i E_X f \left( x_i', X \right) \\
		&= \mathbb{E}_{S, \epsilon} \sup_{f \in \mathcal{F}}
				\frac{1}{n} \sum_{i = 1}^n
					\epsilon_i E_X f \left( x_i, X \right)
				+ \mathbb{E}_{S, \epsilon} \sup_{f \in \mathcal{F}}
				\frac{1}{n} \sum_{i = 1}^n
					\epsilon_i E_X f \left( x_i, X \right) \\
		&= 2 \mathbb{E}_{S, \epsilon} \sup_{f \in \mathcal{F}}
				\frac{1}{n} \sum_{i = 1}^n
					\epsilon_i E_X f \left( x_i, X \right)
		\leq 2 \mathbb{E}_{S, \epsilon} \sup_{ f \in \mathcal{F} , x \in \mathcal{X} }
				\frac{1}{n} \sum_{i = 1}^n
					\epsilon_i f \left( x_i, x \right) \\
		&= 2 \mathcal{R}_n \left( \mathcal{F}_2 \right)
\end{align*}
	where, in the fifth line, we introduce Rademacher variables because changing the order of the difference is equivalent to swapping elements between $S$ and $S'$, and, since the expectation is over all possible pairs of samples, its value is unchanged, in the seventh line, we note that negation simply interchanges pairs of Rademacher vectors leaving the expectation unchanged, and, in the final line,
	$\mathcal{F}_2 = \left\{ g \ \middle| \ 
						\exists_{f \in \mathcal{F}, x' \in \mathcal{X}}
						\forall_{x \in \mathcal{X}}
							\left[ g(x) = f \left( x, x' \right) \right]
					\right\}$.
	Then,
\begin{align*}
	\mathbb{E}_S \phi(S)
		&= \mathbb{E}_S \sup_{f \in \mathcal{F}}
			\left(
				\hat{\mathbb{E}}_S f
					- \mathbb{E} f
			\right)
		= \mathbb{E}_S \sup_{f \in \mathcal{F}}
			\left(
				\hat{\mathbb{E}}_S f
				- \tilde{\mathbb{E}}_S f
				+ \tilde{\mathbb{E}}_S f
					- \mathbb{E} f
			\right) \\
		&\leq \mathbb{E}_S \sup_{f \in \mathcal{F}}
					\left(
						\hat{\mathbb{E}}_S f
							- \tilde{\mathbb{E}}_S f
					\right)
				+ \mathbb{E}_S \sup_{f \in \mathcal{F}}
					\left(
						\tilde{\mathbb{E}}_S f
							- \mathbb{E} f
					\right)
		= \mathbb{E}_S \phi^{(1)}(S) + \mathbb{E}_S \phi^{(2)}(S) \\
		&= 2 \mathcal{R}_{n - 1} \left( \mathcal{F}_1 \right)
			+ 2 \mathcal{R}_n \left( \mathcal{F}_2 \right)
\end{align*}
	
	Finally, combining the above results tells us that, with probability $1 - \frac{\delta}{2}$,
	$\phi(S)
		\leq 2 \left(
					\mathcal{R}_{n-1} \left( \mathcal{F}_1 \right)
						+ \mathcal{R}_n \left( \mathcal{F}_2 \right)
			\right)
			+ 2 M \left( \frac{2}{n} \log \frac{2}{\delta} \right)^\frac{1}{2}$
	so
		$\hat{\mathbb{E}}_S f
			\leq \mathbb{E} f
				+ 2 \left(
						\mathcal{R}_{n-1} \left( \mathcal{F}_1 \right)
							+ \mathcal{R}_n \left( \mathcal{F}_2 \right)
				\right)
				+ 2 M \left( \frac{2}{n} \log \frac{2}{\delta} \right)^\frac{1}{2}$.
	Replacing $\phi(S)$ by $\phi'(S) = \sup_{f \in \mathcal{F} } \left( \mathbb{E} f - \hat{\mathbb{E}}_S f \right)$
	in the above proof yields
		$\mathbb{E} f
			\leq \hat{\mathbb{E}}_S f
					+ 2 \left(
							\mathcal{R}_{n-1} \left( \mathcal{F}_1 \right)
								+ \mathcal{R}_n \left( \mathcal{F}_2 \right)
						\right)
					+ 2 M \left( \frac{2}{n} \log \frac{2}{\delta} \right)^\frac{1}{2}$.
	Since $\lambda \geq 0$, $0 \leq \Lambda \leq M_\Lambda$, $0 \leq \lambda \Lambda \left[ f, \theta_f \right] \leq \lambda M_\Lambda$, so we also have,
\begin{align*}
	\mathbb{E} f
			&\leq \hat{\mathbb{E}}_S f
					+ \lambda \Lambda \left[ f, \theta_f \right]
					+ 2 \left(
							\mathcal{R}_{n-1} \left( \mathcal{F}_1 \right)
								+ \mathcal{R}_n \left( \mathcal{F}_2 \right)
						\right)
					+ 2 M \left( \frac{2}{n} \log \frac{2}{\delta} \right)^\frac{1}{2} \\
			&= \hat{\mathbb{E}}_{S, \lambda} f
					+ 2 \left(
							\mathcal{R}_{n-1} \left( \mathcal{F}_1 \right)
								+ \mathcal{R}_n \left( \mathcal{F}_2 \right)
						\right)
					+ 2 M \left( \frac{2}{n} \log \frac{2}{\delta} \right)^\frac{1}{2} \\
	\\
	\hat{\mathbb{E}}_{S, \lambda} f
		= \hat{\mathbb{E}}_S f + \lambda \Lambda \left[ f, \theta_f \right]
			&\leq \mathbb{E} f
				+ \lambda M_\Lambda
				+ 2 \left(
						\mathcal{R}_{n-1} \left( \mathcal{F}_1 \right)
							+ \mathcal{R}_n \left( \mathcal{F}_2 \right)
				\right)
				+ 2 M \left( \frac{2}{n} \log \frac{2}{\delta} \right)^\frac{1}{2} \\
\end{align*}

	Using $2 \delta$ in place of $\delta$, we see that, with probability at least $1 - \delta$,
		$\mathbb{E} f
			\leq \hat{\mathbb{E}}_{S, \lambda} f
				+ 2 \left(
						\mathcal{R}_{n-1} \left( \mathcal{F}_1 \right)
							+ \mathcal{R}_n \left( \mathcal{F}_2 \right)
					\right)
				+ 2 M \left( \frac{2}{n} \log \frac{1}{\delta} \right)^\frac{1}{2}$
	and
		$\hat{\mathbb{E}}_{S, \lambda} f
			\leq \mathbb{E} f
				+ \lambda M_\Lambda
				+ 2 \left(
						\mathcal{R}_{n-1} \left( \mathcal{F}_1 \right)
							+ \mathcal{R}_n \left( \mathcal{F}_2 \right)
					\right)
				+ 2 M \left( \frac{2}{n} \log \frac{1}{\delta} \right)^\frac{1}{2}$,
	yielding the first two inequalities.
	
	In order to obtain results in terms of the empirical Rademacher complexity, $\mathcal{R}_S$, instead of the (expected) Rademacher complexity, $\mathcal{R}_n$, we need to apply McDiarmid's Inequality a second time.  Let
	$\mathcal{G}$ be a family of measurable functions with
	$g \in \mathcal{G}: \mathcal{X} \to [ - M, M ]$,
	Let
	$\hat{\mathcal{R}}_{n-1, S} \left( \mathcal{G} \right) = n^{-1} \sum_{i = 1}^n \mathcal{R}_{S_{-i}} \left( \mathcal{G} \right)$,
	so that
	$\mathbb{E}_S \hat{\mathcal{R}}_{n-1, S} \left( \mathcal{G} \right) = \mathcal{R}_{n-1} \left( \mathcal{G} \right)$.
	Then,
\begin{align*}
	&\hspace{-2em}
    \left|
		\hat{\mathcal{R}}_{n - 1, S_i} \left( \mathcal{G} \right)
		- \hat{\mathcal{R}}_{n - 1, S} \left( \mathcal{G} \right)
	\right| \\
		&= \left|
				n^{-1} \sum_{k = 1}^n
				\mathbb{E}_\epsilon \sup_{g \in \mathcal{G} } \frac{1}{n-1} \sum_{j = 1, j \neq k}^n
						\epsilon_j g \left( x_{i, j} \right)
				- n^{-1} \sum_{k = 1}^n
					\mathbb{E}_\epsilon \sup_{g \in \mathcal{G} } \frac{1}{n-1} \sum_{j = 1, j \neq k}^n
						\epsilon_j g \left( x_j \right)
			\right| \\
		&= \frac{1}{ n ( n - 1 ) }
			\left|
				\sum_{k = 1}^n
				\mathbb{E}_\epsilon \left(
					 \sup_{g \in \mathcal{G} } \sum_{j = 1, j \neq k}^n
						\epsilon_j g \left( x_{i, j} \right)
					- \sup_{g \in \mathcal{G} } \sum_{j = 1, j \neq k}^n
						\epsilon_j g \left( x_j \right)
				\right)
			\right| \\
		&\leq \frac{1}{ n ( n - 1 ) }
				\sum_{k = 1}^n
				\mathbb{E}_\epsilon
					\sup_{g \in \mathcal{G} } \sum_{j = 1, j \neq k}^n
						\left|
							\epsilon_j
							\left( g \left( x_{i, j} \right) - g \left( x_j \right) \right)
						\right| \\
		&= \frac{1}{ n ( n - 1 ) }
				\sum_{k = 1}^n
				\mathbb{E}_\epsilon
					\sup_{g \in \mathcal{G} } \left| g \left( x_i' \right) - g \left( x_i \right) \right|
					I( k \neq i ) \\
		&\leq \frac{1}{ n ( n - 1 ) }
				\sum_{k = 1}^n
				I( k \neq i )
				\mathbb{E}_\epsilon 2 M
		= \frac{1}{ n ( n - 1 ) }
				\sum_{k = 1, k \neq i}^n 2 M \\
		& = 2 M n^{-1} \\
\end{align*}
	where, in the fifth line, we note that the term inside the absolute value is potentially nonzero if and only if it involves $x_i'$, which occurs in each sum over $j$ exactly once, except in the case in which $k \neq i$, in which case it does not occur at all.

\begin{align*}
	\left|
		\mathcal{R}_{S_i} \left( \mathcal{G} \right)
		- \mathcal{R}_S \left( \mathcal{G} \right)
	\right|
		&= \left|
				\mathbb{E}_\epsilon \sup_{g \in \mathcal{G} } n^{-1} \sum_{j = 1}^n
						\epsilon_j g \left( x_{i, j} \right)
				- \mathbb{E}_\epsilon \sup_{g \in \mathcal{G} } n^{-1} \sum_{j = 1}^n
						\epsilon_j g \left( x_j \right)
			\right| \\
		&\leq n^{-1} \mathbb{E}_\epsilon \sup_{g \in \mathcal{G} }
				\sum_{j = 1}^n
					\left|
						\epsilon_j \left(
									g \left( x_{i, j} \right) - g \left( x_j \right)
								\right)
					\right|
		= n^{-1} \mathbb{E}_\epsilon \sup_{g \in \mathcal{G} }
			\left| g \left( x_i' \right) - g \left( x_i \right) \right| \\
		&\leq n^{-1} \mathbb{E}_\epsilon 2 M \\
		&= 2 M n^{-1}
\end{align*}

	Combining the above results gives,

\begin{align*}
	& \hspace{-2 em}
		\left|
			\left(
					\hat{\mathcal{R}}_{n - 1, S_i} \left( \mathcal{F}_1 \right)
						+ \mathcal{R}_{S_i} \left( \mathcal{F}_2 \right)
			\right)
			- \left(
					\hat{\mathcal{R}}_{n - 1, S} \left( \mathcal{F}_1 \right)
						+ \mathcal{R}_S \left( \mathcal{F}_2 \right)
			\right)
		\right| \\
		&= \left|
			\left(
					\hat{\mathcal{R}}_{n - 1, S_i} \left( \mathcal{F}_1 \right)
						- \hat{\mathcal{R}}_{n - 1, S} \left( \mathcal{F}_1 \right)
			\right)
			+ \left(
					\mathcal{R}_{S_i} \left( \mathcal{F}_2 \right)
						- \mathcal{R}_S \left( \mathcal{F}_2 \right)
			\right)
		\right| \\
		&\leq \left|
				\hat{\mathcal{R}}_{n - 1, S_i} \left( \mathcal{F}_1 \right)
					- \hat{\mathcal{R}}_{n - 1, S} \left( \mathcal{F}_1 \right)
			\right|
			+ \left|
				\mathcal{R}_{S_i} \left( \mathcal{F}_2 \right)
					- \mathcal{R}_S \left( \mathcal{F}_2 \right)
			\right|
		= 2 M n^{-1} + 2 M n^{-1} \\
		&= 4 M n^{-1}
\end{align*}
	This is the same value we obtained previously, so that the corresponding $\epsilon = 2 M \left( \frac{2}{n} \log \frac{2}{\delta} \right)^\frac{1}{2}$.  Then, McDiarmid's Inequality gives,

\begin{align*}
	\mathrm{P} \left[
					\left(
							\hat{\mathcal{R}}_{n - 1, S} \left( \mathcal{F}_1 \right)
								+ \mathcal{R}_S \left( \mathcal{F}_2 \right)
					\right)
					- \left(
							\mathcal{R}_{n - 1} \left( \mathcal{F}_1 \right)
								+ \mathcal{R}_n \left( \mathcal{F}_2 \right)
					\right)
					\geq \epsilon
			\right]
				&\leq \frac{\delta}{2} \\
	\mathrm{P} \left[
					\left(
							\mathcal{R}_{n - 1} \left( \mathcal{F}_1 \right)
								+ \mathcal{R}_n \left( \mathcal{F}_2 \right)
					\right)
					- \left(
							\hat{\mathcal{R}}_{n - 1, S} \left( \mathcal{F}_1 \right)
								+ \mathcal{R}_S \left( \mathcal{F}_2 \right)
					\right)
					\geq \epsilon
			\right]
				&\leq \frac{\delta}{2} \\
\end{align*}
so that, with probability
	$1 - \frac{\delta}{2}$,
	$\mathcal{R}_{n - 1} \left( \mathcal{F}_1 \right)
			+ \mathcal{R}_n \left( \mathcal{F}_2 \right)
		\leq \hat{\mathcal{R}}_{n - 1, S} \left( \mathcal{F}_1 \right)
			+ \mathcal{R}_S \left( \mathcal{F}_2 \right)
			+ 2 M \left( \frac{2}{n} \log \frac{2}{\delta} \right)^\frac{1}{2}$.
	Combining this with the previous results shows that, with probability at least $1 - \delta$,
	$\mathbb{E} f
		\leq \hat{\mathbb{E}}_{S, \lambda} f
			+ 2 \left(
					\hat{\mathcal{R}}_{n - 1, S} \left( \mathcal{F}_1 \right)
						+ \mathcal{R}_S \left( \mathcal{F}_2 \right)
				\right)
			+ \left( 2 M + 2 \cdot 2 M \right) \left( \frac{2}{n} \log \frac{2}{\delta} \right)^\frac{1}{2}
		= \hat{\mathbb{E}}_{S, \lambda} f
			+ 2 \left(
					\hat{\mathcal{R}}_{n - 1, S} \left( \mathcal{F}_1 \right)
						+ \mathcal{R}_S \left( \mathcal{F}_2 \right)
				\right)
			+ 6 M \left( \frac{2}{n} \log \frac{2}{\delta} \right)^\frac{1}{2}$
	and
	$\hat{\mathbb{E}}_{S, \lambda} f
		\leq \mathbb{E} f
			+ \lambda M_\Lambda
			+ 2 \left(
					\hat{\mathcal{R}}_{n - 1, S} \left( \mathcal{F}_1 \right)
						+ \mathcal{R}_S \left( \mathcal{F}_2 \right)
				\right)
			+ 6 M \left( \frac{2}{n} \log \frac{2}{\delta} \right)^\frac{1}{2}$,
	yielding the second pair of inequalities.
	
	Finally, if the elements of $\mathcal{F}$ are symmetric, so that
		$\forall_{ f \in \mathcal{F}, x, x' \in \mathcal{X} } f \left( x, x' \right) = f \left( x', x \right)$,
		if $g \in \mathcal{F}_1$ then
		$\exists_{ f \in \mathcal{F}, x' \in \mathcal{X} } \forall_{ x \in \mathcal{X} }
			\left( g(x) = f \left( x' , x \right) = f \left( x, x' \right) \right)$,
		so $g \in \mathcal{F}_2$ as well and $\mathcal{F}_1 \subseteq \mathcal{F}_2$.
		Likewise, if $g \in \mathcal{F}_2$ then
		$\exists_{ f \in \mathcal{F}, x' \in \mathcal{X} } \forall_{ x \in \mathcal{X} }
					\left( g(x) = f \left( x , x' \right) = f \left( x', x \right) \right)$,
		so $g \in \mathcal{F}_1$ as well and $\mathcal{F}_2 \subseteq \mathcal{F}_1$.
		Thus, $\mathcal{F}_1 = \mathcal{F}_2$.

\end{proof}

\begin{cor} \label{Imp Ineq}

	The inequalities in Lemma \ref{Exp to Emp} can be strengthened to the following:
	
	\begin{align*}
		\mathbb{E} f
			\leq &\frac{1}{ n (n - 1) } \sum_{i, j = 1, i \neq j}^n f \left( x_i, x_j \right)
					+ \lambda \Lambda \left[ f, \theta_f \right] \\
					&\qquad + 2 \mathbb{E}_X
						\left(
							\mathcal{R}_{n-1} \left( \mathcal{F}_{1, X} \right)
								+ \mathcal{R}_n \left( \mathcal{F}_{2, X} \right)
						\right)
					+ 2 M \left( \frac{2}{n} \log \frac{1}{\delta} \right)^\frac{1}{2}
			\\
			&\frac{1}{ n (n - 1) } \sum_{i, j = 1, i \neq j}^n f \left( x_i, x_j \right)
				+ \lambda \Lambda \left[ f, \theta_f \right]
				\leq \mathbb{E} f
					+ \lambda M_\Lambda \\
					&\qquad+ 2 \mathbb{E}_X
						\left(
							\mathcal{R}_{n-1} \left( \mathcal{F}_{1, X} \right)
								+ \mathcal{R}_n \left( \mathcal{F}_{2, X} \right)
						\right)
					+ 2 M \left( \frac{2}{n} \log \frac{1}{\delta} \right)^\frac{1}{2}
			\\ \\
		\mathbb{E} f
			\leq &\frac{1}{ n (n - 1) } \sum_{i, j = 1, i \neq j}^n f \left( x_i, x_j \right)
					+ \lambda \Lambda \left[ f, \theta_f \right] \\
				&\qquad+ 2 \mathbb{E}_X
						\left(
							\hat{\mathcal{R}}_{n-1, S} \left( \mathcal{F}_{1, X} \right)
								+ \mathcal{R}_S \left( \mathcal{F}_{2, X} \right)
						\right)
					+ 6 M \left( \frac{2}{n} \log \frac{2}{\delta} \right)^\frac{1}{2}
			\\
			&\frac{1}{ n (n - 1) } \sum_{i, j = 1, i \neq j}^n f \left( x_i, x_j \right)
				+ \lambda \Lambda \left[ f, \theta_f \right]
				\leq \mathbb{E} f
					+ \lambda M_\Lambda \\
					&\qquad+ 2 \mathbb{E}_X
						\left(
							\hat{\mathcal{R}}_{n-1, S} \left( \mathcal{F}_{1, X} \right)
								+ \mathcal{R}_S \left( \mathcal{F}_{2, X} \right)
						\right)
					+ 6 M  \left( \frac{2}{n} \log \frac{2}{\delta} \right)^\frac{1}{2}
\end{align*}
	$\mathcal{F}_{1, x} = \left\{ g_x \ \middle| \ 
							\exists_{ f \in \mathcal{F} }
							\forall_{ x' \in \mathcal{X} }
								\left[ g_x \left( x' \right) = f \left( x, x' \right) \right]
						\right\}$,
	$\mathcal{F}_{2, x} = \left\{ g_x \ \middle| \ 
							\exists_{ f \in \mathcal{F} }
							\forall_{ x' \in \mathcal{X} }
								\left[ g_x \left( x' \right) = f \left( x', x \right) \right]
						\right\}$
	
	If the elements of $\mathcal{F}$ are symmetric, then $\mathcal{F}_{1, x} = \mathcal{F}_{2, x}$.

\end{cor}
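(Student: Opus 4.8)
The plan is to re-run the proof of Lemma~\ref{Exp to Emp} essentially verbatim, modifying only the two points at which a supremum over $x \in \mathcal{X}$ was extracted from an expectation and replacing that step with an outer expectation $\mathbb{E}_X$ over $X \sim \mathrm{P}_X$. Since $\mathcal{F}_{1,x} \subseteq \mathcal{F}_1$ and $\mathcal{F}_{2,x} \subseteq \mathcal{F}_2$ for every $x$, we have $\mathbb{E}_X \mathcal{R}_{n-1}(\mathcal{F}_{1,X}) \leq \mathcal{R}_{n-1}(\mathcal{F}_1)$ and $\mathbb{E}_X \mathcal{R}_n(\mathcal{F}_{2,X}) \leq \mathcal{R}_n(\mathcal{F}_2)$, so the inequalities produced really are tighter; but they will be obtained by adjusting the \emph{argument} of the Lemma rather than by quoting its statement.

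First I would observe that the first application of McDiarmid's inequality, to $\phi(S) = \sup_{f \in \mathcal{F}}(\hat{\mathbb{E}}_S f - \mathbb{E} f)$, is unaffected, so that with probability at least $1 - \tfrac{\delta}{2}$ one still has $\phi(S) \leq \mathbb{E}_S \phi(S) + 2M(\tfrac{2}{n}\log\tfrac{2}{\delta})^{1/2}$, and it only remains to bound $\mathbb{E}_S \phi(S) \leq \mathbb{E}_S \phi^{(1)}(S) + \mathbb{E}_S \phi^{(2)}(S)$ differently. For $\mathbb{E}_S \phi^{(1)}(S)$ I would follow the symmetrization chain in the Lemma unchanged up to the line $\mathbb{E}_S \phi^{(1)}(S) \leq 2\, \mathbb{E}_X \mathbb{E}_{S_{-n}, \epsilon} \sup_{f \in \mathcal{F}} \tfrac{1}{n-1} \sum_{i=1}^{n-1} \epsilon_i f(X, x_i)$, and then, instead of replacing $\sup_{f \in \mathcal{F}}$ by $\sup_{f \in \mathcal{F}, x \in \mathcal{X}}$, simply note that for each fixed $X = x$ the maps $x' \mapsto f(x, x')$ range exactly over $\mathcal{F}_{1,x}$, so the inner expectation equals $\mathcal{R}_{n-1}(\mathcal{F}_{1,x})$, giving $\mathbb{E}_S \phi^{(1)}(S) \leq 2\, \mathbb{E}_X \mathcal{R}_{n-1}(\mathcal{F}_{1,X})$. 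For $\mathbb{E}_S \phi^{(2)}(S)$ I would follow the chain to the line $\mathbb{E}_S \phi^{(2)}(S) \leq 2\, \mathbb{E}_{S, \epsilon} \sup_{f \in \mathcal{F}} \tfrac{1}{n} \sum_{i=1}^n \epsilon_i E_X f(x_i, X)$, and then, rather than bounding $E_X f(x_i, X)$ by a pointwise supremum, use $\sup_f E_X(\cdot) \leq E_X \sup_f(\cdot)$ followed by Fubini to move $E_X$ outside both the supremum and the $(S, \epsilon)$-expectation; the residual $\mathbb{E}_{S, \epsilon} \sup_{f \in \mathcal{F}} \tfrac{1}{n}\sum_{i=1}^n \epsilon_i f(x_i, X)$ is $\mathcal{R}_n(\mathcal{F}_{2,X})$ for fixed $X$, so $\mathbb{E}_S \phi^{(2)}(S) \leq 2\, \mathbb{E}_X \mathcal{R}_n(\mathcal{F}_{2,X})$. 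Summing the two bounds, inserting the penalty term via $0 \leq \lambda \Lambda[f, \theta_f] \leq \lambda M_\Lambda$ exactly as in the Lemma, and rescaling $\delta \mapsto 2\delta$ then yields the first pair of inequalities.

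For the empirical-Rademacher versions I would apply McDiarmid's inequality a second time, now to the map $S \mapsto \mathbb{E}_X\left[ \hat{\mathcal{R}}_{n-1,S}(\mathcal{F}_{1,X}) + \mathcal{R}_S(\mathcal{F}_{2,X}) \right]$. The bounded-difference estimates $\bigl|\hat{\mathcal{R}}_{n-1,S_i}(\mathcal{F}_{1,x}) - \hat{\mathcal{R}}_{n-1,S}(\mathcal{F}_{1,x})\bigr| \leq 2Mn^{-1}$ and $\bigl|\mathcal{R}_{S_i}(\mathcal{F}_{2,x}) - \mathcal{R}_S(\mathcal{F}_{2,x})\bigr| \leq 2Mn^{-1}$ hold for every fixed $x$ just as in the Lemma, so taking $\mathbb{E}_X$ preserves the constant $4Mn^{-1}$ and hence the same $\epsilon$, while $\mathbb{E}_S$ of this map equals $\mathbb{E}_X[\mathcal{R}_{n-1}(\mathcal{F}_{1,X}) + \mathcal{R}_n(\mathcal{F}_{2,X})]$ by Fubini. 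A union bound over the two McDiarmid events, again at the cost of rescaling $\delta$, converts the expected sliced complexities in the first pair of inequalities into their empirical counterparts with the extra $4M(\tfrac{2}{n}\log\tfrac{2}{\delta})^{1/2}$ of slack, giving the second pair. The symmetry claim is immediate: if $f(x, x') = f(x', x)$ for all $f \in \mathcal{F}$ and $x, x' \in \mathcal{X}$, then fixing either argument of $f$ to $x$ produces the same family of univariate functions, so $\mathcal{F}_{1,x} = \mathcal{F}_{2,x}$. I expect no substantial obstacle here; the only points needing care are the interchange $\sup_f E_X \leq E_X \sup_f$ together with Fubini in the $\phi^{(2)}$ step and the verification that inserting $\mathbb{E}_X$ leaves the McDiarmid bounded-difference constants untouched, both of which are routine bookkeeping.
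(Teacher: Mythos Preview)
Your proposal is correct and follows essentially the same route as the paper's own proof: the paper also observes that the sharper bounds $\mathbb{E}_S\phi^{(1)}(S)\le 2\,\mathbb{E}_X\mathcal{R}_{n-1}(\mathcal{F}_{1,X})$ and $\mathbb{E}_S\phi^{(2)}(S)\le 2\,\mathbb{E}_X\mathcal{R}_n(\mathcal{F}_{2,X})$ were already implicit in the proof of Lemma~\ref{Exp to Emp} (the latter via exactly the $\sup_f E_X\le E_X\sup_f$ step you describe), inserts them into the McDiarmid conclusion, and then re-applies McDiarmid to $S\mapsto\mathbb{E}_X\bigl[\hat{\mathcal{R}}_{n-1,S}(\mathcal{F}_{1,X})+\mathcal{R}_S(\mathcal{F}_{2,X})\bigr]$ with the same bounded-difference constant $4Mn^{-1}$ to obtain the empirical versions. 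Your write-up is, if anything, slightly more explicit about the Fubini/interchange bookkeeping than the paper's.
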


\begin{proof}

	This follows directly from the proof of Lemma \ref{Exp to Emp} in which we show that \\
	$\mathbb{E}_S \phi^{(1)}(S)
		\leq 2 \mathbb{E}_X \mathbb{E}_{S_{-n}, \epsilon}
			\sup_{f \in \mathcal{F}} \frac{1}{ n - 1 } \sum_{i = 1}^{n - 1}
				\epsilon_i f \left( X, x_i \right)
		= 2 \mathbb{E}_X \mathcal{R}_{n - 1} \left( \mathcal{F}_{1, X} \right)$ \\
	and
	$\mathbb{E}_S \phi^{(2)}(S)
		\leq 2 \mathbb{E}_{S, \epsilon} \sup_{f \in \mathcal{F}}
				\frac{1}{n} \sum_{i = 1}^n
					\epsilon_i E_X f \left( x_i, X \right)
		\leq 2 \mathbb{E}_X \mathbb{E}_{S, \epsilon} \sup_{f \in \mathcal{F}}
				\frac{1}{n} \sum_{i = 1}^n
					\epsilon_i f \left( x_i, X \right) \\
		= 2 \mathbb{E}_X \mathcal{R}_n \left( \mathcal{F}_{2, x} \right)$.
	Using these sharper bounds in the expressions (obtained from McDiarmid's inequality) in Lemma 3 (and using $2 \delta$ in place of $\delta$) yields the first pair of equations.
		
	In order to obtain the second pair of expressions, we again need to apply McDiarmid's inequality.  Using results from the proof of lemma \ref{Exp to Emp},
\begin{align*}
	& \hspace{-6 em}
		\left|
			\mathbb{E}_X
				\left(
						\hat{\mathcal{R}}_{n - 1, S_i} \left( \mathcal{F}_{1, X} \right)
							+ \mathcal{R}_{S_i} \left( \mathcal{F}_{2, X} \right)
				\right)
			- \mathbb{E}_X
				\left(
						\hat{\mathcal{R}}_{n - 1, S} \left( \mathcal{F}_{1, X} \right)
							+ \mathcal{R}_S \left( \mathcal{F}_{2, X} \right)
				\right)
		\right| \\
		&= \left|
				\mathbb{E}_X
					\left(
						\hat{\mathcal{R}}_{n - 1, S_i} \left( \mathcal{F}_{1, X} \right)
							- \hat{\mathcal{R}}_{n - 1, S} \left( \mathcal{F}_{1, X} \right)
					\right)
			+ \mathbb{E}_X
				\left(
					\mathcal{R}_{S_i} \left( \mathcal{F}_{2, X} \right)
						- \mathcal{R}_S \left( \mathcal{F}_{2, X} \right)
				\right)
		\right| \\
		&\leq \mathbb{E}_X
				\left|
					\hat{\mathcal{R}}_{n - 1, S_i} \left( \mathcal{F}_{1, X} \right)
						- \hat{\mathcal{R}}_{n - 1, S} \left( \mathcal{F}_{1, X} \right)
				\right|
			+ \mathbb{E}_X
				\left|
					\mathcal{R}_{S_i} \left( \mathcal{F}_{2, X} \right)
						- \mathcal{R}_S \left( \mathcal{F}_{2, X} \right)
				\right| \\
		&= \mathbb{E}_X 2 M n^{-1} + \mathbb{E}_X 2 M n^{-1}
		= 2 M n^{-1} + 2 M n^{-1} \\
		&= 4 M n^{-1}
\end{align*}
	Since this is the same value of $c_i$ we obtained in lemma \ref{Exp to Emp}, McDiarmid's Inequality holds for $\epsilon = 2 M \left( \frac{2}{n} \log \frac{2}{\delta} \right)^\frac{1}{2}$, so that, with probability at least $1 - \frac{\delta}{2}$,
	$\mathbb{E}_X \left(
			\mathcal{R}_{n - 1} \left( \mathcal{F}_{1, X} \right)
			+ \mathcal{R}_n \left( \mathcal{F}_{2, X} \right)
	\right)
		\leq \mathbb{E}_X \left(
				\hat{\mathcal{R}}_{n - 1, S} \left( \mathcal{F}_{1, X} \right)
					+ \mathcal{R}_S \left( \mathcal{F}_{2, X} \right)
	\right)
		+ 2 M \left( \frac{2}{n} \log \frac{2}{\delta} \right)^\frac{1}{2}$.
	Then, the second pair of expressions is obtained by replacing $\delta$ by $\frac{\delta}{2}$ in the first pair of expressions and combining the result with the above inequality.  Since each of these expressions hold with probability at least $1 - \frac{\delta}{2}$, their combination will hold with probability at least $1 - 2 \cdot \frac{\delta}{2} = 1 - \delta$, as claimed.
	
	If the elements of $\mathcal{F}$ is symmetric, so that
		$\forall_{ f \in \mathcal{F}, x, x' \in \mathcal{X} } f \left( x, x' \right) = f \left( x', x \right)$,
		if $g \in \mathcal{F}_{1, x}$ then
		$\exists_{ f \in \mathcal{F} } \forall_{ x' \in \mathcal{X} }
					\left[ g_x \left( x' \right) = f \left( x, x' \right) = f \left( x', x \right) \right]$,
		so $g \in \mathcal{F}_{2, x}$ as well and $\mathcal{F}_{1, x} \subseteq \mathcal{F}_{2, x}$.
		Likewise, if $ g \in \mathcal{F}_{2, x}$
		$ \exists_{ f \in \mathcal{F} } \forall_{ x' \in \mathcal{X} }
					\left[ g_x \left( x' \right) = f \left( x', x \right) = f \left( x, x' \right) \right]$,
		so $g \in \mathcal{F}_{1, x}$ as well and $\mathcal{F}_{2, x} \subseteq \mathcal{F}_{1, x}$.
		Thus, $\mathcal{F}_{1, x} = \mathcal{F}_{2, x}$.
		
\end{proof}

\begin{lemma} \label{R-Comp}
	
	Let $h \in \mathcal{H} : \mathcal{A} \times \mathcal{W} \times \mathcal{X} \to [ - M, M ]$
	such that if $h \in \mathcal{H}$, $-h \in \mathcal{H}$,
	$\mathcal{Y} \subseteq [ - M, M ]$,
	$k: \left( \mathcal{A} \times \mathcal{Z} \times \mathcal{X} \right)^2 \to \left[ - M_k, M_k \right]$,
	$\forall_{ a, a' \in \mathcal{A}, x, x' \in \mathcal{X}, z, z' \in \mathcal{Z} }
	\left(
		k \left( \left( a, x, z \right), \left( a', x', z' \right) \right)
			= k \left( \left( a', x', z' \right), \left( a, x, z \right) \right)
	\right)$,
	and $\Xi = ( A, W, X, Y, Z )$.
	Additionally, let
$f_{h, k} \left( \left( a, w, x, y, z \right), \left( a', w', x', y', z' \right) \right)
		= \left( y - h \left( a, w, x \right) \right) \\
			\times \left( y' - h \left( a', w', x' \right) \right)
			k \left( \left( a, x, z \right), \left( a', x', z' \right) \right)$
	  Then,

\begin{align*}
	\mathbb{E}_\Xi \mathcal{R}_n \left( \mathcal{F}_{1, \Xi} \right)
		&= \mathbb{E}_\Xi \mathcal{R}_n \left( \mathcal{F}_{2, \Xi} \right) \\
		&\leq 2 M \mathbb{E}_{A, X, Z, S, \epsilon} \sup_{ h \in \mathcal{H} }
					\frac{1}{n} \sum_{i = 1}^n
						\epsilon_i h \left( a_i, w_i, x_i \right)
						k \left( \left( a_i, x_i, z_i \right), \left( A, X, Z \right) \right) \\
			&\qquad+ \left( 2 \log 2 \right)^\frac{1}{2} M^2 M_k n^{ - \frac{1}{2} } \\
		&= 2 M \mathbb{E}_{A, X, Z} \mathcal{R}_n \left( \mathcal{F}'_{A, X, Z} \right)
				+ \left( 2 \log 2 \right)^\frac{1}{2} M^2 M_k n^{ - \frac{1}{2} } \\
		&\leq 2 M \mathbb{E}_{S, \epsilon}
				\sup_{ h \in \mathcal{H}, a \in \mathcal{A}, x \in \mathcal{X}, z \in \mathcal{Z} }
					\frac{1}{n} \sum_{i = 1}^n
						\epsilon_i h \left( a_i, w_i, x_i \right)
						k \left( \left( a_i, x_i, z_i \right), \left( a, x, z \right) \right) \\
			&\qquad+ \left( 2 \log 2 \right)^\frac{1}{2} M^2 M_k n^{ - \frac{1}{2} } \\
		&= 2 M \mathcal{R}_n \left( \mathcal{F}' \right)
				+ \left( 2 \log 2 \right)^\frac{1}{2} M^2 M_k n^{ - \frac{1}{2} } \\
\end{align*}
{\footnotesize
\begin{align*}
	&\mathcal{F}'_{a, x, z}
		= \left\{ f_{a, x, z} \ \middle| \
			\exists_{ h \in \mathcal{H} }
			\forall_{ a' \in \mathcal{A}, x' \in \mathcal{X}, z' \in \mathcal{Z} }
				f_{a, x, z} \left( a', w', x', z' \right)
					= h \left( a', w', x' \right) k \left( \left( a', x', z' \right), \left( a, x, z \right) \right)
		\right\} \\
	&\mathcal{F}'
		= \left\{ f \ \middle| \
			\exists_{ h \in \mathcal{H}, a \in \mathcal{A}, x \in \mathcal{X}, z \in \mathcal{Z} }
			\forall_{ a' \in \mathcal{A}, x' \in \mathcal{X}, z' \in \mathcal{Z} }
				f \left( a', w', x', z' \right)
					= h \left( a', w', x' \right) k \left( \left( a', x', z' \right), \left( a, x, z \right) \right)
		\right\}
\end{align*}
}

\end{lemma}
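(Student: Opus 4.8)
I would prove the displayed chain by walking down it, at each step trading the Rademacher complexity of the ``outer product'' summand class $\mathcal{F}$ (whose members are the $f_{h,k}$) for that of the simpler class $\mathcal{H}\cdot k$ that defines $\mathcal{F}'$. The first equality is pure symmetry: $k$ is symmetric in its two arguments by hypothesis and $(y-h(a,w,x))(y'-h(a',w',x'))$ is visibly symmetric under swapping the two tuples, so every $f_{h,k}$ is a symmetric function; hence, by the symmetric case of Corollary~\ref{Imp Ineq}, $\mathcal{F}_{1,\xi}=\mathcal{F}_{2,\xi}$ for every $\xi$, so $\mathcal{R}_n(\mathcal{F}_{1,\Xi})=\mathcal{R}_n(\mathcal{F}_{2,\Xi})$ for every realization of $\Xi$, and the equality survives $\mathbb{E}_\Xi$. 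For the first inequality I would fix $\xi=(a,w,x,y,z)$ and unfold $\mathcal{R}_n(\mathcal{F}_{1,\xi})$: writing $h_i=h(a_i,w_i,x_i)$ and $\kappa_i=k((a,x,z),(a_i,x_i,z_i))$, a member of $\mathcal{F}_{1,\xi}$ evaluated on the $i$-th sample point is $(y-h(a,w,x))(y_i-h_i)\kappa_i$, so with $r(h):=y-h(a,w,x)$, $c:=\frac1n\sum_i\epsilon_i y_i\kappa_i$ (which does not depend on $h$) and $D(h):=\frac1n\sum_i\epsilon_i h_i\kappa_i$ we have $\mathcal{R}_n(\mathcal{F}_{1,\xi})=\mathbb{E}_{S,\epsilon}\sup_h r(h)(c-D(h))\le \mathbb{E}_{S,\epsilon}\sup_h r(h)c+\mathbb{E}_{S,\epsilon}\sup_h(-r(h)D(h))$, and I would bound the two pieces separately.

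For the second piece, $|r(h)|\le|y|+|h|\le 2M$ together with the assumption $h\in\mathcal{H}\Rightarrow-h\in\mathcal{H}$ (so $D(-h)=-D(h)$, hence $\sup_h|D(h)|=\sup_h D(h)$) give $\sup_h(-r(h)D(h))\le 2M\sup_h D(h)$, so this piece contributes $2M\,\mathbb{E}_{S,\epsilon}\sup_h\frac1n\sum_i\epsilon_i h_i\kappa_i$, which will become the leading Rademacher term after integrating over $\Xi$. The first piece is the one that produces the exact constant $(2\log2)^{1/2}M^2M_k n^{-1/2}$, and the key point is that after the $\epsilon$-average only the $M$-bounded part of $r(h)$, not its crude $2M$ bound, survives. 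For fixed $c$, symmetry of $\mathcal{H}$ gives $\sup_h r(h)=y+\sup_h h(a,w,x)$ and $\inf_h r(h)=y-\sup_h h(a,w,x)$, so $\sup_h r(h)c=cy+|c|\sup_h h(a,w,x)$; taking $\mathbb{E}_{S,\epsilon}$, the $cy$ term drops because $\mathbb{E}_\epsilon c=0$, and $\sup_h h(a,w,x)\le M$, so $\mathbb{E}_{S,\epsilon}\sup_h r(h)c\le M\,\mathbb{E}_{S,\epsilon}|c|$. Conditionally on $S$, $c$ is a mean-zero Rademacher sum with variance proxy $\frac1{n^2}\sum_i y_i^2\kappa_i^2\le M^2M_k^2/n$, so the sharp sub-Gaussian first-moment bound $\mathbb{E}|Z|\le\inf_{\lambda>0}\lambda^{-1}\log\!\big(2e^{\lambda^2\sigma^2/2}\big)=\sigma(2\log2)^{1/2}$ yields $\mathbb{E}_\epsilon|c|\le(2\log2)^{1/2}M M_k n^{-1/2}$, hence $\mathbb{E}_{S,\epsilon}\sup_h r(h)c\le(2\log2)^{1/2}M^2M_k n^{-1/2}$.

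Adding the two bounds, $\mathcal{R}_n(\mathcal{F}_{1,\xi})\le 2M\,\mathbb{E}_{S,\epsilon}\sup_h\frac1n\sum_i\epsilon_i h_i\,k((a,x,z),(a_i,x_i,z_i))+(2\log2)^{1/2}M^2M_k n^{-1/2}$, whose right side depends on $\xi$ only through $(a,x,z)$; integrating over $\Xi$ and identifying $\mathbb{E}_{S,\epsilon}\sup_h\frac1n\sum_i\epsilon_i h_i\,k((a_i,x_i,z_i),(A,X,Z))$ with $\mathcal{R}_n(\mathcal{F}'_{A,X,Z})$ (its definition, using symmetry of $k$ to match the argument order) gives the third line. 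The last line follows from monotonicity of Rademacher complexity, since $\mathcal{F}'_{a,x,z}\subseteq\mathcal{F}'$ for every $(a,x,z)$, so the pointwise $\sup_h$ in $\mathcal{R}_n(\mathcal{F}'_{A,X,Z})$ is dominated by the joint $\sup_{h,a,x,z}$ in $\mathcal{R}_n(\mathcal{F}')$.

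I expect the only real obstacle to be the noise-term step: a naive $|r(h)|\le 2M$ applied before forming $\sup_h r(h)c$ costs a spurious factor of two, so one must use both that the $y$-part of $r(h)$ cancels against the mean-zero $c$ and the sharp (rather than Cauchy--Schwarz) sub-Gaussian moment inequality for $c$, exactly as above; everything else is bookkeeping with the symmetry of $\mathcal{H}$ (to erase absolute values) and with the definitions of the auxiliary classes.
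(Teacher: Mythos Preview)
Your proposal is correct and follows essentially the same route as the paper. The only cosmetic differences are (i) the paper expands the product $(y_i-h_i)(Y-h(A,W,X))$ into four cross terms and bounds each separately (the $Y\!y_i$ term vanishes, the $Y h_i$ and $h\,h_i$ terms each contribute an $M\cdot\mathcal{R}_n(\mathcal{F}'_{A,X,Z})$, and the $h\,y_i$ term supplies the $(2\log 2)^{1/2}M^2M_k n^{-1/2}$), whereas you group them into the two pieces $r(h)c$ and $-r(h)D(h)$; and (ii) for the noise term the paper invokes Massart's finite-class lemma on the two-element class $\{-1,+1\}$, which is exactly your sub-Gaussian first-moment bound $\mathbb{E}|Z|\le\sigma\sqrt{2\log 2}$ specialized to $N=2$. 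The constants and the final inequalities coincide.
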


\begin{proof}
	
	Let
	$\mathcal{F}
		= \left\{ g \ \middle| \ \exists_{ h \in \mathcal{H} } \left( g =  f_{h, k} \right) \right\}$.
	Since, for all $h \in \mathcal{H}$, $f_{h, k}$ is manifestly symmetric, we have $\mathcal{F}_1 = \mathcal{F}_2$ and $\mathcal{F}_{1, \xi} = \mathcal{F}_{2, \xi}$, where these classes are defined in lemma \ref{Exp to Emp} and corollary \ref{Imp Ineq}, respectively.  We now compute
	$\mathbb{E}_\Xi \mathcal{R}_n \left( \mathcal{F}_{1, \Xi} \right)
		= \mathbb{E}_\Xi \mathcal{R}_n \left( \mathcal{F}_{2, \Xi} \right)$.
	
\begin{align*}
	\mathbb{E}_\Xi & \mathcal{R}_n \left( \mathcal{F}_{1, \Xi} \right)
		= \mathbb{E}_\Xi \mathcal{R}_n \left( \mathcal{F}_{2, \Xi} \right)
		= \mathbb{E}_\Xi \mathbb{E}_{S, \epsilon} \sup_{ f \in \mathcal{F} }
				\frac{1}{n} \sum_{i = 1}^n
					\epsilon_i f \left( x_i, \Xi \right) \\
		&= \mathbb{E}_{A, W, X, Y, Z, S, \epsilon} \sup_{ f \in \mathcal{F} }
				\frac{1}{n} \sum_{i = 1}^n \epsilon_i
					f \left( \left( a_i, w_i, x_i, y_i, z_i \right), \left( A, W, X, Y, Z \right) \right) \\
		&= \mathbb{E}_{A, W, X, Y, Z, S, \epsilon} \sup_{ h \in \mathcal{H} }
				\frac{1}{n} \sum_{i = 1}^n \epsilon_i
					\left( y_i - h \left( a_i, w_i, x_i \right) \right)
						\left( Y - h \left( A, W, X \right) \right) \\
				&\hspace{12em} \times k \left( \left( a_i, x_i, z_i \right), \left( A, X, Z \right) \right) \\
		&= \mathbb{E}_{A, W, X, Y, Z, S, \epsilon} \sup_{ h \in \mathcal{H} }
				\left( Y - h \left( A, W, X \right) \right)
				\cdot \frac{1}{n} \sum_{i = 1}^n \epsilon_i
						\left( y_i - h \left( a_i, w_i, x_i \right) \right) \\
				&\hspace{20em} \times k \left( \left( a_i, x_i, z_i \right), \left( A, X, Z \right) \right) \\
		&\leq \mathbb{E}_{A, W, X, Y, Z, S, \epsilon} \sup_{ h \in \mathcal{H} }
				Y \cdot \frac{1}{n} \sum_{i = 1}^n
						\epsilon_i y_i
						k \left( \left( a_i, x_i, z_i \right), \left( A, X, Z \right) \right) \\
			&\quad+ \mathbb{E}_{A, W, X, Y, Z, S, \epsilon} \sup_{ h \in \mathcal{H} }
					- Y \cdot \frac{1}{n} \sum_{i = 1}^n
							\epsilon_i h \left( a_i, w_i, x_i \right)
							k \left( \left( a_i, x_i, z_i \right), \left( A, X, Z \right) \right) \\
			&\quad+ \mathbb{E}_{A, W, X, Y, Z, S, \epsilon} \sup_{ h \in \mathcal{H} }
					- h \left( A, W, X \right)
						\cdot \frac{1}{n} \sum_{i = 1}^n
								\epsilon_i y_i
								k \left( \left( a_i, x_i, z_i \right), \left( A, X, Z \right) \right) \\
			&\quad+ \mathbb{E}_{A, W, X, Y, Z, S, \epsilon} \sup_{ h \in \mathcal{H} }
					h \left( A, W, X \right)
						\cdot \frac{1}{n} \sum_{i = 1}^n
							\epsilon_i h \left( a_i, w_i, x_i \right)
							k \left( \left( a_i, x_i, z_i \right), \left( A, X, Z \right) \right) \\
\end{align*}

	We analyze each of these four terms separately.

\begin{align*}
	\mathbb{E}_{A, W, X, Y, Z, S, \epsilon} & \sup_{ h \in \mathcal{H} }
		Y \cdot \frac{1}{n} \sum_{i = 1}^n
				\epsilon_i y_i
				k \left( \left( a_i, x_i, z_i \right), \left( A, X, Z \right) \right) \\
		&= \mathbb{E}_{A, X, Y, Z, S, \epsilon}
			Y \cdot \frac{1}{n} \sum_{i = 1}^n
				\epsilon_i y_i
				k \left( \left( a_i, x_i, z_i \right), \left( A, X, Z \right) \right) \\
		&= \mathbb{E}_{A, X, Y, Z, S}
			Y \cdot \frac{1}{n} \sum_{i = 1}^n
				\mathbb{E}_\epsilon \epsilon_i y_i
				k \left( \left( a_i, x_i, z_i \right), \left( A, X, Z \right) \right) \\
		&= \mathbb{E}_{A, X, Y, Z, S}
			Y \cdot \frac{1}{n} \sum_{i = 1}^n
				0 \cdot y_i
				k \left( \left( a_i, x_i, z_i \right), \left( A, X, Z \right) \right)
		= \mathbb{E}_Y Y \cdot 0 \\
		&= 0
\end{align*}

\begin{align*}
	\mathbb{E}_{A, W, X, Y, Z, S, \epsilon} & \sup_{ h \in \mathcal{H} }
		- Y \cdot \frac{1}{n} \sum_{i = 1}^n
				\epsilon_i h \left( a_i, w_i, x_i \right)
				k \left( \left( a_i, x_i, z_i \right), \left( A, X, Z \right) \right) \\
	&\leq \mathbb{E}_{A, X, Y, Z, S, \epsilon} \sup_{ h \in \mathcal{H} }
		\left| Y \right|
		\left|
			\frac{1}{n} \sum_{i = 1}^n
				\epsilon_i h \left( a_i, w_i, x_i \right)
				k \left( \left( a_i, x_i, z_i \right), \left( A, X, Z \right) \right)
		\right| \\
	&\leq \mathbb{E}_{A, X, Z, S, \epsilon} \sup_{ h \in \mathcal{H} }
		M \left|
				\frac{1}{n} \sum_{i = 1}^n
					\epsilon_i h \left( a_i, w_i, x_i \right)
					k \left( \left( a_i, x_i, z_i \right), \left( A, X, Z \right) \right)
		\right| \\
	&= M \mathbb{E}_{A, X, Z, S, \epsilon} \sup_{ h \in \mathcal{H} }
			\frac{1}{n} \sum_{i = 1}^n
				\epsilon_i h \left( a_i, w_i, x_i \right)
				k \left( \left( a_i, x_i, z_i \right), \left( A, X, Z \right) \right)
\end{align*}
	where the final equality is due to the fact that $h \in \mathcal{H}$ if and only if $-h \in \mathcal{H}$, so that the supremum of the sum will be equal to the supremum of its absolute value.

\begin{align*}
	&\hspace{-2em}
	    \mathbb{E}_{A, W, X, Y, Z, S, \epsilon} \sup_{ h \in \mathcal{H} }
			h \left( A, W, X \right)
				\cdot \frac{1}{n} \sum_{i = 1}^n
					\epsilon_i h \left( a_i, w_i, x_i \right)
					k \left( \left( a_i, x_i, z_i \right), \left( A, X, Z \right) \right) \\
	&\leq \mathbb{E}_{A, W, X, Z, S, \epsilon} \sup_{ h, h' \in \mathcal{H} }
			h' \left( A, W, X \right)
				\cdot \frac{1}{n} \sum_{i = 1}^n
					\epsilon_i h \left( a_i, w_i, x_i \right)
					k \left( \left( a_i, x_i, z_i \right), \left( A, X, Z \right) \right) \\
	&\leq \mathbb{E}_{A, W, X, Z, S, \epsilon} \sup_{ h, h' \in \mathcal{H} }
			\left| h' \left( A, W, X \right) \right|
			\left|
				\frac{1}{n} \sum_{i = 1}^n
					\epsilon_i h \left( a_i, w_i, x_i \right)
					k \left( \left( a_i, x_i, z_i \right), \left( A, X, Z \right) \right)
			\right| \\
	&\leq \mathbb{E}_{A, X, Z, S, \epsilon} \sup_{ h \in \mathcal{H} }
			M \left|
				\frac{1}{n} \sum_{i = 1}^n
					\epsilon_i h \left( a_i, w_i, x_i \right)
					k \left( \left( a_i, x_i, z_i \right), \left( A, X, Z \right) \right)
			\right| \\
	&= M \mathbb{E}_{A, X, Z, S, \epsilon} \sup_{ h \in \mathcal{H} }
			\frac{1}{n} \sum_{i = 1}^n
					\epsilon_i h \left( a_i, w_i, x_i \right)
					k \left( \left( a_i, x_i, z_i \right), \left( A, X, Z \right) \right)
\end{align*}
	where the final equality follows, as above, because $h \in \mathcal{H}$ if and only if $-h \in \mathcal{H}$.

\begin{align*}
\mathbb{E}_{A, W, X, Y, Z, S, \epsilon} & \sup_{ h \in \mathcal{H} }
					- h \left( A, W, X \right)
						\cdot \frac{1}{n} \sum_{i = 1}^n
								\epsilon_i y_i
								k \left( \left( a_i, x_i, z_i \right), \left( A, X, Z \right) \right) \\
	&\leq \mathbb{E}_{A, W, X, Z, S, \epsilon} \sup_{ h \in \mathcal{H} }
					\left| h \left( A, W, X \right) \right|
					\left| \frac{1}{n} \sum_{i = 1}^n
								\epsilon_i y_i
								k \left( \left( a_i, x_i, z_i \right), \left( A, X, Z \right) \right)
					\right| \\
	&\leq \mathbb{E}_{A, X, Z, S, \epsilon}
					M \left| \frac{1}{n} \sum_{i = 1}^n
								\epsilon_i y_i
								k \left( \left( a_i, x_i, z_i \right), \left( A, X, Z \right) \right)
					\right| \\
	&= M \mathbb{E}_{A, X, Z, S, \epsilon} \sup_{ h \in \{ -1, 1 \} }
					h \cdot \frac{1}{n} \sum_{i = 1}^n
								\epsilon_i y_i
								k \left( \left( a_i, x_i, z_i \right), \left( A, X, Z \right) \right) \\
	&= M \mathbb{E}_{A, X, Z, S} \mathbb{E}_\epsilon \sup_{ h \in \{ -1, 1 \} }
					\frac{1}{n} \sum_{i = 1}^n
								\epsilon_i h y_i
								k \left( \left( a_i, x_i, z_i \right), \left( A, X, Z \right) \right) \\
	&\leq M \cdot \mathbb{E}_{A, X, Z, S} M M_k \left( 2 \log 2 \right)^\frac{1}{2} n^{ - \frac{1}{2} }
	= M \cdot M M_k \left( 2 \log 2 \right)^\frac{1}{2} n^{ - \frac{1}{2} } \\
	&= \left( 2 \log 2 \right)^\frac{1}{2} M^2 M_k n^{ - \frac{1}{2} }
\end{align*}
	where the final inequality follows from Massart's Finite Lemma using $\left| y k \right| \leq M M_k$.  Combining these results gives,

\begin{align*}
	\mathbb{E}_\Xi \mathcal{R}_n \left( \mathcal{F}_{1, \Xi} \right)
		&= \mathbb{E}_\Xi \mathcal{R}_n \left( \mathcal{F}_{2, \Xi} \right) \\
		&\leq 0 + \left( 2 \log 2 \right)^\frac{1}{2} M^2 M_k n^{ - \frac{1}{2} } \\
			&\quad+ M \mathbb{E}_{A, X, Z, S, \epsilon} \sup_{ h \in \mathcal{H} }
					\frac{1}{n} \sum_{i = 1}^n
						\epsilon_i h \left( a_i, w_i, x_i \right)
						k \left( \left( a_i, x_i, z_i \right), \left( A, X, Z \right) \right) \\
			&\quad+ M \mathbb{E}_{A, X, Z, S, \epsilon} \sup_{ h \in \mathcal{H} }
					\frac{1}{n} \sum_{i = 1}^n
						\epsilon_i h \left( a_i, w_i, x_i \right)
						k \left( \left( a_i, x_i, z_i \right), \left( A, X, Z \right) \right) \\
		&= 2 M \mathbb{E}_{A, X, Z, S, \epsilon} \sup_{ h \in \mathcal{H} }
					\frac{1}{n} \sum_{i = 1}^n
						\epsilon_i h \left( a_i, w_i, x_i \right)
						k \left( \left( a_i, x_i, z_i \right), \left( A, X, Z \right) \right) \\
			&\qquad+ \left( 2 \log 2 \right)^\frac{1}{2} M^2 M_k n^{ - \frac{1}{2} } \\
		&= 2 M \mathbb{E}_{A, X, Z} \mathbb{E}_{S, \epsilon} \sup_{ h \in \mathcal{H} }
					\frac{1}{n} \sum_{i = 1}^n
						\epsilon_i h \left( a_i, w_i, x_i \right)
						k \left( \left( a_i, x_i, z_i \right), \left( A, X, Z \right) \right) \\
			&\qquad+ \left( 2 \log 2 \right)^\frac{1}{2} M^2 M_k n^{ - \frac{1}{2} } \\
		&= 2 M \mathbb{E}_{A, X, Z} \mathbb{E}_{S, \epsilon}
				\sup_{ f_{A, X, Z} \in \mathcal{F}'_{A, X, Z} }
					\frac{1}{n} \sum_{i = 1}^n
						\epsilon_i f_{A, X, Z} \left( a_i, w_i, x_i, z_i \right) \\
			&\qquad+ \left( 2 \log 2 \right)^\frac{1}{2} M^2 M_k n^{ - \frac{1}{2} } \\
		&= 2 M \mathbb{E}_{A, X, Z} \mathcal{R}_n \left( \mathcal{F}'_{A, X, Z} \right)
				+ \left( 2 \log 2 \right)^\frac{1}{2} M^2 M_k n^{ - \frac{1}{2} } \\
		&\leq 2 M \mathbb{E}_{S, \epsilon}
				\sup_{ h \in \mathcal{H}, a \in \mathcal{A}, x \in \mathcal{X}, z \in \mathcal{Z} }
					\frac{1}{n} \sum_{i = 1}^n
						\epsilon_i h \left( a_i, w_i, x_i \right)
						k \left( \left( a_i, x_i, z_i \right), \left( a, x, z \right) \right) \\
			&\qquad+ \left( 2 \log 2 \right)^\frac{1}{2} M^2 M_k n^{ - \frac{1}{2} } \\
		&= 2 M \mathbb{E}_{S, \epsilon}
				\sup_{ f \in \mathcal{F}' }
					\frac{1}{n} \sum_{i = 1}^n
						\epsilon_i f \left( a_i, w_i, x_i, z_i \right)
			+ \left( 2 \log 2 \right)^\frac{1}{2} M^2 M_k n^{ - \frac{1}{2} } \\
		&= 2 M \mathcal{R}_n \left( \mathcal{F}' \right)
				+ \left( 2 \log 2 \right)^\frac{1}{2} M^2 M_k n^{ - \frac{1}{2} } \\
\end{align*}
{\footnotesize
\begin{align*}
	&\mathcal{F}'_{a, x, z}
		= \left\{ f_{a, x, z} \ \middle| \
			\exists_{h \in \mathcal{H}}
			\forall_{ a' \in \mathcal{A}, x' \in \mathcal{X}, z' \in \mathcal{Z} }
				f_{a, x, z} \left( a', w', x', z' \right)
					= h \left( a', w', x' \right) k \left( \left( a', x', z' \right), \left( a, x, z \right) \right)
		\right\} \\
	&\mathcal{F}'
		= \left\{ f \ \middle| \
			\exists_{ h \in \mathcal{H}, a \in \mathcal{A}, x \in \mathcal{X}, z \in \mathcal{Z} }
			\forall_{ a' \in \mathcal{A}, x' \in \mathcal{X}, z' \in \mathcal{Z} }
				f \left( a', w', x', z' \right)
					= h \left( a', w', x' \right) k \left( \left( a', x', z' \right), \left( a, x, z \right) \right)
		\right\}
\end{align*}
}

\end{proof}

\begin{lemma} \label{Inner Product}

	Let $\mathcal{X}$ be a measurable space, $\mu$ be a $\sigma$-finite measure on $\mathcal{X}$, $\mathcal{F}_0$ be a collection of $\mu$-measurable functions, with $f \in \mathcal{F}_0 : \mathcal{X} \to \mathbb{R}$, $k : \mathcal{X}^2 \to \mathbb{R}$ be symmetric and measurable with respect to the product measure ($\mu \times \mu$), and $\mathcal{F}$ be the quotient space of $\mathcal{F}_0$ in which functions are identified if they are equal $\mu$-almost everywhere.  For $f, g \in \mathcal{F}_0$, define the bilinear form $\langle f, g \rangle_k = \int f(x) k(x, y) g(y) d \mu(x, y)$, where $\mu$ is the product measure.  Then, $\langle \rangle_k$ is an inner product on $\mathcal{F}$ if and only if $k$ is an Integrally Strictly Positive Definite (ISPD) kernel, so that, for all $f \in \mathcal{F}_0$ such that $f \neq 0$ $\mu$-almost everywhere, $\int f(x) k(x, y) f(y) d \mu(x, y) > 0$.  Further, if $k$ is ISPD, then it defines a metric on $\mathcal{F}$ by $d_k(f, g) = \| f - g \|_k = \langle f - g, f -g \rangle_k^\frac{1}{2}$.

\end{lemma}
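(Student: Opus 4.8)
The plan is to verify directly that $\langle \cdot, \cdot \rangle_k$ satisfies the axioms of an inner product on the quotient space $\mathcal{F}$ precisely when $k$ is ISPD, and then to invoke the standard construction of a metric from an inner product for the final clause.

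First I would check that the bilinear form descends to $\mathcal{F}$, i.e.\ is well defined on equivalence classes: if $f = f'$ and $g = g'$ $\mu$-almost everywhere, then $\{(x,y) : f(x) \neq f'(x)\}$ and $\{(x,y): g(y) \neq g'(y)\}$ are $(\mu \times \mu)$-null (each is a null set crossed with $\mathcal{X}$), so the integrands $f(x)k(x,y)g(y)$ and $f'(x)k(x,y)g'(y)$ agree $(\mu\times\mu)$-a.e.\ and the integrals coincide (here I take as implicit, as the statement does, that the relevant integrals are finite so that the form is real-valued). Bilinearity is immediate from linearity of the integral, and symmetry $\langle f, g\rangle_k = \langle g, f\rangle_k$ follows by relabelling the two integration variables --- permissible because the product measure is symmetric --- combined with the hypothesis $k(x,y) = k(y,x)$.

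Then the only remaining axiom is positive definiteness, and this is exactly where the ISPD hypothesis enters. For the ``if'' direction, suppose $k$ is ISPD. Since the ISPD condition is stated as a biconditional --- $\int f(x)k(x,y)f(y)\,d\mu(x,y) > 0$ if and only if $f \neq 0$ $\mu$-a.e.\ --- its contrapositive shows the integral is $\leq 0$ only when $f = 0$ $\mu$-a.e., in which case the integrand vanishes a.e.\ and the integral is exactly $0$; hence $\langle f, f\rangle_k \geq 0$ for every $f$, with equality precisely when $f$ is the zero element of $\mathcal{F}$. Together with the earlier axioms this makes $\langle \cdot, \cdot\rangle_k$ an inner product. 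Conversely, if $\langle \cdot,\cdot\rangle_k$ is an inner product on $\mathcal{F}$, then by definition $\langle f, f \rangle_k > 0$ for every nonzero element of $\mathcal{F}$, i.e.\ whenever $f \neq 0$ $\mu$-a.e., while for $f = 0$ $\mu$-a.e.\ the integral is $0$ and hence not positive; this is exactly the ISPD biconditional, so $k$ is ISPD.

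For the last assertion I would recall the textbook fact that any inner product $\langle \cdot,\cdot\rangle_k$ induces a norm $\|f\|_k = \langle f,f\rangle_k^{1/2}$ and that $d_k(f,g) = \|f - g\|_k$ is a metric: non-negativity and the identity of indiscernibles are the positive definiteness just established (so $d_k(f,g) = 0$ iff $f = g$ in $\mathcal{F}$, which is the correct notion of equality on the quotient), symmetry is $\|f - g\|_k = \|g - f\|_k$, and the triangle inequality follows from the Cauchy--Schwarz inequality $|\langle u, v\rangle_k| \leq \|u\|_k \|v\|_k$, which itself is a purely formal consequence of the inner-product axioms (expand $\langle u - t v, u - tv\rangle_k \geq 0$ and optimize over $t \in \mathbb{R}$). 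I expect the only genuinely delicate point to be the measure-theoretic bookkeeping --- confirming finiteness of the double integral on $\mathcal{F}_0$, and that Cauchy--Schwarz, which we need for the triangle inequality, is applicable in this possibly non-complete setting --- but since Cauchy--Schwarz uses only the axioms already verified, this reduces to the same well-definedness/finiteness check, and everything else is routine.
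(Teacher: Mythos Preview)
Your proposal is correct and follows essentially the same route as the paper: verify bilinearity and symmetry directly, check that the form descends to the quotient, identify positive definiteness with the ISPD condition, and then pass from inner product to norm to metric in the standard way. The only cosmetic difference is that the paper establishes well-definedness by bounding $|\langle f,g\rangle_k - \langle f',g\rangle_k|$ via Tonelli, whereas you argue the integrands agree $(\mu\times\mu)$-a.e.; both are valid and neither adds anything the other lacks.
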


\begin{proof}

	For $f, g \in \mathcal{F}_0$,

\begin{align*}
	\langle c f + g, h \rangle_k
		&= \int ( c f + g )(x) k(x, y) h(y) d \mu(x, y) \\
		&= c \int f(x) k(x, y) h(y) d \mu(x, y) + \int g(x) k(x, y) h(y) d \mu(x, y) \\
		&= c \langle f, h \rangle_k + \langle g, h \rangle_k \\
\end{align*}
\begin{align*}
	\langle f, g \rangle_k
		&= \int f(x) k(x, y) g(y) d \mu(x, y)
		= \int g(y) k(y, x) f(x) d \mu(x, y) \\
		&= \int g(x) k(x, y) f(y) d \mu(x, y) \\
		&= \langle g, f \rangle_k \\
\end{align*}
	so $\langle \rangle_k$ is a bilinear form, as claimed.

	To see that $\langle \rangle_k$ is well defined on $\mathcal{F}$.  Note, that, if $f = f'$ a.e., then

\begin{align*}
	\left| \langle f, g \rangle_k - \langle f', g \rangle_k \right|
		&= \left| \int f(x) k(x, y) g(y) d \mu(x, y) - \int f'(x) k(x, y) g(y) d \mu(x, y) \right| \\
		&\leq \int \left| f(x) - f'(x) \right| \left| k(x, y) \right| \left| g(y) \right| d \mu(x, y) \\
		&= \int \left| \left( f - f' \right)(x) \right| \left| k(x, y) \right| \left| g(y) \right| d \mu(x, y) \\
		&= \int \int \left| \left( f - f' \right)(x) \right| \left| k(x, y) \right| \left| g(y) \right| d \mu(x) d \mu(y)
		= \int 0 d \mu(y) \\
		&= 0
\end{align*}
	where, in the fourth line, we use Tonelli's Theorem and the fact that $f - f' = 0$ a.e., so $\left| f' - f \right| |k| |g| = 0$ $\mu_x$-a.e. and, thus, the inner integral is $0$.  Thus, if $f = f'$ a.e., $\langle f, g \rangle_k = \langle f', g \rangle_k$.  By the symmetry of the bilinear form, if $g = g'$ a.e. $\langle f', g \rangle_k = \langle f', g' \rangle_k$, so that, if $f = f', g = g'$ a.e. $\langle f, g \rangle_k = \langle f', g' \rangle_k$, so $\langle \rangle_k$ is well defined on $\mathcal{F}$.
	
	If $k$ is also ISPD, then, for $f \neq 0$ $\mu$-almost everywhere, $\langle f, f \rangle_k = \int f(x) k(x, y) f(y) d \mu(x, y) > 0$, so that, combined with the above results, $\langle \rangle_k$ is an inner product on $\mathcal{F}$.  Conversely, if $\langle \rangle_k$ is an inner product, then, for $f \neq 0$ $\mu$-almost everywhere, $\langle f, f \rangle_k > 0$, so $k$ is ISPD, by definition.
	
	Since $\langle \rangle_k$ is an inner product, it defines a norm $\| \|_k$ on $\mathcal{F}$ by $\| f \|_k = \langle f, f \rangle_k^\frac{1}{2}$.  Let $d_k (f, g) = \| f - g \|_k$.  Since $\| \|_k$ is a norm, $d_k( f, g ) = \| f - g \|_k = |-1| \| g - f \|_k = \| g - f \|_k = d_k( g, f )$ and, if $f \neq g$ a.e., then, $d_k( f, g ) = \| f - g \|_k > 0$, while $d_k( f, f ) = \| f - f \|_k = \| 0 \|_k = 0$.  Finally, $d_k(f, h) = \| f - h \|_k = \| f - g + g - h \|_k \leq \| f - g \|_k + \| g - h \|_k = d_k( f, g ) + d_k( g, h )$ by the subadditivity of the norm, so that the triangle inequality holds and $d_k$ is a metric on $\mathcal{F}$, as claimed.

\end{proof}

{
\renewcommand{\thetheorem}{\ref{AXZ Conv}}

\begin{theorem}

	Let $\tilde{h}_k$ minimize $R_k(h)$ and $\hat{h}_{k, U, \lambda, n}$ minimize $\hat{R}_{k, U, \lambda, n}(h)$ for $h \in \mathcal{H}$, $k : ( \mathcal{A} \times \mathcal{X} \times \mathcal{Z} )^2 \to [ - M_k, M_k ]$, $\Lambda : \mathcal{H} \times \Theta_h \to [ - 0, M_\lambda ]$, and let $h^* : \mathcal{A} \times \mathcal{W} \times \mathcal{X} \to \mathbb{R}$ satisfy $\mathbb{E} \left[ Y - h^*(A, W, X) \middle| A, X, Z \right] = 0$ $\mathrm{P}_{A, X, Z}$-almost surely, where
\begin{align*}
	R_k(h)
		&= \mathbb{E} \left[
						\left( Y - h \left( A, W, X \right) \right)
						\left( Y' - h \left( A', W', X' \right) \right)
						k \left( \left( A, X, Z \right), \left( A', X', Z' \right) \right)
					\right] \\
	\hat{R}_{k, U, \lambda, n}(h)
		&= \frac{1}{ n (n - 1) } \sum_{i, j = 1, i \neq j}^n
				\left[
					\left( y_i - h \left( a_i, w_i, x_i \right) \right)
					\left( y_j - h \left( a_j, w_j, x_j \right) \right)
				\right. \\
				&\hspace{12em} \times
				\left.
					k \left( \left( a_i, x_i, z_i \right), \left( a_j, x_j, z_j \right) \right)
				\right]
					+ \lambda \Lambda[ h, \theta_h ]
\end{align*}
	Also let,
\begin{align*}
	d_k^2 \left( h, h' \right)
		&= \mathbb{E} \left[
						\left( h \left( A, W, X \right) - h' \left( A, W, X \right) \right)
						\left( h \left( A', W', X' \right) - h' \left( A', W', X' \right) \right)
						\right.\\
						&\hspace{6em} \times \left.
						k \left( \left( A, X, Z \right), \left( A', X', Z' \right) \right)
					\right]
\end{align*}
	Then, $d_k^2 \left( h^*, h \right) = R_k(h)$ and, with probability at least $1 - \delta$,
\begin{align*}
	d_k^2 \left( h^*, \hat{h}_{k, U, \lambda, n} \right)
		&\leq d_k^2 \left( h^*, \tilde{h}_k \right)
			+ \lambda M_\lambda
			+ 8 M \mathbb{E}_{A, X, Z} \left(
					\mathcal{R}_{n-1} \left( \mathcal{F}'_{A, X, Z} \right)
						+ \mathcal{R}_n \left( \mathcal{F}'_{A, X, Z} \right)
				\right) \\
			&\qquad+ 16 M^2 M_k \left( \frac{2}{n} \log \frac{2}{\delta} \right)^\frac{1}{2}
			+ 10 \left( 2 \log 2 \right)^\frac{1}{2} M^2 M_k n^{ - \frac{1}{2} } \\
		&\leq d_k^2 \left( h^*, \tilde{h}_k \right)
			+ \lambda M_\lambda
			+ 8 M \left(
					\mathcal{R}_{n-1} \left( \mathcal{F}' \right)
						+ \mathcal{R}_n \left( \mathcal{F}' \right)
				\right) \\
			&\qquad+ 16 M^2 M_k \left( \frac{2}{n} \log \frac{2}{\delta} \right)^\frac{1}{2}
			+ 10 \left( 2 \log 2 \right)^\frac{1}{2} M^2 M_k n^{ - \frac{1}{2} }
\end{align*}
	Further, if Assumption \ref{a:ISPD} holds, so $k$ is ISPD, then $d_k$ is a metric on $L^2_{\mathcal{AXZ}}$ and, if the right hand side of the inequality goes to zero as $n$ goes to infinity, \\
	$d_k \left(
			\mathbb{E} \left[ h^* \middle| A, X, Z \right]
				- \mathbb{E} \left[ \hat{h}_{k, \lambda, n} \middle| A, X, Z \right]
		\right)
		\xrightarrow{\mathrm{P}} 0$
	so
	$\mathbb{E} \left[ \hat{h}_{k, \lambda, n} \middle| A, X, Z \right]
		\xrightarrow{\mathrm{P}}
		\mathbb{E} \left[ h^* \middle| A, X, Z \right]$
	in $d_k$.  Also,
	$\left\|
			\mathbb{E} \left[ h^* \middle| A, X, Z \right]
				- \mathbb{E} \left[ \hat{h}_{k, \lambda, n} \middle| A, X, Z \right]
		\right\|_{ \mathrm{P}_{A, X, Z} }
		\xrightarrow{\mathrm{P}} 0$
	so
	$\mathbb{E} \left[ \hat{h}_{k, \lambda, n} \middle| A, X, Z \right]
		\xrightarrow{\mathrm{P}}
		\mathbb{E} \left[ h^* \middle| A, X, Z \right]$
	in $L^2 \left( \mathrm{P}_\mathcal{A, X, Z} \right)$-norm.
{\footnotesize
\begin{align*}
	&\mathcal{F}'_{a, x, z}
		= \left\{ f_{a, x, z} \ \middle| \
			\exists_{h \in \mathcal{H}}
			\forall_{ a' \in \mathcal{A}, x' \in \mathcal{X}, z' \in \mathcal{Z} }
				f_{a, x, z} \left( a', w', x', z' \right)
					= h \left( a', w', x' \right) k \left( \left( a', x', z' \right), \left( a, x, z \right) \right)
		\right\} \\
	&\mathcal{F}'
		= \left\{ f \ \middle| \
			\exists_{ h \in \mathcal{H}, a \in \mathcal{A}, x \in \mathcal{X}, z \in \mathcal{Z} }
			\forall_{ a' \in \mathcal{A}, x' \in \mathcal{X}, z' \in \mathcal{Z} }
				f \left( a', w', x', z' \right)
					= h \left( a', w', x' \right) k \left( \left( a', x', z' \right), \left( a, x, z \right) \right)
		\right\}
\end{align*}
}
\end{theorem}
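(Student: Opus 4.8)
The plan is to establish the three assertions in order: the algebraic identity $d_k^2(h^*,h)=R_k(h)$, the finite-sample bound, and the convergence conclusions under Assumption \ref{a:ISPD}. For the identity I would split $Y-h(A,W,X)=\varepsilon+\delta_h$ with $\varepsilon:=Y-h^*(A,W,X)$ and $\delta_h:=h^*(A,W,X)-h(A,W,X)$, expand $R_k(h)$ over the independent copy into $\mathbb{E}[\varepsilon\varepsilon'k]+\mathbb{E}[\varepsilon\delta_h'k]+\mathbb{E}[\delta_h\varepsilon'k]+\mathbb{E}[\delta_h\delta_h'k]$ (abbreviating $k=k((A,X,Z),(A',X',Z'))$), and observe that conditioning on $(A,X,Z)$ — and, for the two mixed terms, also on the primed block $(A',W',X',Z')$ — annihilates the first three summands, since the two copies are independent and $\mathbb{E}[\varepsilon\mid A,X,Z]=0$. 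What remains is $R_k(h)=\mathbb{E}[\delta_h\delta_h'k]=d_k^2(h^*,h)$.

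For the finite-sample bound, write $\hat h$ for $\hat h_{k,U,\lambda,n}$ and use the usual oracle decomposition: since $\hat h$ minimizes $\hat R_{k,U,\lambda,n}$ over $\mathcal{H}$,
\begin{align*}
d_k^2(h^*,\hat h)=R_k(\hat h)&\le\big(R_k(\hat h)-\hat R_{k,U,\lambda,n}(\hat h)\big)+\big(\hat R_{k,U,\lambda,n}(\tilde h_k)-R_k(\tilde h_k)\big)\\
&\quad+d_k^2(h^*,\tilde h_k),
\end{align*}
where the identity from the first step was applied to the first and third summands. Now $R_k(h)$ and $\hat R_{k,U,\lambda,n}(h)$ are exactly the population mean and the penalized $U$-statistic of the symmetric kernel $f_{h,k}$ of Lemma \ref{R-Comp}, bounded in absolute value by $4M^2M_k$ (as $|y-h|\le 2M$ and $|k|\le M_k$). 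Applying Corollary \ref{Imp Ineq} to the family $\mathcal{F}=\{f_{h,k}:h\in\mathcal{H}\}$, with its uniform bound on functions taken to be $4M^2M_k$, controls both brackets — uniformly over $h\in\mathcal{H}$, on a single event of probability at least $1-\delta$ — by a penalty $\lambda M_\lambda$ (contributed only by the $\tilde h_k$ bracket), a McDiarmid term $16M^2M_k\big(\tfrac{2}{n}\log\tfrac{2}{\delta}\big)^{1/2}$, and a Rademacher term $4\,\mathbb{E}_\Xi\big(\mathcal{R}_{n-1}(\mathcal{F}_{1,\Xi})+\mathcal{R}_n(\mathcal{F}_{2,\Xi})\big)$; symmetry of $f_{h,k}$ yields $\mathcal{F}_{1,\Xi}=\mathcal{F}_{2,\Xi}$. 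Lemma \ref{R-Comp}, invoked at sample sizes $n-1$ and $n$, bounds each $\mathbb{E}_\Xi\mathcal{R}_m(\mathcal{F}_{1,\Xi})$ by $2M\,\mathbb{E}_{A,X,Z}\mathcal{R}_m(\mathcal{F}'_{A,X,Z})+(2\log 2)^{1/2}M^2M_k\,m^{-1/2}$, and monotonicity of Rademacher complexity in the class gives $\mathbb{E}_{A,X,Z}\mathcal{R}_m(\mathcal{F}'_{A,X,Z})\le\mathcal{R}_m(\mathcal{F}')$. Substituting and collecting constants (using $(n-1)^{-1/2}\le\sqrt 2\,n^{-1/2}$) turns the Rademacher total into the coefficient $8M$ and the residual $10(2\log 2)^{1/2}M^2M_k\,n^{-1/2}$, yielding both displayed inequalities.

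For the asymptotics, note first that conditioning on $(A,X,Z,A',X',Z')$ and using independence of the two copies shows $d_k^2(h,h')=\mathbb{E}\big[g(A,X,Z)\,g(A',X',Z')\,k((A,X,Z),(A',X',Z'))\big]=\langle g,g\rangle_k$ with $g:=\mathbb{E}[h-h'\mid A,X,Z]$, so $d_k^2$ depends on $h,h'$ only through their conditional means. Under Assumption \ref{a:ISPD}, Lemma \ref{Inner Product} (applied on $\mathcal{A}\times\mathcal{X}\times\mathcal{Z}$ with measure $\mathrm{P}_{A,X,Z}$) shows $\langle\cdot,\cdot\rangle_k$ is an inner product and $d_k$ a metric on $L^2_{\mathcal{AXZ}}$; since the finite-sample bound shows $d_k^2(h^*,\hat h_{k,\lambda,n})$ is dominated by a quantity assumed to vanish, Markov's inequality gives $d_k\big(\mathbb{E}[h^*\mid A,X,Z]-\mathbb{E}[\hat h_{k,\lambda,n}\mid A,X,Z]\big)\xrightarrow{\mathrm{P}}0$, i.e.\ convergence in $d_k$. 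To upgrade to $L^2(\mathrm{P}_{A,X,Z})$-norm convergence I would use that $h\mapsto\mathbb{E}[h\mid A,X,Z]$ is a contraction into $L^2_{\mathcal{AXZ}}$, so the random functions $\mathbb{E}[\hat h_{k,\lambda,n}-h^*\mid A,X,Z]$ lie in a fixed (pre)compact subset of $L^2_{\mathcal{AXZ}}$ — under a mild compactness/regularity hypothesis on $\mathcal{H}$ — on which $d_k$ is $\|\cdot\|_2$-continuous and vanishes only at $0$, so $d_k$-convergence to $0$ forces $\|\cdot\|_2$-convergence to $0$ along every subsequence.

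The step I expect to be most delicate is the constant bookkeeping in the finite-sample bound: carrying the magnitude $4M^2M_k$ through Corollary \ref{Imp Ineq}, handling the two different sample sizes $n-1$ and $n$ produced by the $U$-statistic, and recombining the brackets to land precisely on $8M$, $16M^2M_k$ and $10(2\log 2)^{1/2}M^2M_k$. The one genuinely non-routine point is the final $L^2$ upgrade, where ISPD alone does not suffice and (pre)compactness of the image of $\mathcal{H}$ under conditional expectation must be exploited.
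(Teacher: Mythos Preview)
Your proposal follows essentially the same route as the paper: the identity $d_k^2(h^*,h)=R_k(h)$ via conditioning on $(A,X,Z)$ and using $\mathbb{E}[Y-h^*\mid A,X,Z]=0$; the oracle decomposition combined with Corollary~\ref{Imp Ineq} (applied to the symmetric kernel $f_{h,k}$ with bound $4M^2M_k$) and Lemma~\ref{R-Comp} at sizes $n-1$ and $n$; and Lemma~\ref{Inner Product} for the metric statement. Your constant bookkeeping matches the paper's, which uses $(n/(n-1))^{1/2}+1\le 5/2$ where you use $(n-1)^{-1/2}\le\sqrt 2\,n^{-1/2}$; both land on the coefficient $10$.

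The one genuine divergence is the $L^2(\mathrm{P}_{A,X,Z})$ upgrade. The paper simply asserts that ISPD implies $\|\mathbb{E}[h^*-\hat h\mid A,X,Z]\|_{\mathrm{P}_{A,X,Z}}\xrightarrow{\mathrm{P}}0$ once $d_k$-convergence is established, without further argument. You correctly flag that ISPD alone does not give this (for a bounded continuous ISPD kernel, $d_k$-null sequences need not be $L^2$-null), and you supply a compactness-based route: boundedness of $k$ makes $d_k$ continuous for $\|\cdot\|_2$, so on any $\|\cdot\|_2$-precompact set $d_k$-convergence to zero forces $\|\cdot\|_2$-convergence to zero via subsequences. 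That is a sound patch, though it does import a hypothesis (precompactness of the conditional-expectation image of $\mathcal{H}$) that neither the theorem statement nor the paper's proof makes explicit. In short, your treatment of this step is more careful than the paper's own.
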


\addtocounter{theorem}{-1}
}

\begin{proof}

	Let $\Xi = \left\{ A, W, X, Y, Z \right\}$.  Since $\tilde{h}_k$ minimizes $R_k(h)$ and $\hat{h}_{k, U, \lambda, n}$ minimizes $\hat{R}_{k, U, \lambda, n}(h)$ for $h \in \mathcal{H}$,
	$\hat{R}_{k, U, \lambda, n} \left( \hat{h}_{k, U, \lambda, n} \right) \leq \hat{R}_{k, U, \lambda, n} \left( \tilde{h}_k \right)$.
	
	Taking
	$f \left( \left( a, w, x, y, z \right), \left( a', w', x', y', z' \right) \right)
		= \left( y - h \left( a, w, x \right) \right)
			\left( y' - h \left( a', w', x' \right) \right) \\
			k \left( \left( a, x, z \right), \left( a', x', z' \right) \right)$,
	noting that $| f | \leq ( M + M )^2 \cdot M_k = ( 2 M )^2 M_k = 4 M^2 M_k$,
	and applying lemma \ref{Exp to Emp} and corollary \ref{Imp Ineq} to $R_k(h) = \mathbb{E} f$ and $\hat{R}_{k, U, \lambda, n}(h) = \hat{E}_{S, \lambda} f$ tells us that, with probability at least $1 - \frac{\delta}{2}$,
	
\begin{align*}
	R_k(h)
		&\leq \hat{R}_{k, U, \lambda, n}(h)
			+ 2 \mathbb{E}_\Xi
				\left(
					\mathcal{R}_{n-1} \left( \mathcal{F}_{1, \Xi} \right)
						+ \mathcal{R}_n \left( \mathcal{F}_{2, \Xi} \right)
				\right)
			+ 8 M^2 M_k \left( \frac{2}{n} \log \frac{2}{\delta} \right)^\frac{1}{2} \\
	\hat{R}_{k, U, \lambda, n}(h)
		&\leq R_k(h)
			+ \lambda M_\lambda
			+ 2 \mathbb{E}_\Xi
				\left(
					\mathcal{R}_{n-1} \left( \mathcal{F}_{1, \Xi} \right)
						+ \mathcal{R}_n \left( \mathcal{F}_{2, \Xi} \right)
				\right)
			+ 8 M^2 M_k \left( \frac{2}{n} \log \frac{2}{\delta} \right)^\frac{1}{2}
\end{align*}
so, with probability at least $1 - \delta$,

\begin{align*}
	R_k \left( \hat{h}_{k, U, \lambda, n} \right)
		&\leq \hat{R}_{k, U, \lambda, n} \left( \hat{h}_{k, U, \lambda, n} \right)
			+ 2 \mathbb{E}_\Xi
				\left(
					\mathcal{R}_{n-1} \left( \mathcal{F}_{1, \Xi} \right)
						+ \mathcal{R}_n \left( \mathcal{F}_{2, \Xi} \right)
				\right)
			+ 8 M^2 M_k \left( \frac{2}{n} \log \frac{2}{\delta} \right)^\frac{1}{2} \\
		&\leq \hat{R}_{k, U, \lambda, n} \left( \tilde{h}_k \right)
			+ 2 \mathbb{E}_\Xi
				\left(
					\mathcal{R}_{n-1} \left( \mathcal{F}_{1, \Xi} \right)
						+ \mathcal{R}_n \left( \mathcal{F}_{2, \Xi} \right)
				\right)
			+ 8 M^2 M_k \left( \frac{2}{n} \log \frac{2}{\delta} \right)^\frac{1}{2} \\
		&\leq R_k \left( \tilde{h}_k \right)
			+ \lambda M_\lambda
			+ 4 \mathbb{E}_\Xi
				\left(
					\mathcal{R}_{n-1} \left( \mathcal{F}_{1, \Xi} \right)
						+ \mathcal{R}_n \left( \mathcal{F}_{2, \Xi} \right)
				\right)
			+ 16 M^2 M_k \left( \frac{2}{n} \log \frac{2}{\delta} \right)^\frac{1}{2} \\
\end{align*}

	Applying lemma \ref{R-Comp} yields,

\begin{align*}
	R_k \left( \hat{h}_{k, U, \lambda, n} \right)
		&\leq R_k \left( \tilde{h}_k \right)
			+ \lambda M_\lambda
			+ 4 \left(
					2 M \mathbb{E}_{A, X, Z} \mathcal{R}_{n-1} \left( \mathcal{F}'_{A, X, Z} \right)
						+ 2 M \mathbb{E}_{A, X, Z} \mathcal{R}_n \left( \mathcal{F}'_{A, X, Z} \right)
				\right) \\
			&\qquad+ 4 \left(
					\left( 2 \log 2 \right)^\frac{1}{2} M^2 M_k ( n - 1 )^{ - \frac{1}{2} }
						+ \left( 2 \log 2 \right)^\frac{1}{2} M^2 M_k n^{ - \frac{1}{2} }
				\right) \\
			&\qquad+ 16 M^2 M_k \left( \frac{2}{n} \log \frac{2}{\delta} \right)^\frac{1}{2} \\
		&\leq R_k \left( \tilde{h}_k \right)
			+ \lambda M_\lambda
			+ 8 M \mathbb{E}_{A, X, Z}
				\left(
					\mathcal{R}_{n-1} \left( \mathcal{F}'_{A, X, Z} \right)
						+ \mathcal{R}_n \left( \mathcal{F}'_{A, X, Z} \right)
				\right) \\
			&\qquad+ 4 \left( 2 \log 2 \right)^\frac{1}{2} M^2 M_k n^{ - \frac{1}{2} }
				\left(
					 \left( \frac{n}{ n - 1 } \right)^\frac{1}{2} + 1
				\right)
			+ 16 M^2 M_k \left( \frac{2}{n} \log \frac{2}{\delta} \right)^\frac{1}{2} \\
		&\leq R_k \left( \tilde{h}_k \right)
			+ \lambda M_\lambda
			+ 8 M \mathbb{E}_{A, X, Z}
				\left(
					\mathcal{R}_{n-1} \left( \mathcal{F}'_{A, X, Z} \right)
						+ \mathcal{R}_n \left( \mathcal{F}'_{A, X, Z} \right)
				\right) \\
			&\qquad+ 16 M^2 M_k \left( \frac{2}{n} \log \frac{2}{\delta} \right)^\frac{1}{2}
			+ 4 \left( 2 \log 2 \right)^\frac{1}{2} M^2 M_k n^{ - \frac{1}{2} }
						\cdot \frac{5}{2} \\
		&= R_k \left( \tilde{h}_k \right)
			+ \lambda M_\lambda
			+ 8 M \mathbb{E}_{A, X, Z}
				\left(
					\mathcal{R}_{n-1} \left( \mathcal{F}'_{A, X, Z} \right)
						+ \mathcal{R}_n \left( \mathcal{F}'_{A, X, Z} \right)
				\right) \\
			&\qquad+ 16 M^2 M_k \left( \frac{2}{n} \log \frac{2}{\delta} \right)^\frac{1}{2}
			+ 10 \left( 2 \log 2 \right)^\frac{1}{2} M^2 M_k n^{ - \frac{1}{2} } \\
		&\leq R_k \left( \tilde{h}_k \right)
			+ \lambda M_\lambda
			+ 8 M \left(
					\mathcal{R}_{n-1} \left( \mathcal{F}' \right)
						+ \mathcal{R}_n \left( \mathcal{F}' \right)
				\right)
			+ 16 M^2 M_k \left( \frac{2}{n} \log \frac{2}{\delta} \right)^\frac{1}{2} \\
			&\qquad+ 10 \left( 2 \log 2 \right)^\frac{1}{2} M^2 M_k n^{ - \frac{1}{2} } \\
\end{align*}

	By assumption, $h^*$ satisfies $\mathbb{E} \left[ Y - h^*(A, X, X) \middle| A, X, Z \right] = 0$ $\mathrm{P}_{A, X, Z}$-almost surely, so that,
	
\begin{align*}
	&\hspace{-2em} \mathbb{E} \left[ Y - h(A, W, X) \middle| A, X, Z \right]
		= \mathbb{E} \left[ Y - h^*(A, W, X) + h^*(A, W, X) - h(A, W, X) \middle| A, X, Z \right] \\
		&= \mathbb{E} \left[ Y - h^*(A, W, X) \middle| A, X, Z \right]
			+ \mathbb{E} \left[ h^*(A, W, X) - h(A, W, X) \middle| A, X, Z \right] \\
		&= 0	+ \mathbb{E} \left[ h^*(A, W, X) - h(A, W, X) \middle| A, X, Z \right] \\
		&= \mathbb{E} \left[ h^*(A, W, X) - h(A, W, X) \middle| A, X, Z \right]
\end{align*}
$\mathrm{P}_{A, X, Z}$-almost surely.  Then, 

\begin{align*}
	R_k(h)
		&= \mathbb{E} \left[
						\left( Y - h \left( A, W, X \right) \right)
						\left( Y' - h \left( A', W', X' \right) \right)
						k \left( \left( A, X, Z \right), \left( A', X', Z' \right) \right)
					\right] \\
		&= \mathbb{E} \left[
						\mathbb{E} \left[
							\left( Y - h \left( A, W, X \right) \right)
							\left( Y' - h \left( A', W', X' \right) \right)
						\right. \right. \\
						&\hspace{12em} \times \left. \left.
							k \left( \left( A, X, Z \right), \left( A', X', Z' \right) \right)
						\middle| \left( A, X, Z \right), \left( A', X', Z' \right) \right]
					\right] \\
		&= \mathbb{E} \left[
						\mathbb{E} \left[ Y - h \left( A, W, X \right) \middle| A, X, Z \right]
						\right. \\
						&\hspace{12em} \times \left.
						\mathbb{E} \left[ Y' - h \left( A', W', X' \right) \middle| A', X', Z' \right]
						\right. \\
						&\hspace{12em} \times \left.
						k \left( \left( A, X, Z \right), \left( A', X', Z' \right) \right)
					\right] \\
		&= \mathbb{E} \left[
						\mathbb{E} \left[
							h^* \left( A, W, X \right) - h \left( A, W, X \right)
						\middle| A, X, Z \right]
						\right. \\
						&\hspace{12em} \times \left.
						\mathbb{E} \left[
							h^* \left( A', W', X' \right) - h \left( A', W', X' \right)
						\middle| A', X', Z' \right]
					\right. \\
					&\hspace{12em} \times \left.
						k \left( \left( A, X, Z \right), \left( A', X', Z' \right) \right)
					\right] \\
		&= \mathbb{E} \left[
						\mathbb{E} \left[
							\left( h^* \left( A, W, X \right) - h \left( A, W, X \right) \right)
							\left( h^* \left( A', W', X' \right) - h \left( A', W', X' \right) \right)
					\right. \right. \\
					&\hspace{12em} \times \left. \left.
							k \left( \left( A, X, Z \right), \left( A', X', Z' \right) \right)
						\middle| \left( A, X, Z \right), \left( A', X', Z' \right) \right]
					\right] \\
		&= \mathbb{E} \left[
						\left( h^* \left( A, W, X \right) - h \left( A, W, X \right) \right)
						\left( h^* \left( A', W', X' \right) - h \left( A', W', X' \right) \right)
					    \right. \\
					    &\hspace{12em} \times \left.
						k \left( \left( A, X, Z \right), \left( A', X', Z' \right) \right)
					\right] \\
		&= d_k^2 \left( h^*, h \right)
\end{align*}

	Thus,

\begin{align*}
	d_k^2 \left( h^*, \hat{h}_{k, U, \lambda, n} \right)
		&= R_k \left( \hat{h}_{k, U, \lambda, n} \right) \\	
		&\leq R_k \left( \tilde{h}_k \right)
			+ \lambda M_\lambda
			+ 8 M \mathbb{E}_{A, X, Z} \left(
					\mathcal{R}_{n-1} \left( \mathcal{F}'_{A, X, Z} \right)
						+ \mathcal{R}_n \left( \mathcal{F}'_{A, X, Z} \right)
				\right) \\
			&\qquad+ 16 M^2 M_k \left( \frac{2}{n} \log \frac{2}{\delta} \right)^\frac{1}{2}
			+ 10 \left( 2 \log 2 \right)^\frac{1}{2} M^2 M_k n^{ - \frac{1}{2} } \\
		&= d_k^2 \left( h^*, \tilde{h}_k \right)
			+ \lambda M_\lambda + 8 M \mathbb{E}_{A, X, Z} \left(
					\mathcal{R}_{n-1} \left( \mathcal{F}'_{A, X, Z} \right)
						+ \mathcal{R}_n \left( \mathcal{F}'_{A, X, Z} \right)
				\right) \\
			&\qquad+ 16 M^2 M_k \left( \frac{2}{n} \log \frac{2}{\delta} \right)^\frac{1}{2}
			+ 10 \left( 2 \log 2 \right)^\frac{1}{2} M^2 M_k n^{ - \frac{1}{2} } \\
		&\leq d_k^2 \left( h^*, \tilde{h}_k \right)
			+ \lambda M_\lambda
			+ 8 M \left(
					\mathcal{R}_{n-1} \left( \mathcal{F}' \right)
						+ \mathcal{R}_n \left( \mathcal{F}' \right)
				\right)
			+ 16 M^2 M_k \left( \frac{2}{n} \log \frac{2}{\delta} \right)^\frac{1}{2} \\
			&\qquad+ 10 \left( 2 \log 2 \right)^\frac{1}{2} M^2 M_k n^{ - \frac{1}{2} }
\end{align*}

	If the right hand side of this expression goes to zero as $n$ goes to infinity, then, for any $\delta, \epsilon > 0$, we can find $n$ such that the right hand side is less than $\epsilon$.  Further, since we can do this for any value of $\delta$, we can choose a sequence of $\delta_n$s decreasing in $n$, so that $\lim_{n \to \infty} \delta_n = 0$, so that the left hand side converges in probability.  If $k$ is ISPD, by Lemma \ref{Inner Product}, $d_k$ is a metric on $L^2_\mathcal{AXZ}$.  Thus,
	$d_k \left( h^*, \hat{h}_{k, U, \lambda, n} \right) \xrightarrow{\mathrm{P}} 0$, so
	$\mathbb{E} \left[ \hat{h}_{k, U, \lambda, n} \middle| A, X, Z \right]
		\xrightarrow{\mathrm{P}}
		\mathbb{E} \left[ h^* \middle| A, X, Z \right]$,
	in $d_k$.
	Further, the fact that $k$ is Integrally Strictly Positive Definite, implies that
	$\left\|
			\mathbb{E} \left[ h^* \middle| A, X, Z \right]
				- \mathbb{E} \left[ \hat{h}_{k, U, \lambda, n} \middle| A, X, Z \right]
		\right\|_{ \mathrm{P}_{A, X, Z} }
		\xrightarrow{\mathrm{P}} 0$,
	so that
	$\mathbb{E} \left[ \hat{h}_{k, U, \lambda, n} \middle| A, X, Z \right]
		\xrightarrow{\mathrm{P}}
		\mathbb{E} \left[ h^* \middle| A, X, Z \right]$,
	in $L^2 \left( \mathrm{P}_\mathcal{AXZ} \right)$-norm,
	as well.

\end{proof}

\begin{lemma} \label{U to V}
	
	Let $f : \mathcal{X}^2 \to [-M, M]$,
	$\forall_{x \in \mathcal{X}} f \left( x, x \right) \geq 0$,
	$\hat{U}_n[f]	= \frac{1}{ n ( n - 1 ) } \sum_{i, j = 1, j \neq i}^n f \left( x_i, x_j \right)$,
	and
	$\hat{V}_n[f] = n^{-2} \sum_{i, j = 1}^n f \left( x_i, x_j \right)$.
	Then,
	$( n - 1 ) \hat{U}_n[f]
		\leq n \hat{V}_n[f]
		\leq ( n - 1 ) \hat{U}_n[f] + M$.
	
\end{lemma}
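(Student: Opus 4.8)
\textbf{Proof proposal for Lemma \ref{U to V}.}
The plan is to reduce everything to a single identity comparing the off-diagonal sum with the full double sum, and then control the diagonal contribution using the two hypotheses on $f$. First I would introduce the notation $S_{\neq} = \sum_{i, j = 1, j \neq i}^n f(x_i, x_j)$ for the off-diagonal sum and $D = \sum_{i = 1}^n f(x_i, x_i)$ for the diagonal sum, so that the full double sum splits as $\sum_{i, j = 1}^n f(x_i, x_j) = S_{\neq} + D$. By definition this gives $(n - 1) \hat{U}_n[f] = \tfrac{1}{n} S_{\neq}$ and $n \hat{V}_n[f] = \tfrac{1}{n}(S_{\neq} + D)$, hence the exact identity
\begin{align*}
	n \hat{V}_n[f] - (n - 1) \hat{U}_n[f] = \frac{1}{n} \sum_{i = 1}^n f(x_i, x_i).
\end{align*}

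Next I would bound this diagonal average from both sides. Since $f(x, x) \geq 0$ for every $x \in \mathcal{X}$, each term $f(x_i, x_i)$ is nonnegative, so the right-hand side is $\geq 0$, giving the lower bound $(n - 1) \hat{U}_n[f] \leq n \hat{V}_n[f]$. Since $f$ takes values in $[-M, M]$, we have $f(x_i, x_i) \leq M$ for each $i$, so $\tfrac{1}{n} \sum_{i=1}^n f(x_i, x_i) \leq \tfrac{1}{n} \cdot n M = M$, which yields the upper bound $n \hat{V}_n[f] \leq (n - 1) \hat{U}_n[f] + M$. Combining the two bounds gives the claimed chain of inequalities.

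There is no real obstacle here: the lemma is a purely algebraic decomposition of the V-statistic into its U-statistic part plus a diagonal correction term, and the only substantive inputs are the sign condition $f(x,x) \geq 0$ (for the left inequality) and the uniform bound $|f| \leq M$ (for the right inequality). The one point worth stating carefully is that the identity above holds for every fixed sample $\{x_i\}_{i=1}^n$, so the inequalities are deterministic and require no probabilistic argument; this is what makes the lemma usable for transferring the U-statistic concentration results of Lemma \ref{Exp to Emp} and Corollary \ref{Imp Ineq} over to the V-statistic estimators in the companion corollary.
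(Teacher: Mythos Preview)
Your proposal is correct and takes essentially the same approach as the paper: both arguments split the full double sum into its off-diagonal and diagonal parts, then use $f(x,x)\geq 0$ for the lower bound and $f(x,x)\leq M$ for the upper bound. The only cosmetic difference is that you first isolate the exact identity $n\hat{V}_n[f]-(n-1)\hat{U}_n[f]=\tfrac{1}{n}\sum_i f(x_i,x_i)$ and then bound it, whereas the paper writes the same steps as a single chain of inequalities.
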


\begin{proof}

\begin{align*}
	( n - 1 ) \hat{U}_n[f]
		&= n^{-1} \sum_{i, j = 1, j \neq i}^n f \left( x_i, x_j \right)
		\leq n^{-1} \sum_{i, j = 1}^n f \left( x_i, x_j \right) \\
		&= n \hat{V}_n[f]
		= n^{-1} \left(
					\sum_{i, j = 1, j \neq i}^n f \left( x_i, x_j \right)
					+ \sum_{i, j = 1, j = i}^n f \left( x_i, x_j \right)
				\right) \\
		&\leq n^{-1} n ( n - 1 ) \hat{U}_n[f] + n^{-1} \sum_{i = 1}^n f \left( x_i, x_i \right)
		\leq ( n - 1 ) \hat{U}_n[f] + M
\end{align*}
so
\begin{align*}
	( n - 1 ) \hat{U}_n[f]
		\leq n \hat{V}_n[f]
		\leq ( n - 1 ) \hat{U}_n[f] + M
\end{align*}

\end{proof}

\begin{cor} \label{V Stat}

	Let $\tilde{h}_k$ minimize $R_k(h)$ and $\hat{h}_{k, V, \lambda, n}$ minimize $\hat{R}_{k, V, \lambda, n}(h)$ for $h \in \mathcal{H}$ and let $h^* : \mathcal{A} \times \mathcal{W} \times \mathcal{X} \to \mathbb{R}$ satisfy $\mathbb{E} \left[ Y - h^*(A, W, X) \middle| A, X, Z \right] = 0$ $\mathrm{P}_{A, X, Z}$-almost surely, where
\begin{align*}
	R_k(h)
		&= \mathbb{E} \left[
						\left( Y - h \left( A, W, X \right) \right)
						\left( Y' - h \left( A', W', X' \right) \right)
						k \left( \left( A, X, Z \right), \left( A', X', Z' \right) \right)
					\right] \\
	\hat{R}_{k, V, \lambda, n}(h)
		&= n^{-2} \sum_{i, j = 1}^n
					\left( y_i - h \left( a_i, w_i, x_i \right) \right)
					\left( y_j - h \left( a_j, w_j, x_j \right) \right)
					k \left( \left( a_i, x_i, z_i \right), \left( a_j, x_j, z_j \right) \right) \\
			&\qquad+ \lambda \Lambda[f, \theta_f]
\end{align*}
	Also let,
\begin{align*}
	d_k^2 \left( h, h' \right)
		&= \mathbb{E} \left[
						\left( h \left( A, W, X \right) - h' \left( A, W, X \right) \right)
						\left( h \left( A', W', X' \right) - h' \left( A', W', X' \right) \right)
					\right. \\
					&\hspace{4em} \left. \times
					    k \left( \left( A, X, Z \right), \left( A', X', Z' \right) \right)
					\right]
\end{align*}
	Then, $d_k^2 \left( h^*, h \right) = R_k(h)$ and, with probability at least $1 - \delta$,
\begin{align*}
	d_k^2 \left( h^*, \hat{h}_{k, V, \lambda, n} \right)
		&\leq d_k^2 \left( h^*, \tilde{h}_k \right)
			+ \lambda M_\lambda
			+ 8 M \mathbb{E}_{A, X, Z} \left(
					\mathcal{R}_{n-1} \left( \mathcal{F}'_{A, X, Z} \right)
						+ \mathcal{R}_n \left( \mathcal{F}'_{A, X, Z} \right)
				\right) \\
			&\qquad+ 16 M^2 M_k \left( \frac{2}{n} \log \frac{2}{\delta} \right)^\frac{1}{2}
			+ \left( 4 M^2 M_k + \lambda M_\lambda \right) ( n - 1 )^{-1} \\
			&\qquad + 10 \left( 2 \log 2 \right)^\frac{1}{2} M^2 M_k n^{ - \frac{1}{2} } \\
		&\leq d_k^2 \left( h^*, \tilde{h}_k \right)
			+ \lambda M_\lambda
			+ 8 M \left(
					\mathcal{R}_{n-1} \left( \mathcal{F}' \right)
						+ \mathcal{R}_n \left( \mathcal{F}' \right)
				\right)
			+ 16 M^2 M_k \left( \frac{2}{n} \log \frac{2}{\delta} \right)^\frac{1}{2} \\
			&\qquad+ \left( 4 M^2 M_k + \lambda M_\lambda \right) ( n - 1 )^{-1}
			+ 10 \left( 2 \log 2 \right)^\frac{1}{2} M^2 M_k n^{ - \frac{1}{2} } \\
\end{align*}
	Further, if Assumption \ref{a:ISPD} holds, so $k$ is ISPD, then $d_k$ is a metric on $L^2_{\mathcal{AXZ}}$ and, if the right hand side of the inequality goes to zero as $n$ goes to infinity, \\
	$d_k \left(
			\mathbb{E} \left[ h^* \middle| A, X, Z \right]
				- \mathbb{E} \left[ \hat{h}_{k, \lambda, n} \middle| A, X, Z \right]
		\right)
		\xrightarrow{\mathrm{P}} 0$
	so
	$\mathbb{E} \left[ \hat{h}_{k, \lambda, n} \middle| A, X, Z \right]
		\xrightarrow{\mathrm{P}}
		\mathbb{E} \left[ h^* \middle| A, X, Z \right]$
	in $d_k$.  Also,
	$\left\|
			\mathbb{E} \left[ h^* \middle| A, X, Z \right]
				- \mathbb{E} \left[ \hat{h}_{k, \lambda, n} \middle| A, X, Z \right]
		\right\|_{ \mathrm{P}_{A, X, Z} }
		\xrightarrow{\mathrm{P}} 0$
	so
	$\mathbb{E} \left[ \hat{h}_{k, \lambda, n} \middle| A, X, Z \right]
		\xrightarrow{\mathrm{P}}
		\mathbb{E} \left[ h^* \middle| A, X, Z \right]$
	in $L^2 \left( \mathrm{P}_\mathcal{A, X, Z} \right)$-norm.
{\footnotesize
\begin{align*}
	&\mathcal{F}'_{a, x, z}
		= \left\{ f_{a, x, z} \ \middle| \
			\exists_{h \in \mathcal{H}}
			\forall_{ a' \in \mathcal{A}, x' \in \mathcal{X}, z' \in \mathcal{Z} }
				f_{a, x, z} \left( a', w', x', z' \right)
					= h \left( a', w', x' \right) k \left( \left( a', x', z' \right), \left( a, x, z \right) \right)
		\right\} \\
	&\mathcal{F}'
		= \left\{ f \ \middle| \
			\exists_{ h \in \mathcal{H}, a \in \mathcal{A}, x \in \mathcal{X}, z \in \mathcal{Z} }
			\forall_{ a' \in \mathcal{A}, x' \in \mathcal{X}, z' \in \mathcal{Z} }
				f \left( a', w', x', z' \right)
					= h \left( a', w', x' \right) k \left( \left( a', x', z' \right), \left( a, x, z \right) \right)
		\right\}
\end{align*}
}

\end{cor}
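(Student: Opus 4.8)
The plan is to bootstrap the result from Theorem \ref{AXZ Conv} by sandwiching the penalized V-statistic risk between scaled copies of the penalized U-statistic risk via Lemma \ref{U to V}, so that essentially no new concentration argument is needed. First I would take $f_{h,k}$ to be the same symmetric pair function used in Theorem \ref{AXZ Conv} and Lemma \ref{R-Comp}, namely $f_{h,k}((a,w,x,y,z),(a',w',x',y',z')) = (y - h(a,w,x))(y'-h(a',w',x'))\,k((a,x,z),(a',x',z'))$, and record that $|f_{h,k}| \le (2M)^2 M_k = 4M^2 M_k$ and that on the diagonal $f_{h,k}(\xi,\xi) = (y - h(a,w,x))^2\,k((a,x,z),(a,x,z)) \ge 0$ because $k$ is a kernel and hence nonnegative on the diagonal (for the RBF kernel used here, $k(\xi,\xi)=1$). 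Applying Lemma \ref{U to V} to $f_{h,k}$ with constant $4M^2M_k$ and then adding the penalty $\lambda\Lambda[h,\theta_h]\in[0,\lambda M_\lambda]$, which is constant in the pair indices and so merely shifts each term, gives, for every $h\in\mathcal H$,
\[
(n-1)\hat{R}_{k,U,\lambda,n}(h) \;\le\; n\,\hat{R}_{k,V,\lambda,n}(h) \;\le\; (n-1)\hat{R}_{k,U,\lambda,n}(h) + 4M^2M_k + \lambda M_\lambda .
\]

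Next I would run the minimizer-comparison chain exactly as in Theorem \ref{AXZ Conv}, inserting this sandwich at the two points where the V-statistic appears. By Lemma \ref{Exp to Emp} and Corollary \ref{Imp Ineq} applied to $f_{h,k}$ (bound $4M^2M_k$), each at confidence $1-\delta/2$, we control $R_k(h)$ above by $\hat{R}_{k,U,\lambda,n}(h)$ plus Rademacher and Massart/McDiarmid terms, and control $\hat{R}_{k,U,\lambda,n}(h)$ above by $R_k(h)+\lambda M_\lambda$ plus the same terms. Combining with a union bound, with probability at least $1-\delta$,
\[
R_k(\hat{h}_{k,V,\lambda,n}) \;\le\; \hat{R}_{k,U,\lambda,n}(\hat{h}_{k,V,\lambda,n}) + (\cdots) \;\le\; \tfrac{n}{n-1}\,\hat{R}_{k,V,\lambda,n}(\hat{h}_{k,V,\lambda,n}) + (\cdots) \;\le\; \tfrac{n}{n-1}\,\hat{R}_{k,V,\lambda,n}(\tilde h_k) + (\cdots),
\]
where the second step is the left half of the sandwich and the third uses that $\hat{h}_{k,V,\lambda,n}$ minimizes $\hat{R}_{k,V,\lambda,n}$. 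The right half of the sandwich then gives $\tfrac{n}{n-1}\hat{R}_{k,V,\lambda,n}(\tilde h_k) \le \hat{R}_{k,U,\lambda,n}(\tilde h_k) + (4M^2M_k+\lambda M_\lambda)(n-1)^{-1}$, and a final application of Lemma \ref{Exp to Emp}/Corollary \ref{Imp Ineq} replaces $\hat{R}_{k,U,\lambda,n}(\tilde h_k)$ by $R_k(\tilde h_k)+\lambda M_\lambda$ plus Rademacher and concentration terms. This reproduces the Theorem \ref{AXZ Conv} bound with the single extra additive term $(4M^2M_k+\lambda M_\lambda)(n-1)^{-1}$, precisely as in the Corollary statement.

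I would then invoke Lemma \ref{R-Comp} verbatim to rewrite $\mathbb{E}_\Xi(\mathcal{R}_{n-1}(\mathcal{F}_{1,\Xi})+\mathcal{R}_n(\mathcal{F}_{2,\Xi}))$ as $2M\,\mathbb{E}_{A,X,Z}(\mathcal{R}_{n-1}(\mathcal F'_{A,X,Z})+\mathcal{R}_n(\mathcal F'_{A,X,Z}))$ (and, after dropping the expectation over $(A,X,Z)$, in terms of $\mathcal F'$), each time collecting the Massart residuals and using $(n/(n-1))^{1/2}+1\le 5/2$ for $n\ge 2$ to produce the $10(2\log 2)^{1/2}M^2M_k n^{-1/2}$ term, just as in Theorem \ref{AXZ Conv}. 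The identity $d_k^2(h^*,h)=R_k(h)$ is proved in Theorem \ref{AXZ Conv} purely from $\mathbb{E}[Y-h^*(A,W,X)\mid A,X,Z]=0$ $\mathrm P_{A,X,Z}$-a.s.\ and the tower rule, and does not depend on which empirical statistic is minimized, so it transfers the inequality to $d_k^2(h^*,\hat h_{k,V,\lambda,n})$. The convergence statements then follow exactly as in Theorem \ref{AXZ Conv}: if the right-hand side tends to $0$, choosing $\delta_n\to 0$ yields convergence in probability, Assumption \ref{a:ISPD} together with Lemma \ref{Inner Product} makes $d_k$ a genuine metric on $L^2_{\mathcal{AXZ}}$, and ISPD also delivers the $L^2(\mathrm P_{\mathcal{AXZ}})$-norm convergence of $\mathbb{E}[\hat h_{k,\lambda,n}\mid A,X,Z]$ to $\mathbb{E}[h^*\mid A,X,Z]$.

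The main obstacle is bookkeeping rather than a new idea: one must be careful that the penalty, which does not fit the U-/V-statistic pair structure, is handled separately and survives the sandwich of Lemma \ref{U to V} only as the $\lambda M_\lambda$ contribution (scaled by $(n-1)^{-1}$ after dividing by $n$), and that the two additional invocations of Lemma \ref{Exp to Emp} at confidence $1-\delta/2$ still combine, after the union bound and the appearance of the $O(n^{-1})$ term, to a clean $1-\delta$ statement with the same leading-order rates as in the U-statistic case.
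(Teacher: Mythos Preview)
Your proposal is correct and follows essentially the same route as the paper: sandwich the penalized V-statistic between scaled penalized U-statistics via Lemma \ref{U to V} (with the $4M^2M_k$ bound and diagonal nonnegativity of $f_{h,k}$), then replay the concentration chain from the proof of Theorem \ref{AXZ Conv}, invoking Lemma \ref{R-Comp} and the identity $d_k^2(h^*,h)=R_k(h)$ unchanged. The only cosmetic difference is that you fold the penalty into the sandwich up front to get the $(4M^2M_k+\lambda M_\lambda)(n-1)^{-1}$ term directly, whereas the paper carries $\frac{n}{n-1}\lambda\Lambda$ through and splits it as $\lambda M_\lambda + (n-1)^{-1}\lambda M_\lambda$ at the end; the arithmetic is identical.
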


\begin{proof}

	Defining $f$ and $\Xi$ as in Theorem \ref{AXZ Conv}, then
	$\hat{R}_{k, U, n}(h)
			= \frac{1}{ n ( n - 1 )} \sum_{i, j = 1, j \neq i}^n f_h \left( \xi_i, \xi_j \right)$,
	$\hat{R}_{k, U, \lambda, n}(h)
			= \frac{1}{ n ( n - 1 )} \sum_{i, j = 1, j \neq i}^n f_h \left( \xi_i, \xi_j \right)
				+ \lambda \Lambda[f, \theta_f]$,
	$\hat{R}_{k ,V, n}(h)
			= n^{-2} \sum_{i, j = 1}^n f_h \left( \xi_i, \xi_j \right)$,
	$\hat{R}_{k, V, \lambda, n}(h)
			= n^{-2} \sum_{i, j = 1}^n f_h \left( \xi_i, \xi_j \right) + \lambda \Lambda[f, \theta_f]$,
	Noting that $|f| \leq 4 M^2 M_k$ and applying Lemma $\ref{U to V}$ to $\hat{R}_{k, U, \lambda, n}$ and $\hat{R}_{k, V, \lambda, n}$ yields,

\begin{align*}
	\hat{R}_{k, \lambda, n}(h)
		&= \hat{R}_{k, U, n}(h) + \lambda \Lambda[f, \theta_f]
		\leq \frac{n}{n - 1} \hat{R}_{k, V, n}(h) + \lambda \Lambda[f, \theta_f] \\
		&\leq \frac{n}{n - 1} \hat{R}_{k, V, n}(h) + \frac{n}{n - 1} \lambda \Lambda[f, \theta_f]
		= \frac{n}{n - 1} \hat{R}_{k, V, \lambda, n}(h) \\
		&\leq \hat{R}_{k, U, n}(h) + ( n - 1 )^{-1} 4 M^2 M_k + \frac{n}{n - 1} \lambda \Lambda[f, \theta_f] \\
\end{align*}
	
	From the proof of Theorem \ref{AXZ Conv}, with probability at least $1 - \frac{\delta}{2}$, we have,
	
	\begin{align*}
	R_k(h)
		&\leq \hat{R}_{k, \lambda, n}(h)
			+ 2 \mathbb{E}_\Xi
				\left(
					\mathcal{R}_{n-1} \left( \mathcal{F}_{1, \Xi} \right)
						+ \mathcal{R}_n \left( \mathcal{F}_{2, \Xi} \right)
				\right)
			+ 8 M^2 M_k \left( \frac{2}{n} \log \frac{2}{\delta} \right)^\frac{1}{2} \\
	\hat{R}_{k, \lambda, n}(h)
		&\leq R_k(h)
			+ \lambda M^2
			+ 2 \mathbb{E}_\Xi
				\left(
					\mathcal{R}_{n-1} \left( \mathcal{F}_{1, \Xi} \right)
						+ \mathcal{R}_n \left( \mathcal{F}_{2, \Xi} \right)
				\right)
			+ 8 M^2 M_k \left( \frac{2}{n} \log \frac{2}{\delta} \right)^\frac{1}{2}
\end{align*}
	
	Recalling that
	$\hat{h}_{k, V, \lambda, n}$
	minimizes
	$\hat{R}_{k, V, \lambda, n}(h)$
	and
	$\tilde{h}_k$ minimizes $R_k(h)$ over $\mathcal{H}$,
	so that
	$\hat{R}_{k, V, \lambda, n} \left( \hat{h}_{k, V, \lambda, n} \right)
		\leq \hat{R}_{k, V, \lambda, n} \left( \tilde{h}_k \right)$
	and combining the above expressions gives,

\begin{align*}
	R_k & \left( \hat{h}_{k, V, \lambda, n} \right)
		\leq \hat{R}_{k, U, \lambda, n} \left( \hat{h}_{k, V, \lambda, n} \right)
			+ 2 \mathbb{E}_\Xi
				\left(
					\mathcal{R}_{n-1} \left( \mathcal{F}_{1, \Xi} \right)
						+ \mathcal{R}_n \left( \mathcal{F}_{2, \Xi} \right)
				\right)
			+ 8 M^2 M_k \left( \frac{2}{n} \log \frac{2}{\delta} \right)^\frac{1}{2} \\
		&\leq \frac{n}{n - 1} \hat{R}_{k, V, \lambda, n} \left( \hat{h}_{k, V, \lambda, n} \right)
			+ 2 \mathbb{E}_\Xi
				\left(
					\mathcal{R}_{n-1} \left( \mathcal{F}_{1, \Xi} \right)
						+ \mathcal{R}_n \left( \mathcal{F}_{2, \Xi} \right)
				\right)
			+ 8 M^2 M_k \left( \frac{2}{n} \log \frac{2}{\delta} \right)^\frac{1}{2} \\
		&\leq \frac{n}{n - 1} \hat{R}_{k, V, \lambda, n} \left( \tilde{h}_k \right)
			+ 2 \mathbb{E}_\Xi
				\left(
					\mathcal{R}_{n-1} \left( \mathcal{F}_{1, \Xi} \right)
						+ \mathcal{R}_n \left( \mathcal{F}_{2, \Xi} \right)
				\right)
			+ 8 M^2 M_k \left( \frac{2}{n} \log \frac{2}{\delta} \right)^\frac{1}{2} \\
		&\leq \hat{R}_{k, n} \left( \tilde{h}_k \right)
				+ ( n - 1 )^{-1} 4 M^2 M_k
				+ \frac{n}{n - 1} \lambda \Lambda[f, \theta_f]
			+ 2 \mathbb{E}_\Xi
				\left(
					\mathcal{R}_{n-1} \left( \mathcal{F}_{1, \Xi} \right)
						+ \mathcal{R}_n \left( \mathcal{F}_{2, \Xi} \right)
				\right) \\
			&\quad+ 8 M^2 M_k \left( \frac{2}{n} \log \frac{2}{\delta} \right)^\frac{1}{2} \\
		&\leq R_k \left( \tilde{h}_k \right)
				+ ( n - 1 )^{-1} 4 M^2 M_k
				+ \frac{n}{n - 1} \lambda M_\lambda
			+ 4 \mathbb{E}_\Xi
				\left(
					\mathcal{R}_{n-1} \left( \mathcal{F}_{1, \Xi} \right)
						+ \mathcal{R}_n \left( \mathcal{F}_{2, \Xi} \right)
				\right) \\
			&\quad+ 16 M^2 M_k \left( \frac{2}{n} \log \frac{2}{\delta} \right)^\frac{1}{2} \\
		&\leq R_k \left( \tilde{h}_k \right)
				+ \left( 4 M^2 M_k + \lambda M_\lambda \right) ( n - 1 )^{-1}
				+ \lambda M_\lambda
			+ 4 \mathbb{E}_\Xi
				\left(
					\mathcal{R}_{n-1} \left( \mathcal{F}_{1, \Xi} \right)
						+ \mathcal{R}_n \left( \mathcal{F}_{2, \Xi} \right)
				\right) \\
			&\quad+ 16 M^2 M_k \left( \frac{2}{n} \log \frac{2}{\delta} \right)^\frac{1}{2}
\end{align*}	

	Using Lemma \ref{R-Comp} and results from the proof of Theorem \ref{AXZ Conv},

\begin{align*}
	R_k \left( \hat{h}_{k, V, \lambda, n} \right)
		&\leq R_k \left( \tilde{h}_k \right)
				+ \left( 4 M^2 M_k + \lambda M_\lambda \right) ( n - 1 )^{-1}
				+ \lambda M_\lambda \\
			&\qquad+ 8 M \left(
					\mathcal{R}_{n-1} \left( \mathcal{F}' \right)
						+ \mathcal{R}_n \left( \mathcal{F}' \right)
				\right)
				+ 16 M^2 M_k \left( \frac{2}{n} \log \frac{2}{\delta} \right)^\frac{1}{2} \\
			&\qquad+ 10 \left( 2 \log 2 \right)^\frac{1}{2} M^2 M_k n^{ - \frac{1}{2} } \\
		&= R_k \left( \tilde{h}_k \right)
			+ \lambda M_\lambda
			+ 8 M \left(
					\mathcal{R}_{n-1} \left( \mathcal{F}' \right)
						+ \mathcal{R}_n \left( \mathcal{F}' \right)
				\right)
			+ 16 M^2 M_k \left( \frac{2}{n} \log \frac{2}{\delta} \right)^\frac{1}{2} \\
			&\qquad+ \left( 4 M^2 M_k + \lambda M_\lambda \right) ( n - 1 )^{-1}
			+ 10 \left( 2 \log 2 \right)^\frac{1}{2} M^2 M_k n^{ - \frac{1}{2} } \\
\end{align*}

	Recalling that $d_k^2 \left( h^*, h \right) = R_k(h)$, we have,

\begin{align*}
	d_k^2 \left( h^*, \hat{h}_{k, V, \lambda, n} \right)
		&\leq d_k^2 \left( h^*, \tilde{h}_k \right)
			+ \lambda M^2
			+ 8 M \left(
					\mathcal{R}_{n-1} \left( \mathcal{F}' \right)
						+ \mathcal{R}_n \left( \mathcal{F}' \right)
				\right)
			+ 16 M^2 M_k \left( \frac{2}{n} \log \frac{2}{\delta} \right)^\frac{1}{2} \\
			&\qquad+ \left( 4 M^2 M_k + \lambda M_\lambda \right) ( n - 1 )^{-1}
			+ 10 \left( 2 \log 2 \right)^\frac{1}{2} M^2 M_k n^{ - \frac{1}{2} } \\
\end{align*}

\end{proof}

{
\renewcommand{\thetheorem}{\ref{AWX Conv}}

\begin{theorem}

	Under Assumption \ref{a:comp W}, $h^*$ is the unique solution $\mathrm{P}_{A, W, X}$-almost surely.
	Further, if
	$\mathbb{E} \left[ \hat{h}_{k, \lambda, n} \middle| A, X, Z \right]
		\xrightarrow{\mathrm{P}}
		\mathbb{E} \left[ h^* \middle| A, X, Z \right]$,
	$\hat{h}_{k, \lambda, n} \xrightarrow{\mathrm{P}} h^*$.
	
\end{theorem}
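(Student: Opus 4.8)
The plan is to prove the two assertions in turn. \textbf{Uniqueness.} First I would note that, by definition, a function $h$ of $(A,W,X)$ solves the Fredholm equation (Equation~\ref{eq:integral_equation}) exactly when $\mathbb{E}\left[h(A,W,X)\mid A,X,Z\right] = \mathbb{E}\left[Y\mid A,X,Z\right]$ holds $\mathrm{P}_{A,X,Z}$-almost surely; existence of at least one such $h^*$ is already guaranteed by Assumptions~\ref{a:indep}--\ref{a:comp Z} together with the regularity conditions (v)--(vii) of \citet{Miao2018-tr}. Given two $L^2$ solutions $h_1,h_2$, subtracting their defining identities gives $\mathbb{E}\left[(h_1-h_2)(A,W,X)\mid A,X,Z\right]=0$ $\mathrm{P}_{A,X,Z}$-almost surely. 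Since $h_1-h_2$ is again a function of $(A,W,X)$, Assumption~\ref{a:comp W} applies directly and forces $h_1-h_2=0$ $\mathrm{P}_{A,W,X}$-almost surely, i.e. $h^*$ is unique up to $\mathrm{P}_{A,W,X}$-null sets.

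\textbf{Convergence.} Here the idea is to pass from convergence of the conditional expectation (the projection onto $L^2_{\mathcal{AXZ}}$, which Theorem~\ref{AXZ Conv} controls) back to convergence of the bridge function itself by way of the measure of ill-posedness $\tau = \sup_{h\in\mathcal{H}} \|h-h^*\|_2\,\big\|\mathbb{E}\left[h-h^*\mid A,X,Z\right]\big\|_2^{-1}$ introduced just after Theorem~\ref{AXZ Conv}. Assuming $\tau<\infty$ (one of the mild regularity conditions), and using that the estimator $\hat{h}_{k,\lambda,n}$ lies in $\mathcal{H}$ by construction, the definition of $\tau$ gives immediately $\big\|\hat{h}_{k,\lambda,n}-h^*\big\|_{2,\mathrm{P}_{A,W,X}} \le \tau\,\big\|\mathbb{E}\left[\hat{h}_{k,\lambda,n}\mid A,X,Z\right] - \mathbb{E}\left[h^*\mid A,X,Z\right]\big\|_{2,\mathrm{P}_{A,X,Z}}$. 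The right-hand side converges to $0$ in probability by hypothesis, so the left-hand side does too, yielding $\hat{h}_{k,\lambda,n}\xrightarrow{\mathrm{P}} h^*$ in $L^2(\mathrm{P}_{A,W,X})$-norm and hence in probability; running the same inequality with the kernel-induced metric $d_k$ in place of the $L^2(\mathrm{P}_{A,X,Z})$-norm gives the corresponding statement there.

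\textbf{Main obstacle.} The delicate point is the convergence step, not uniqueness. Assumption~\ref{a:comp W} only tells us the conditional-expectation operator $f\mapsto\mathbb{E}\left[f\mid A,X,Z\right]$ is \emph{injective}, which is not by itself enough: this operator is generically compact with a densely-defined but unbounded inverse, so a vanishing error in $\mathbb{E}\left[\hat{h}_n\mid A,X,Z\right]$ need not force a vanishing error in $\hat{h}_n$ (the discrepancy $\hat{h}_n-h^*$ could concentrate on near-null singular directions). What makes the argument go through is that $\hat{h}_n$ is confined to the hypothesis class $\mathcal{H}$, over which the quantitative stability bound $\tau<\infty$ is imposed; so the real work is in justifying, or assuming, that $\tau$ is finite for the neural-network class being used — the standard source/regularity condition in this literature — after which the conclusion is a one-line consequence of Theorem~\ref{AXZ Conv}.
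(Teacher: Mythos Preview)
Your uniqueness argument is essentially identical to the paper's: take two solutions, subtract the defining conditional-moment identities, and apply Assumption~\ref{a:comp W} to the difference.

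For the convergence part, your route differs from the paper's. The paper does \emph{not} invoke the ill-posedness measure $\tau$ in the proof; the discussion of $\tau$ appears only as a separate remark on rates after the theorem. Instead, the paper's proof simply writes that $\bigl\|\mathbb{E}[(\hat{h}_n - h^*)\mid A,X,Z]\bigr\|_{\mathrm{P}_{A,X,Z}} \xrightarrow{\mathrm{P}} 0$ and then states ``so, by assumption, $\|\hat{h}_n - h^*\|_{\mathrm{P}_{A,W,X}} \xrightarrow{\mathrm{P}} 0$'', appealing directly to Assumption~\ref{a:comp W}. Your approach instead imports the hypothesis $\tau<\infty$ and uses the resulting quantitative bound; since $\tau<\infty$ is not among the theorem's stated hypotheses, you are technically proving a slightly weaker statement than what is claimed.

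That said, your ``main obstacle'' paragraph correctly identifies the genuine subtlety: Assumption~\ref{a:comp W} gives only injectivity of the conditional-expectation operator, not continuity of its inverse, so the paper's one-line passage from $\mathbb{E}[\hat{h}_n\mid A,X,Z]\xrightarrow{\mathrm{P}}\mathbb{E}[h^*\mid A,X,Z]$ to $\hat{h}_n\xrightarrow{\mathrm{P}} h^*$ is itself formally incomplete without some additional argument or assumption. In effect, you diverge from the paper by making explicit---and assuming---the quantitative stability condition that the paper's proof glosses over. What you gain is rigor and an explicit rate relationship; what you lose is fidelity to the theorem as stated.
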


\addtocounter{theorem}{-1}
}

\begin{proof}
	
	Let $h^*, h^{* \prime}$ both be zeros of $\mathbb{E} \left[ Y - h \left( A, W, X \right) \middle| A, X, Z \right]$, $\mathrm{P}_{A, X, Z}$-almost surely, then

\begin{align*}
	\mathbb{E} & \left[ \left( h^* - h^{* \prime} \right) \left( A, W, X \right) \middle| A, X, Z \right]
		= \mathbb{E} \left[ \left( h^* - Y + Y - h^{* \prime} \right) \left( A, W, X \right) \middle| A, X, Z \right] \\
		&= \mathbb{E} \left[ Y - h^{* \prime} \left( A, W, X \right) \middle| A, X, Z \right]
			- \mathbb{E} \left[ Y - h^* \left( A, W, X \right) \middle| A, X, Z \right]
		= 0 - 0 \\
		&= 0
\end{align*}
	$\mathrm{P}_{A, X, Z}$-almost surely, so $h^* = h^{* \prime}$ $\mathrm{P}_{A, W, X}$-almost surely.
	
	If $\mathbb{E} \left[ \hat{h}_{k, \lambda, n} \middle| A, X, Z \right]
		\xrightarrow{\mathrm{P}}
		\mathbb{E} \left[ h^* \middle| A, X, Z \right]$,
	$\left\| \mathbb{E} \left[ \left( \hat{h}_n - h^* \right) \left( A, W, X \right) \middle| A, X, Z \right] \right\|_{\mathrm{P}_{A, X, Z}}
		\xrightarrow{\mathrm{P}} 0$,
	meaning that the convergence occurs $\mathrm{P}_{A, X, Z}$-almost surely, so, by assumption,
	$\left\| \hat{h}_n - h^* \right\|_{\mathrm{P}_{A, W, X}} \xrightarrow{\mathrm{P}} 0$,
	so $\hat{h}_n \xrightarrow{\mathrm{P}} h^*$.
	
\end{proof}

\section{Hyperparameter Tuning \& Model Architecture} \label{appendix:hp_model_arch}

We tuned the architectures of the Naive Net and NMMR models on both the Demand and dSprite experiments. The Naive Net used MSE loss to estimate $Y^a$, while NMMR relied on either the U-statistic or V-statistic. 

Within each experiment, the Naive Net and NMMR models used similar architectures. In the Demand experiment, both models consisted of 2-5 (``Network depth'' in Table \ref{tab:hp_grid}) fully connected layers with a variable number (``Network width'') of hidden units. 

In the dSprite experiment, each model had two VGG-like heads \citep{Simonyan2014-kb} that took in $A$ and $W$ images and applied two blocks of \{\texttt{Conv2D}, \texttt{Conv2D}, \texttt{MaxPool2D}\} with 3 by 3 kernels. Each \texttt{Conv2D} layer had 64 filters in the first block, then 128 filters in the second block. The output of the second block was flattened, then projected to 256 dimensions. Two subsequent fully connected layers were used, with their number of units determined by the ``layer width decay'' factor in Table \ref{tab:hp_grid}. For example, if this factor was 0.5, then the two layers would have 128 and 64 units, respectively. 

We performed a grid search over the following parameters:

\begin{table}[H]
\centering
\caption{Grid of hyperparameters for our naive neural network and NMMR models.}
\resizebox{.6\textwidth}{!}{%
\begin{tabular}{@{}lll@{}}
\toprule
Hyperparameter    & Demand              & dSprite              \\ \midrule
Learning rate     & \{3e-3, 3e-4, 3e-5\} & \{3e-4, 3e-5, 3e-6\} \\
L2 penalty        & \{3e-5, 3e-6, 3e-7\}      & \{3e-6, 3e-7\}       \\
\# of epochs      & 3000                & 500                  \\
Batch size        & 1000                & 256                  \\
Layer width decay &                     & \{0.25, 0.5\}        \\
Network width     & \{10, 40, 80\}      &                      \\
Network depth     & \{2, 3, 4, 5\}      &                      \\ \bottomrule
\end{tabular}%
}\label{tab:hp_grid}
\end{table}
We selected the final hyperparameters by considering the lowest average U-statistic or V-statistic on held-out validation sets for NMMR or the MSE for the Naive Net. For the Demand experiment, we repeated this process 10 times with different random seeds and averaged the statistics. For the dSprite experiment, we had 3 repetitions. 

Full hyperparameter choices for all methods used in this work are available in our code. The hyperparameters selected for NMMR were tuned for each dataset:

\begin{table}[H]
\centering
\caption{Optimal hyperparameters for NMMR methods}
\resizebox{\textwidth}{!}{%
\begin{tabular}{@{}lllll@{}}
\toprule
Hyperparameter    & NMMR-U Demand & NMMR-U dSprite & NMMR-V Demand & NMMR-V dSprite \\ \midrule
Learning rate     & 3e-3          & 3e-5           & 3e-3          & 3e-5           \\
L2 penalty        & 3e-6          & 3e-6           & 3e-6          & 3e-7           \\
\# of epochs      & 3,000         & 500            & 3,000         & 500            \\
Batch size        & 1,000         & 256            & 1,000         & 256            \\
Layer width decay & ---           & 0.25           & ---           & 0.5            \\
Network width     & 80            & ---            & 80            & ---            \\
Network depth     & 4             & ---            & 3             & ---            \\ \bottomrule
\end{tabular}%
}
\label{tab:nmmr_hp}
\end{table}

The hyperparameters for Naive Net for each dataset were:
\begin{table}[H]
\centering
\caption{Optimal hyperparameters for Naive Net model}
\resizebox{.6\textwidth}{!}{%
\begin{tabular}{@{}lll@{}}
\toprule
Hyperparameter    & Naive Net Demand & Naive Net dSprite \\ \midrule
Learning rate     & 3e-3             & 3e-5              \\
L2 penalty        & 3e-6             & 3e-6              \\
\# of epochs      & 3,000            & 500               \\
Batch size        & 1,000            & 256               \\
Layer width decay & ---              & 0.25              \\
Network width     & 80               & ---               \\
Network depth     & 2                & ---               \\ \bottomrule
\end{tabular}%
}
\label{tab:nn_hp}
\end{table}

Another hyperparameter of note is the choice of kernel in the loss function of NMMR. Throughout, we relied on the RBF kernel:

$$
k(x_i, x_j) = e^{\frac{-||x_i-x_j||_2^2}{2\sigma^2}} $$
with $\sigma=1$. For future work, we could consider other choices of the kernel or tune the length scale parameter $\sigma$. In the dSprite experiment, the kernel function is applied to pairs of $Z$ and $A$ data. Since $A$ is an $64\times 64$ image, we chose to concatenate $(z_i, 0.05a_i)$ as input to the kernel for the $i$-th data point. One could also consider tuning this multiplicative factor, but in practice we found that it allowed for both $Z$ and $A$ to impact the result of the kernel function.

\section{Experiment Details}\label{appendix:experiments}

\subsection{Demand Data Generating Process} \label{appendix:demand_data_gen}

The Demand experiment has the following structural equations:

\begin{compactitem}
    \item Demand: $U \sim \mathcal{U}(0, 10)$
    \item Fuel cost: $[Z_1, Z_2] = [2 \sin(2\pi U/10) + \epsilon_1, 2 \cos(2 \pi U / 10) + \epsilon_2]$
    \item Web page views: $W = 7g(U) + 45 + \epsilon_3$
    \item Price: $A = 35 + (Z_1 + 3)g(U) + Z_2 + \epsilon_4$
    \item Sales: $Y = A \times \min(\exp(\frac{W-A}{10}), 5) - 5g(U) + \epsilon_5$
    \item where $g(u) = 2 \left(\frac{(u-5)^4}{600} + e^{-4(u-5)^2} + \frac{u}{10} - 2 \right)$, and $\epsilon_i \sim \mathcal{N}(0, 1)$
\end{compactitem}

\subsection{Demand causal DAG}

\begin{figure}[h]
\centering
\begin{tikzpicture}[> = stealth, shorten > = 1pt, auto, node distance = 2cm]
\tikzstyle{every state}=[
    draw = black,
    thick,
    fill = white,
    minimum size = 6mm
]
\node (A) {$A$: Price};
\node (Z) [above left = 0.5cm and 0.5cm of A] {$Z$: Cost};
\node (U) [above right = 1cm and -0.2cm of A] {$U$: Demand};
\node (Y) [right = 1cm of A] {$Y$: Sales};
\node (W) [above right = 0.5cm and 0.5cm of Y] {$W$: Views};

\path[->] (A) edge node {} (Y);
\path[->] (U) edge node {} (A);
\path[->] (U) edge node {} (Y);
\path[->] (U) edge node {} (Z);
\path[->] (U) edge node {} (W);
\path[->] (Z) edge node {} (A);
\path[->] (W) edge node {} (Y);
\end{tikzpicture}
\caption{Causal DAG for the Demand experiment.}
\label{fig:demand_dag}
\end{figure}
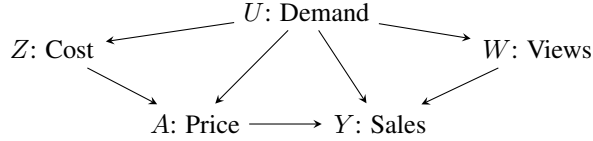

\subsection{Demand Exploratory Data Analysis} \label{appendix:demand_eda}
In Figure \ref{fig:demand_eda}, Panels A and B show that $W$ is a more informative proxy for $U$ than $Z$, although neither relationship is one-to-one. Panel C shows that the true potential outcome curve, denoted by the black curve $a \mapsto E[Y^a]$, deviates from the observed $(A, Y)$ distribution due to confounding. In particular, the largest deviation occurs at smaller values of $A$. The goal of each method is to recover this average potential outcome curve given data on $A, Z, W,$ and $Y$.

\begin{figure}[H]
    \centering
    \includegraphics[width=\textwidth]{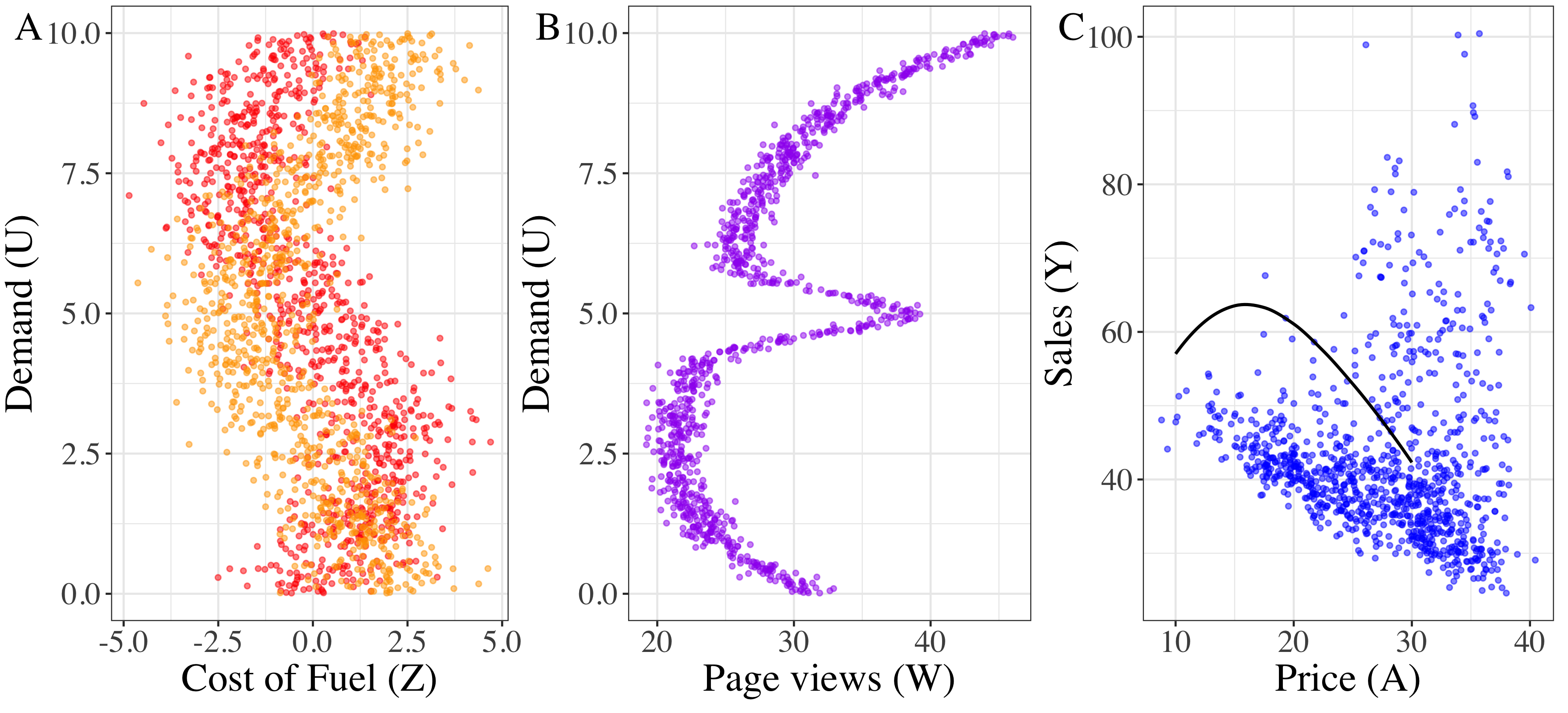}
    \caption{Views of the $A, Z, W, Y, U$ relationships. (A) $Z_1$ (red) and $Z_2$ (orange) have sinusoidal relationships with $U$, (B) $W$ has a far less noisy relationship with $U$, and (C) the observed distribution $(A, Y)$ (blue) deviates from the true average potential outcome curve (black).}
    \label{fig:demand_eda}
\end{figure}

\subsection{Demand Boxplot Statistics}
Table \ref{tab:demand_table} contains the median and interquartile ranges (in parentheses) of the c-MSE values compiled in the boxplots shown in Figure \ref{fig:demand_boxplot}. NMMR demonstrated state of the art performance on the Demand benchmark. We also extended the benchmark to include training set sizes of 10,000 and 50,000 data points, whereas \citet{Xu2021-io} originally included 1,000 and 5,000 data points. We observed that NMMR-U had strong performance across all dataset sizes. PMMR and KPV were unable to run on 50,000 training points due to computational limits on their kernel methods, while DFPV exhibited a large increase in c-MSE as training set size increased.
\begin{table}[H]
\centering
\caption{Demand Boxplot Median \& (IQR) values}
\resizebox{\textwidth}{!}{%
\begin{tabular}{@{}lcccc@{}}
\toprule
              & \multicolumn{4}{c}{Training Set Size}                                                            \\ \cmidrule(l){2-5} 
Method        & 1,000                  & 5,000                  & 10,000                & 50,000                 \\ \midrule
PMMR          & 587.51 (40.35)         & 466.5 (33.47)          & 423.1 (29.26)         & ---                    \\
KPV           & 469.94 (97.07)         & 481.32 (54.8)          & 470.62 (29.3)         & ---                    \\
Naive Net     & 160.35 (33.78)         & 186.97 (30.22)        & 204.36 (113.71))       & 224.09 (33.17)         \\
CEVAE         & 180.8 (161.26)         & 214.98 (120.88)        & 170.58 (176.1)        & 171.98 (293.27)        \\
2SLS          & 82.08 (18.82)          & 83.16 (4.51)           & 82.1 (5.55)           & 82.01 (2.22)           \\
DFPV          & 41.83 (11.78)          & 48.22 (7.73)           & 87.14 (471.59)        & 242.15 (464.38)        \\
LS            & 63.19 (5.82)           & 65.14 (2.64)           & 64.98 (2.44)          & 64.65 (0.74)            \\
\textbf{NMMR-U (ours)} & 23.68 (8.02)           & \textbf{16.21 (10.55)} & \textbf{14.25 (4.46)} & \textbf{14.27 (12.47)} \\
\textbf{NMMR-V (ours)} & \textbf{23.41 (11.26)} & 30.74 (17.73)          & 42.88 (29.45)         & 62.18 (16.97)          \\ \bottomrule
\end{tabular}%
}\label{tab:demand_table}
\end{table}

\subsection{Demand Prediction Curves} \label{section:demand_prediction_curves}

Figures \ref{fig:demand_predcurve1k}-\ref{fig:demand_predcurve50k} provide the individual predicted potential outcome curves of each method in the Demand experiment. While Figure \ref{fig:demand_boxplot} provides a summary of the c-MSE, this does not give a picture of the model's actual estimate of $\mathbb{E}[Y^a]$. These figures give an insight into ranges of $A$ for which each model provides particularly accurate or inaccurate estimates of the potential outcomes. Across all training set sizes, methods are most accurate in the region where the training observations of $Y$ are closest to the ground truth (see Figure \ref{fig:demand_eda}).

\begin{figure}[H]
     \centering
         \centering
         \includegraphics[width=\textwidth]{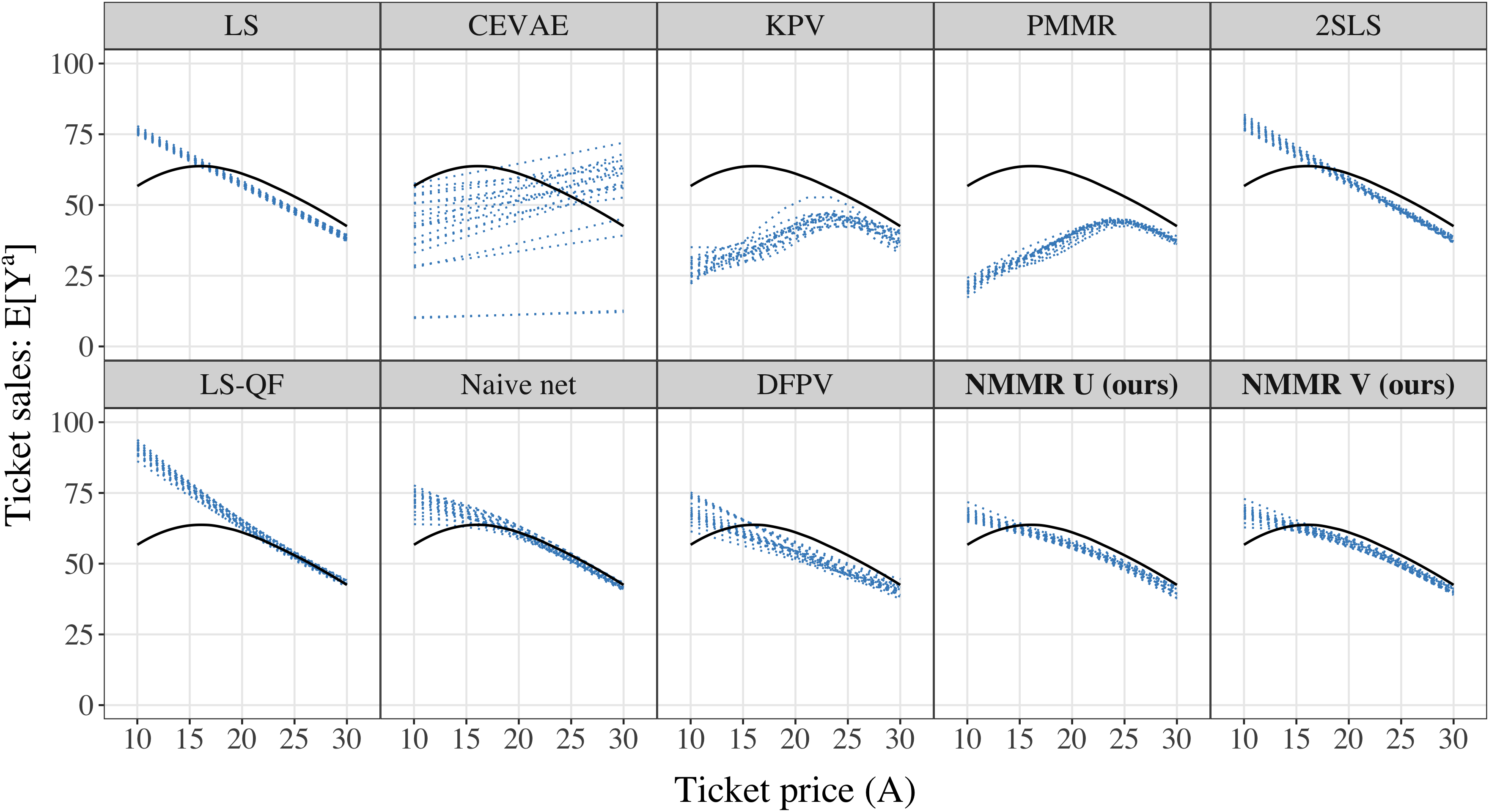}
         \caption{Demand experiment with 1,000 training data points with the true average potential outcome curves (black) and each method's predicted potential outcome curves (blue). Each method was replicated 20 times, generating one predicted curve per replicate. Note that with only a limited amount of data, most methods only recover the true curve in the later half of the range of $A$. See Figure \ref{fig:demand_eda}.}
         \label{fig:demand_predcurve1k}
\end{figure}

\begin{figure}[H]
     \centering
         \centering
         \includegraphics[width=\textwidth]{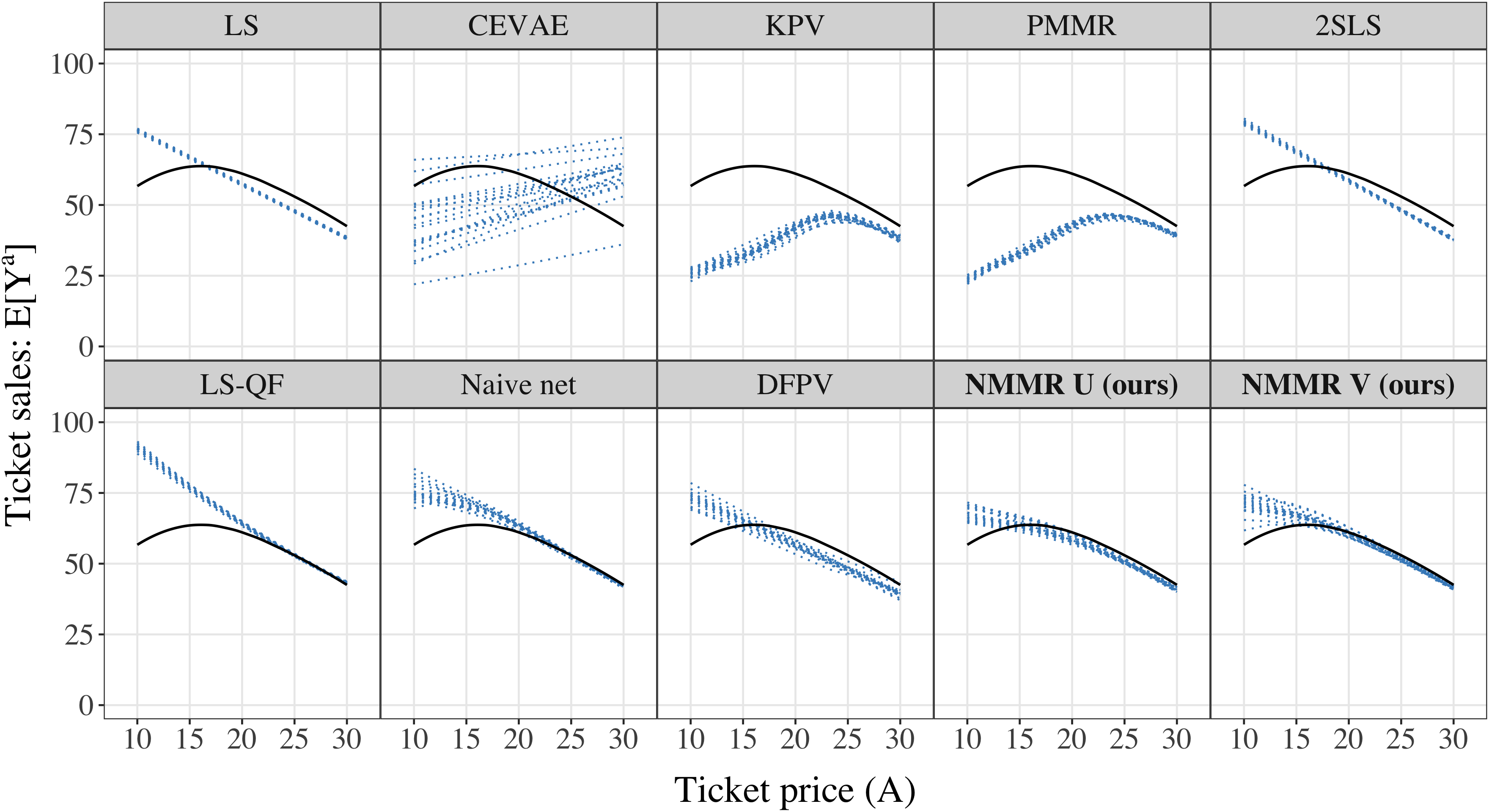}
         \caption{Demand experiment with 5,000 training data points with the true average potential outcome curves (black) and each method's predicted potential outcome curves (blue). Each method was replicated 20 times, generating one predicted curve per replicate. Note that now, NMMR begins to adjust in the range of $A\in[10,20]$ and bend down towards the true curve. NMMR is empirically accounting for the unobserved confounder $U$ through the proxy variables.}
         \label{fig:demand_predcurve5k}
\end{figure}

\begin{figure}[H]
     \centering
         \centering
         \includegraphics[width=\textwidth]{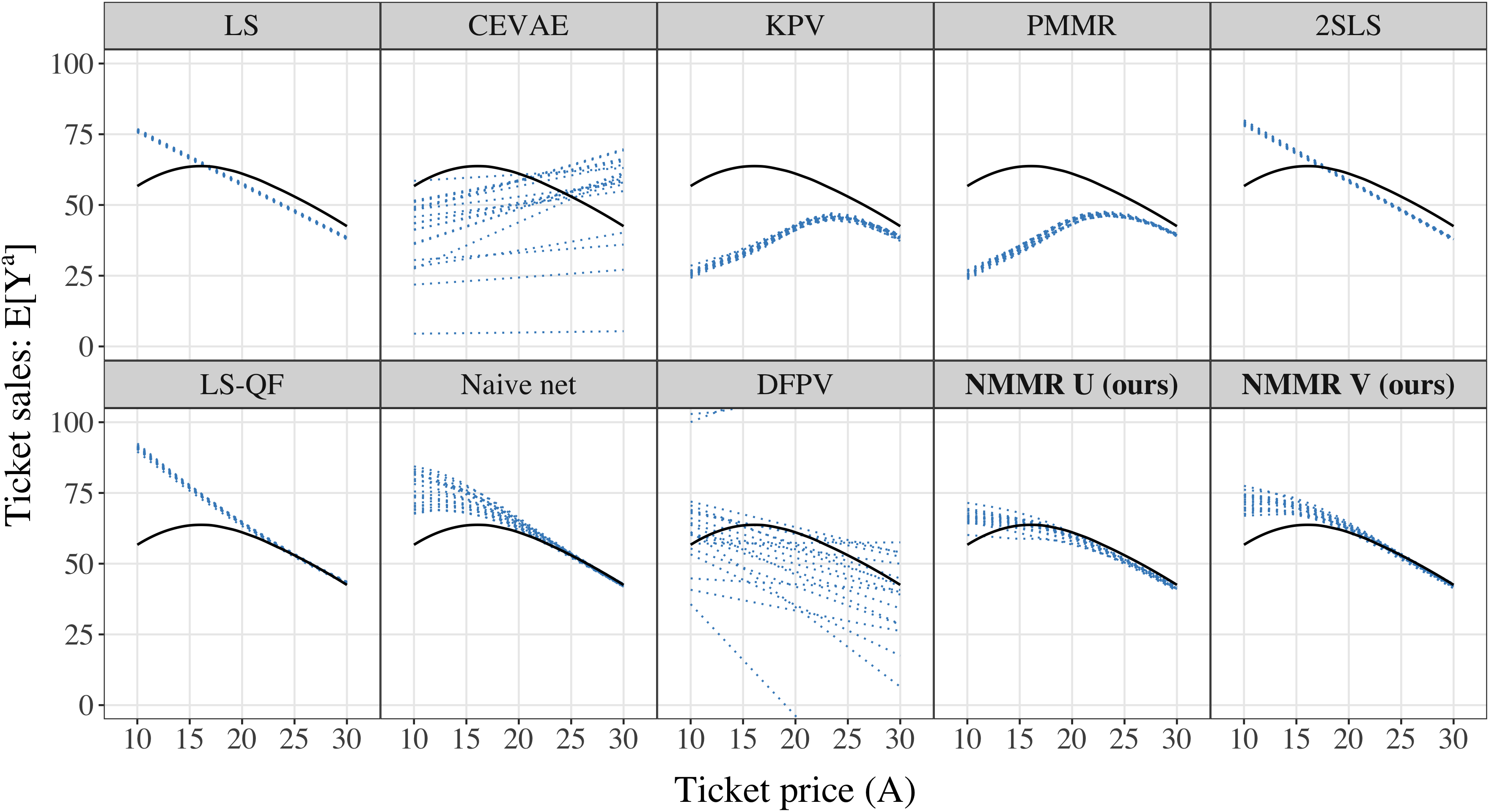}
         \caption{Demand experiment with 10,000 training data points with the true average potential outcome curves (black) and each method's predicted potential outcome curves (blue). Each method was replicated 20 times, generating one predicted curve per replicate. We observed some additional curvature to NMMR prediction curves}
         \label{fig:demand_predcurve10k}
\end{figure}

\begin{figure}[H]
     \centering
         \centering
         \includegraphics[width=\textwidth]{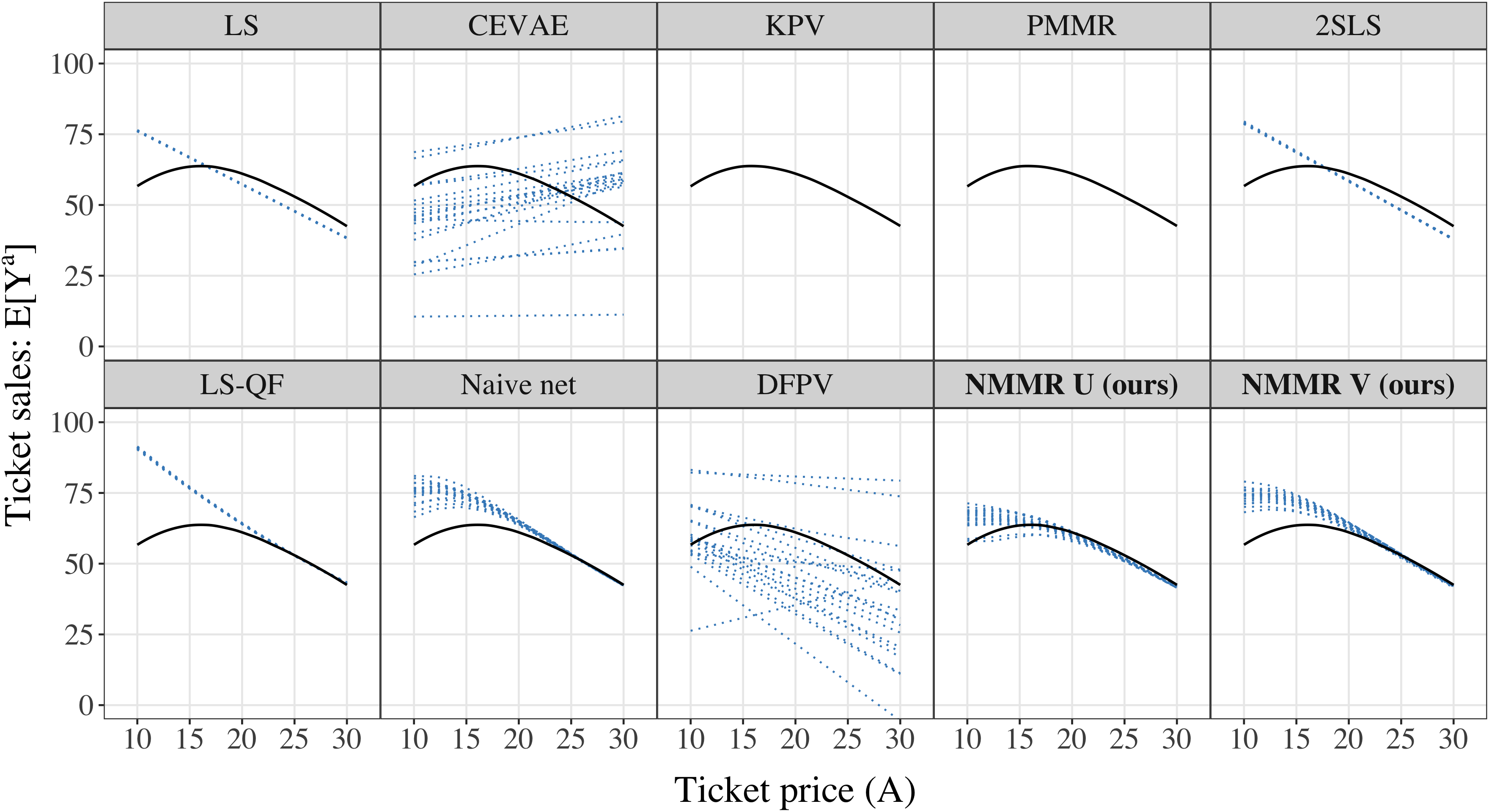}
         \caption{Demand experiment with 50,000 training data points with the true average potential outcome curves (black) and each method's predicted potential outcome curves (blue). Each method was replicated 20 times, generating one predicted curve per replicate. KPV and PMMR timed out due to computational requirements of their kernel methods.}
         \label{fig:demand_predcurve50k}
\end{figure}

\subsection{dSprite Data Generating Process}\label{section:dsprite_datagen}

The dSprite experiment has a unique data generating mechanism, given that the images $A$ and $W$ are queried from an existing dataset rather than generated on the fly. The dataset is indexed by parameters: shape, color, scale, rotation, posX, and posY. As mentioned in the paper, this experiment fixes shape = heart, color = white. Therefore, to simulate data from this system, we follow the steps:

\begin{compactenum}
    \item Simulate values for scale, rotation, posX, posY \textsuperscript{\textdagger}.
    \item Set $U = $ posY.
    \item Set $Z = $ (scale, rotation, posX).
    \item Set $A$ equal to the dSprite image with the corresponding (scale, rotation, posX and posY) as found in $Z$ and $U$, then add $\mathcal{N}(0, 0.1)$ noise to each pixel.
    \item Set $W$ equal to the dSprite image with (scale=0.8, rotation=0, posX=0.5) and posY from $U$, then add $\mathcal{N}(0, 0.1)$ noise to each pixel.
    \item Compute $Y = \frac{\frac{1}{10} ||vec(A)^TB||_2^2 - 5000}{1000} \times \frac{(31 \times \text{U} - 15.5)^2}{85.25} + \epsilon$, $\epsilon \sim \mathcal{N}(0, 0.5)$
\end{compactenum}

\textsuperscript{\textdagger} Let $\mathcal{DU}(a, b)$ denote a Discrete Uniform distribution from $a$ to $b$. Scale is a Discrete Uniform random variable taking values [0.5, 0.6, 0.7, 0.8, 0.9, 1.0] with equal probability. Rotation $\sim \mathcal{DU}(0, 2\pi)$ with 40 equally-spaced values. And both posX, posY $\sim \mathcal{DU}(0, 1)$ with 32 equally-spaced values. 

\subsection{dSprite causal DAG}

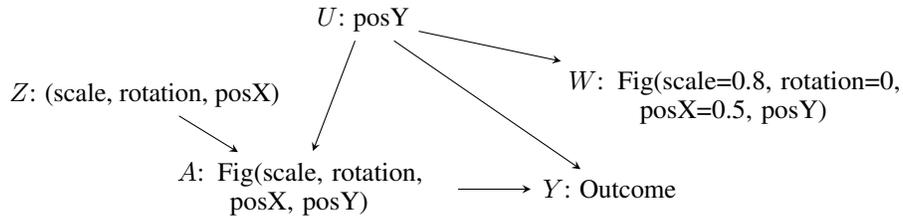
\begin{figure}[h]
\centering
\begin{tikzpicture}[> = stealth, shorten > = 1pt, auto, node distance = 2cm]
\tikzstyle{every state}=[
    draw = black,
    thick,
    fill = white,
    minimum size = 6mm
]
\node[text width=4cm,align=center] (A) {$A$: Fig(scale, rotation, \\posX, posY)};
\node (Z) [above left = 0.5cm and -2cm of A] {$Z$: (scale, rotation, posX)};
\node (U) [above right = 1.5cm and -2cm of A] {$U$: posY};
\node (Y) [right = 1cm of A] {$Y$: Outcome};
\node[text width=5cm,align=center] (W) [above right = 0.5cm and -2 cm of Y] {$W$: Fig(scale=0.8, rotation=0, \\posX=0.5, posY)};

\path[->] (A) edge node {} (Y);
\path[->] (U) edge node {} (A);
\path[->] (U) edge node {} (Y);
\path[->] (U) edge node {} (W);
\path[->] (Z) edge node {} (A);
\end{tikzpicture}
\caption{DAG for the dSprite experiment}
\label{fig:dsprite_dag}
\end{figure}

\subsection{dSprite Exemplar $A$ and $W$}

\begin{figure}[H]

    \begin{subfigure}{.45\textwidth}
    \centering
    \includegraphics[width=\textwidth]{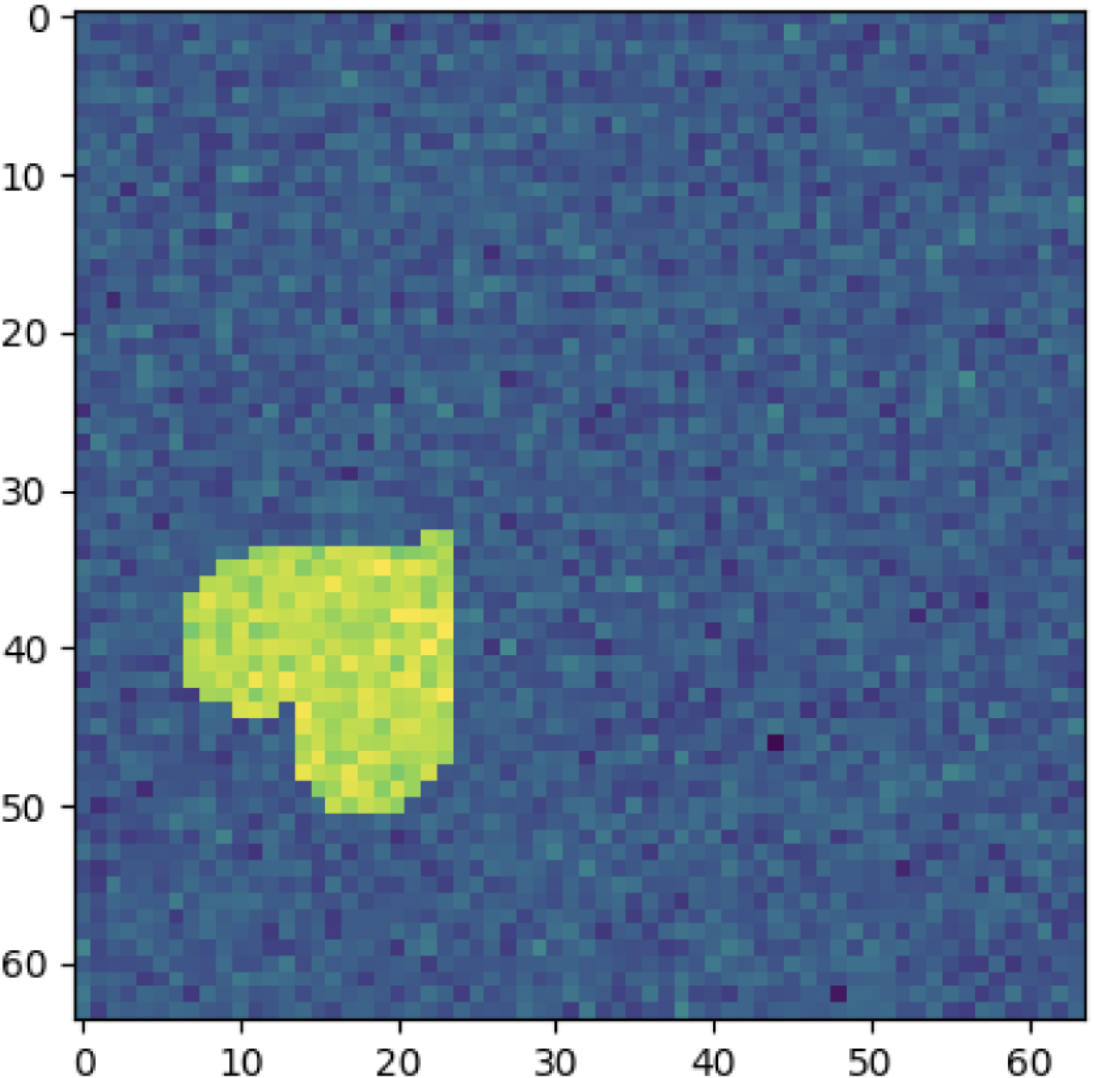}
    \caption{Example of $A$ in dSprite corresponding (scale, rotation, posX and posY) determined from $Z$ and $U$.}
    \end{subfigure}
    \hfill
    \begin{subfigure}{.45\textwidth}
    \centering
    \includegraphics[width=\textwidth]{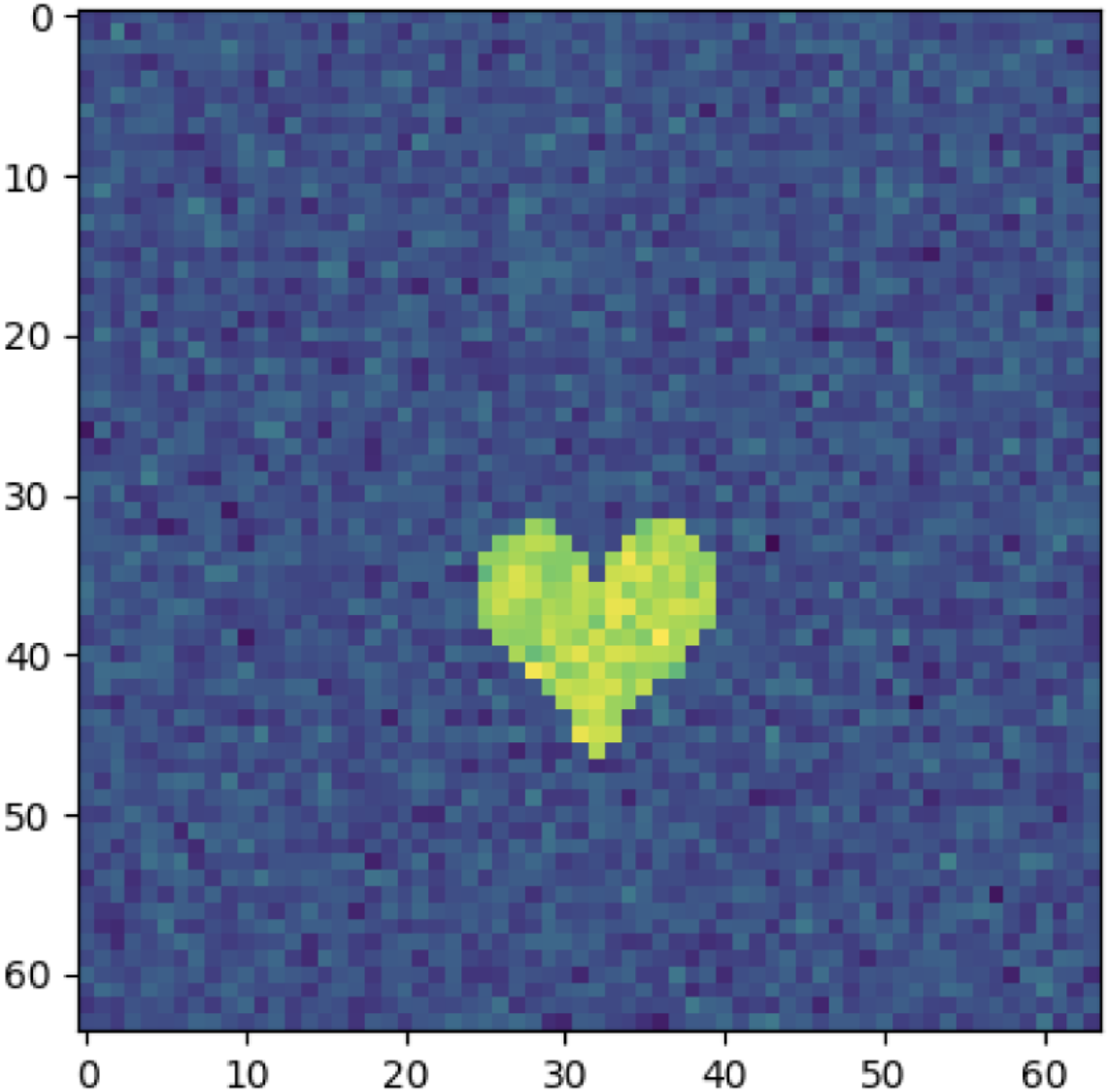}
    \caption{Example of $W$ in dSprite, which is always a centered, vertical heart with a fixed scale. The only thing that changes is posY, which is determined by $U$.}
    \end{subfigure}
    
    \caption{Examples of the image based treatment $A$ and outcome-inducing proxy $W$ in the dSprite experiment. Previous proximal inference methods did not take advantage of the inductive bias of image convolutions, which NMMR naturally incorporates into its neural network architecture for the dSprite benchmark.}
    \label{fig:dsprite_examples}
\end{figure}

\subsection{dSprite Test Set} \label{appendix:dsprite_test_set}

The dSprite test set consists of 588 images $A$ spanning the following grid of parameters:
\begin{compactitem}
    \item posX $\in [0, \frac{5}{31}, \frac{10}{31}, \frac{15}{31}, \frac{20}{31}, \frac{25}{31}, \frac{30}{31}]$
    \item posY $\in [0, \frac{5}{31}, \frac{10}{31}, \frac{15}{31}, \frac{20}{31}, \frac{25}{31}, \frac{30}{31}]$
    \item scale $\in [0.5, 0.8, 1.0]$
    \item rotation $\in [0, 0.5 \pi, \pi, 1.5\pi]$
\end{compactitem}

The labels for each test image $A$ are computed as $\mathbb{E}[Y^a] = \frac{\frac{1}{10}||vec(a)^T B||_2^2 - 5000}{1000}$.


\subsection{dSprite Boxplot Statistics}
Table \ref{tab:dsprite_table} contains the median and interquartile ranges (in parentheses) of the c-MSE values compiled in the boxplots shown in Figure \ref{fig:dsprite_boxplot}. NMMR demonstrated state of the art performance on the dSprite benchmark. We also extended the benchmark to include training set sizes of 7,500 data points, whereas \citet{Xu2021-io} originally included 1,000 and 5,000 data points. We observed that NMMR-V had strong performance across all dataset sizes and particularly excelled when training data increased. Most other methods remained relatively consistent as the amount of data increased.
\begin{table}[H]
\centering
\caption{dSprite Boxplot Median \& (IQR) values}
\resizebox{.7\textwidth}{!}{%
\begin{tabular}{@{}lccc@{}}
\toprule
              & \multicolumn{3}{c}{Training Set Size}                             \\ \cmidrule(l){2-4} 
Method        & 1,000               & 5,000                & 7,500               \\ \midrule
PMMR          & 17.7 (0.78)         & 17.74 (0.64)         & 16.2 (0.48)          \\
KPV           & 23.4 (1.37)         & 16.58 (0.93)         & 14.46 (1.01)         \\
Naive Net     & 32.25 (9.72)       & 34.24 (11.95)        & 34.76 (11.95)        \\
CEVAE         & 26.34 (0.82)        & 26.16 (1.51)         & 25.77 (1.45)         \\
DFPV          & 10.02 (2.95)        & 8.81 (2.04)          & 8.52 (1.06)          \\
\textbf{NMMR-U (ours)} & \textbf{4.72 (1.1)} & 7.1 (2.74)           & 7.52 (2.05)          \\
\textbf{NMMR-V (ours)} & 11.8 (1.88)         & \textbf{1.82 (0.67)} & \textbf{1.53 (0.68)} \\ \bottomrule
\end{tabular}%
}\label{tab:dsprite_table}
\end{table}

\subsection{dSprite DFPV vs. NMMR Evaluation}
In order to assess whether our improved performance on dSprite was due to the fact that NMMR leveraged convolutional neural networks while DFPV relied on multi-layer perceptrons with a spectral-norm regularization. We modified DFPV to include the same VGG-like heads mentioned in Appendix \ref{appendix:hp_model_arch}. We performed a grid search over the same learning rates and L2 penalties in Table \ref{tab:hp_grid}. Figure \ref{fig:dfpv_cnn} shows that DFPV with CNNs actually performed slightly worse than the original, published-version of DFPV. We report several different results for DFPV CNN since the results were so close after cross validation. Figure \ref{fig:dfpv_cnn} shows results for only the 1,000 data point evaluation -- DFPV had trouble scaling in practice as dataset size increased.

\begin{figure}[H]
    \centering
    \includegraphics[width=\textwidth]{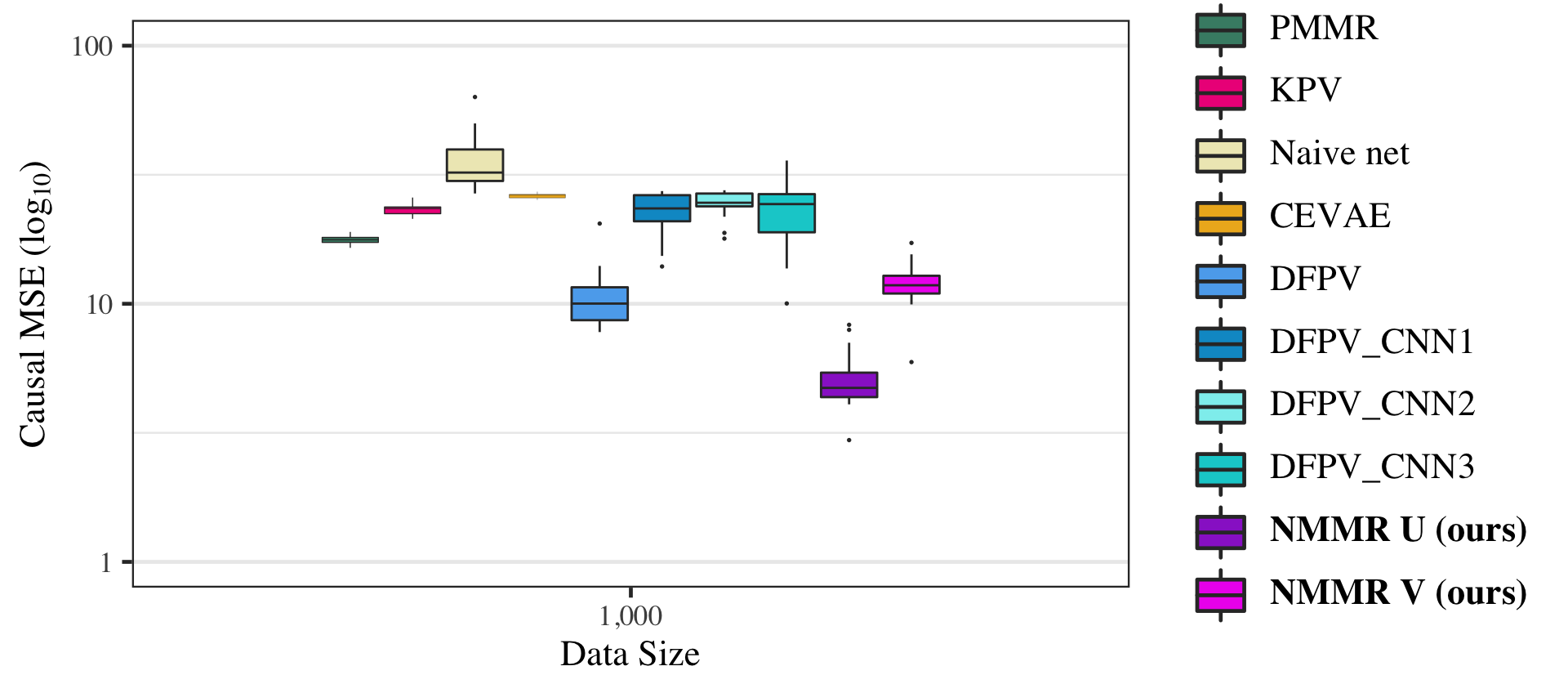}
    \caption{Performance of DFPV with CNNs compared to other evaluated methods on the dSprite benchmark. DFPV with CNNs performed worse compared to the published version of DFPV across a variety of hyperparameters}
    \label{fig:dfpv_cnn}
\end{figure}

\begin{table}[H]
\centering
\resizebox{.5\textwidth}{!}{%
\begin{tabular}{@{}lll@{}}
\toprule
Model     & Learning Rate & Weight Decay \\ \midrule
DFPV\_CNN1 & 3e-6          & 3e-6         \\
DFPV\_CNN2 & 3e-6          & 3e-7         \\
DFPV\_CNN3 & 3e-5          & 3e-7         \\ \bottomrule
\end{tabular}%
}
\caption{Hyperparameters for reported DFPV\_CNN models}
\label{tab:dfpv_cnn}
\end{table}

\section{Batched Loss Function} \label{sec:batched_loss}
When computing the unregluarized version of the loss function of NMMR:
$$\mathcal{L} = (Y-h(A,W,X))^TK(Y-h(A,W,X))$$
we either had to compute the kernel matrix $K$ for all points in the training set once, or dynamically calculate this matrix per batch. The latter approach would require many, many more calculations since we'd be repeating this process every batch and every epoch. 

Our solution relied on batching the V-statistic and U-statistic. Recall we can write the V-statistic as:
$$\hat{R}_V(h) = n^{-2} \sum_{ i, j = 1 }^n { \left( y_i - h_i \right) \left( y_j - h_j \right) k_{ij} }$$
We can vectorize this double sum as a series of vector and matrix multiplciations: 
\begin{align*}
    \begin{pmatrix}y_1-h(a, w_1, x_1),\dots,y_n-h(a, w_n, x_n)\end{pmatrix}\begin{pmatrix}k_{1,1} & \dots & k_{1,n} \\
\vdots & \ddots & \vdots \\ k_{n, 1} & \dots & k_{n,n} \end{pmatrix}\begin{pmatrix}y_1-h(a, w_1, x_1)\\\dots \\y_n-h(a, w_n, x_n)\end{pmatrix}
\end{align*}
and for the U-statistic, we can simply set the main diagonal of $K$ to be 0 to eliminate $i=j$ terms from this double sum. 

However, calculating $K$ for large datasets in a tensor-friendly manner resulted in enormous GPU allocation requests, on the order of 400GBs in the dSprite experiments. We implemented a batched version of the matrix multiplication above to circumvent this issue: 

\lstinputlisting[language=Python]{loss.py}

\section{Noise Figures} \label{appendix:noise}

In the Demand noise experiment, we tested each method's ability to estimate $\mathbb{E}[Y^a]$ given varying levels of noise in the proxies $Z$ and $W$. Specifically, we varied the variance on the Gaussian noise terms $\epsilon_1$, $\epsilon_2$, and $\epsilon_3$ from the Demand structural equations described in Appendix \ref{appendix:demand_data_gen}. We will refer to these variances as $\sigma^2_{Z_1}$, $\sigma^2_{Z_2}$, and $\sigma^2_{W}$, respectively, and we set $\sigma^2_{Z_1} = \sigma^2_{Z_2}$ throughout. We refer to the pair $(\sigma^2_{Z_1}, \sigma^2_{Z_2})$ as "Z noise" and $\sigma^2_W$ as "W noise". In \citet{Xu2021-io}, these variances were all equal to 1. We evaluated each method\textdagger on 5000 samples from the Demand data generating process with the following $Z$ and $W$ noise levels:

$$\sigma^2_{Z_1}, \sigma^2_{Z_2} \in \{0, 0.01, 0.1, 0.5, 1, 2, 4, 8, 16\}$$
$$\sigma^2_W \in \{0, 0.01, 0.1, 0.5, 1, 16, 64, 150\}$$

In total there are 9 x 8 = 72 noise levels. From Appendix Figure \ref{fig:demand_eda}, Panels A and B, we can see that $Z_1$ and $Z_2$ lie approximately within the interval $[-4, 4]$, whereas $W$ lies approximately in the interval $[20, 45]$. Accordingly, we chose the maximum value of $Z$ and $W$ noise to be the square of half of the variable's range. So for $W$, half of this range is approximately 12.5 units, therefore the maximum value for $\sigma^2_W$ is $12.5^2 \approx 150$. Similarly, half of the range of $Z$ is 4, and so the maximum value of both $\sigma^2_{Z_1}$ and $\sigma^2_{Z_2}$ is $4^2 = 16$. This maximum level of noise is capable of completely removing any information on $U$ contained in $Z$ and $W$.

Intuitively, as the noise on $Z$ and $W$ is increased, they become less informative proxies for $U$. We would expect that greater noise levels will degrade each method's performance in terms of c-MSE. This experiment provides a way of evaluating how efficient each method is at recovering information about $U$, given increasingly corrupted proxies. It also provides some initial insights into the relative importance of each proxy, $Z$ and $W$.

Figure \ref{fig:demand_noise_boxplot} contains a 72-window grid plot with 1 window for each combination of $Z$ and $W$ noise and Figures \ref{fig:demand_noise_predcurve_cevae}-\ref{fig:demand_noise_predcurve_nmmr_v} show each method's individual potential outcome prediction curves at each of the 72 noise levels. We can see that NMMR-V is notably more robust to noise than NMMR-U, and also appears to be the most efficient method at higher noise levels. NMMR also consistently outperforms Naive Net, supporting the utility of the U- and V-statistic loss functions. However, we also note that kernel-based methods, such as KPV and PMMR, rank increasingly well with increased noise level, likely due to their lack of data adaptivity. We also observe that less data adaptive methods are less prone to large errors. Finally, we see a surprising trend that as the $Z$ noise is increased, several methods achieve lower c-MSE. We believe this stems from the fact that $W$ is a more informative proxy, so it is possible that noising $Z$ aids methods in relying more strongly on the better proxy for $U$. 

\textdagger We used the optimal hyperparameters for NMMR-U, NMMR-V and Naive Net found through tuning, as described in Appendix \ref{appendix:hp_model_arch}.

\begin{sidewaysfigure}[!htbp]
\centering
    \includegraphics[width=\textwidth]{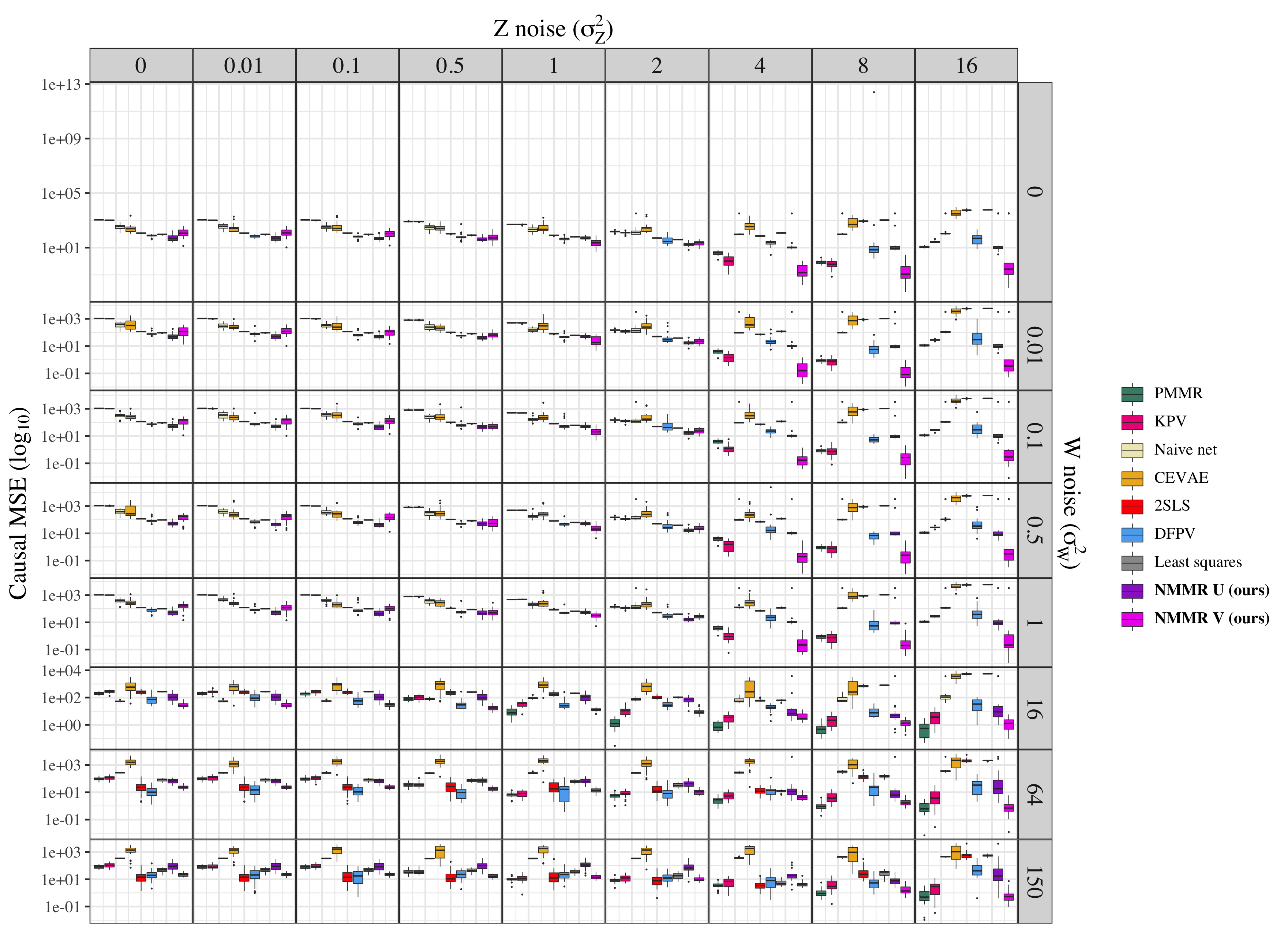}
    \caption{Causal MSE (c-MSE) of NMMR and baseline methods across 72 noise levels in the Demand experiment. \\ Each method was replicated 20 times and evaluated on the same 10 test values of $\mathbb{E}[Y^a]$ each replicate. \\ Each individual box plot represents 20 values of c-MSE.}
    \label{fig:demand_noise_boxplot}
\end{sidewaysfigure}

\begin{sidewaysfigure}[!htbp]
\centering
    \includegraphics[width=\textwidth]{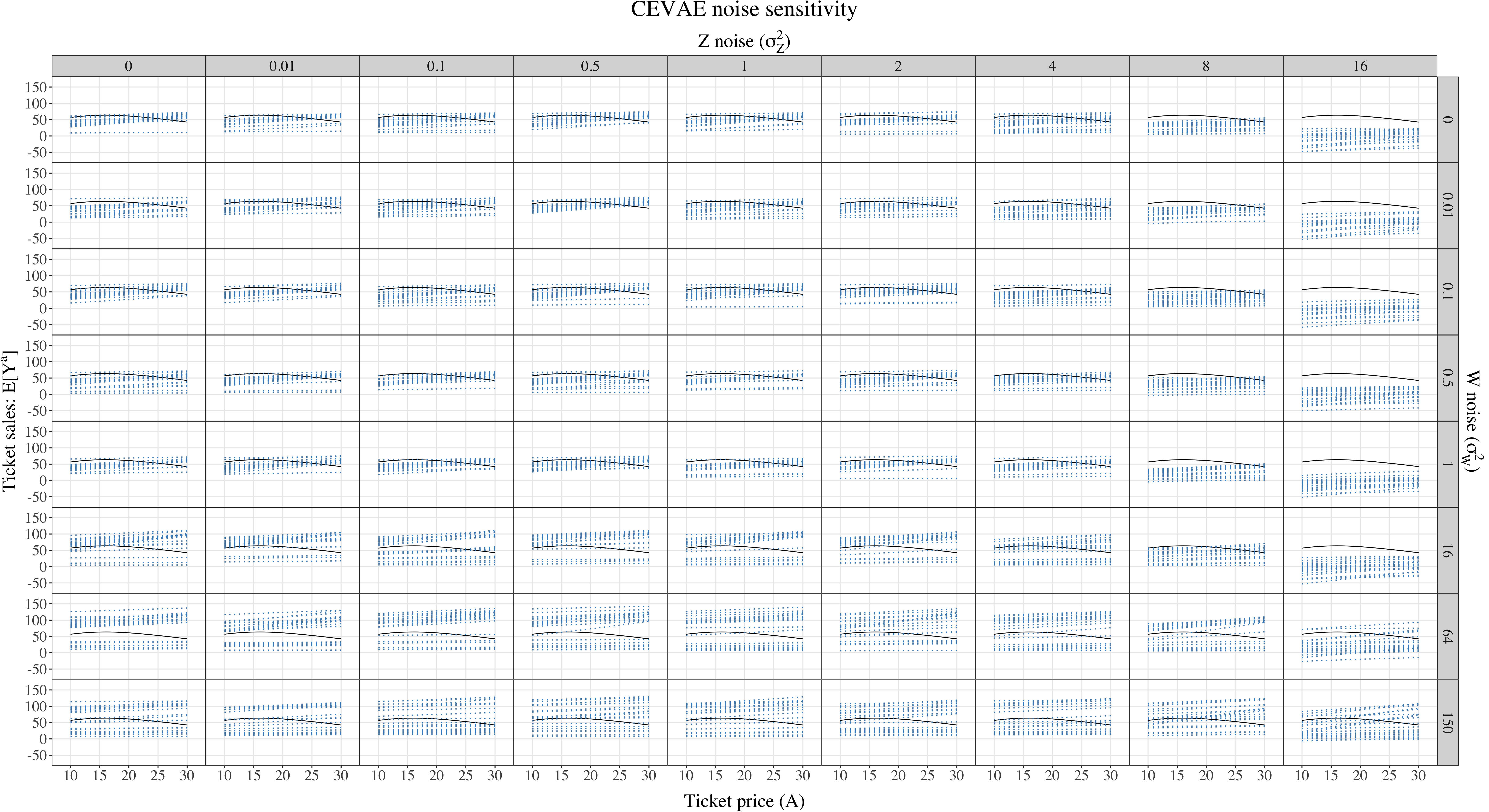}
    \caption{Predicted potential outcome curves for 20 replicates of CEVAE. Black curve is the ground truth $\mathbb{E}[Y^a]$.}
    \label{fig:demand_noise_predcurve_cevae}
\end{sidewaysfigure}

\begin{sidewaysfigure}[!htbp]
\centering
    \includegraphics[width=\textwidth]{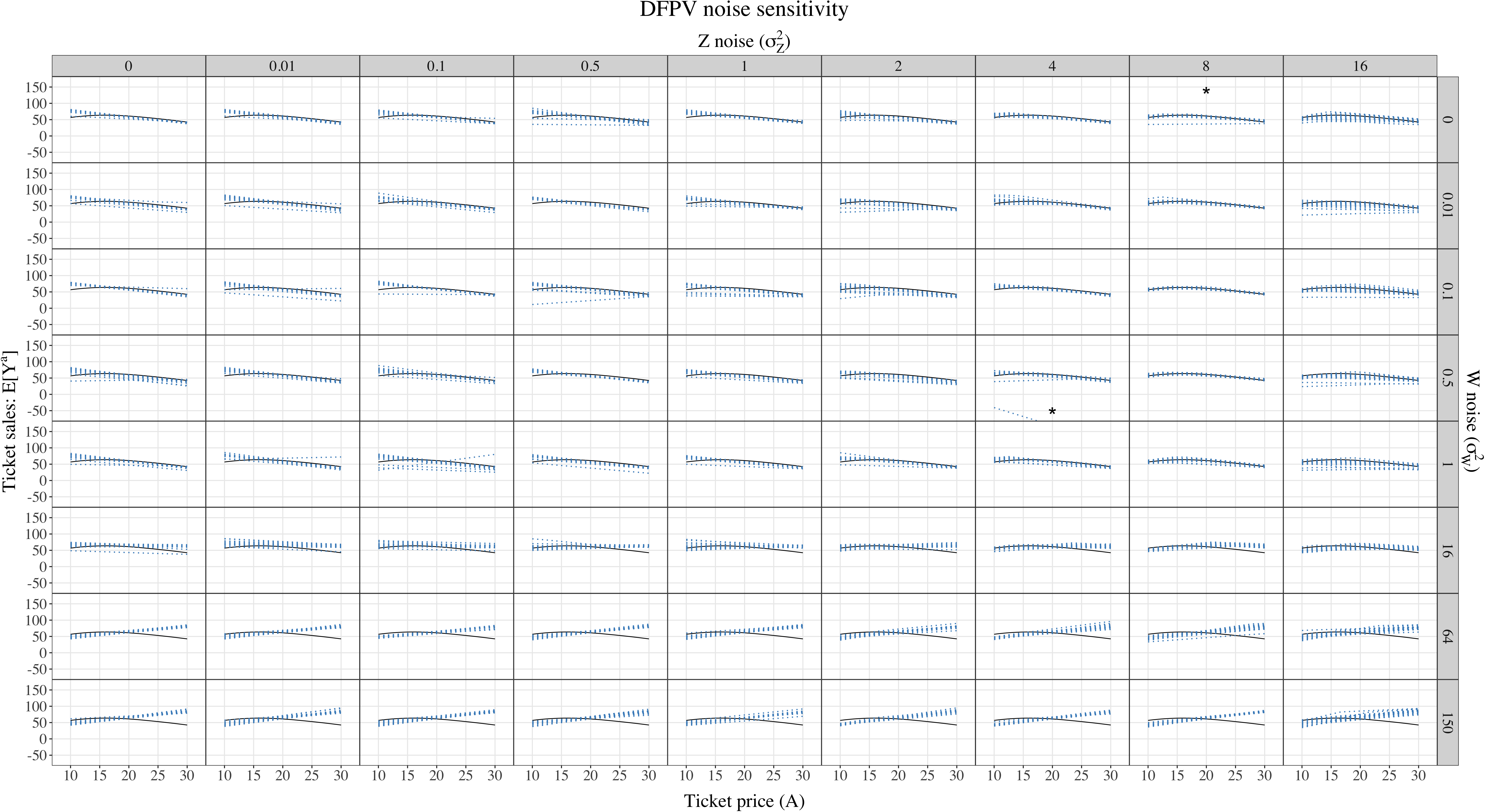}
    \caption{Predicted potential outcome curves for 20 replicates of DFPV. Black curve is the ground truth $\mathbb{E}[Y^a]$. Asterisks each indicate a replicate that lies outside of the plot's range. }
    \label{fig:demand_noise_predcurve_dfpv}
\end{sidewaysfigure}

\begin{sidewaysfigure}[!htbp]
\centering
    \includegraphics[width=\textwidth]{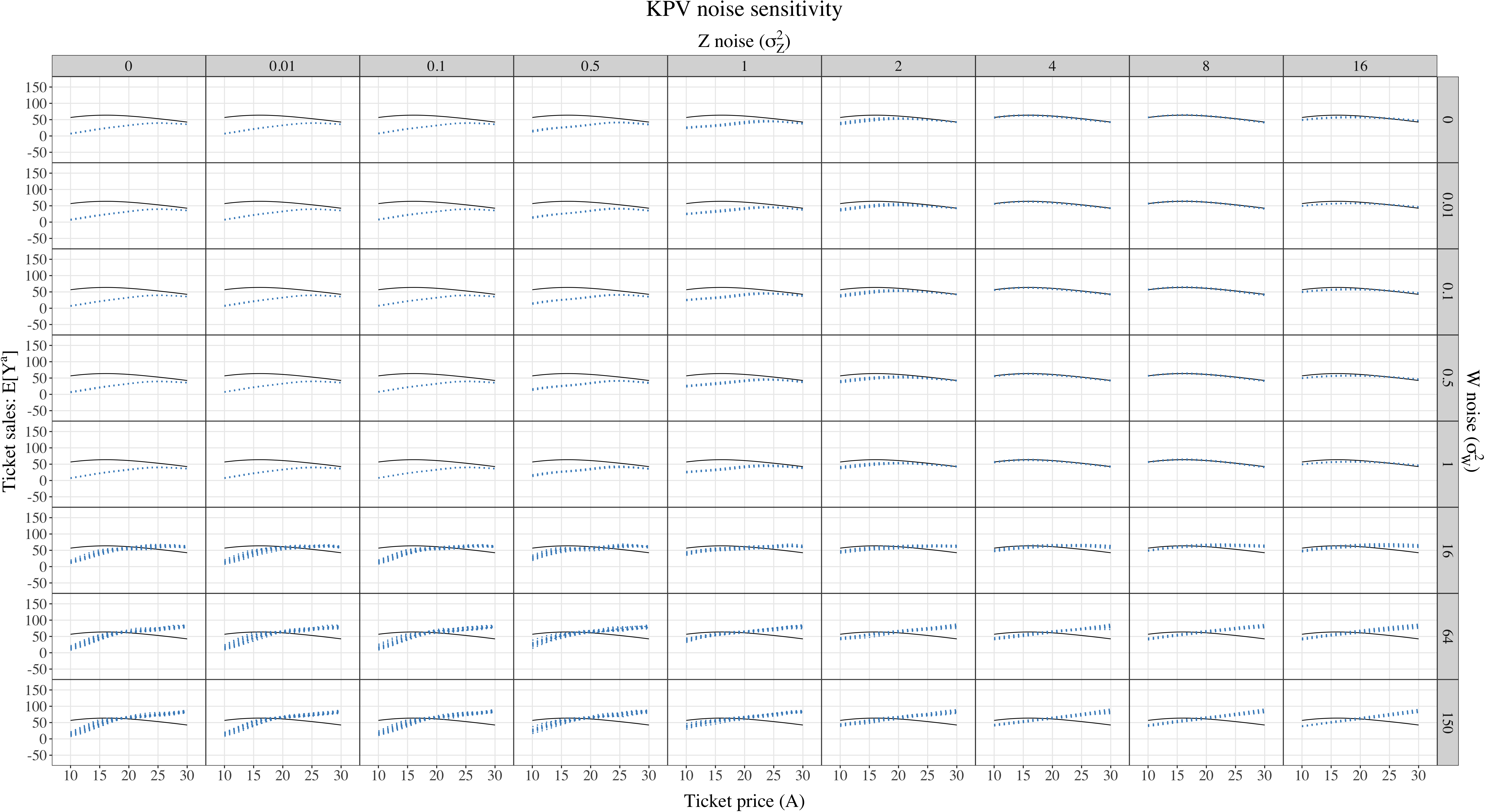}
    \caption{Predicted potential outcome curves for 20 replicates of KPV. Black curve is the ground truth $\mathbb{E}[Y^a]$.}
    \label{fig:demand_noise_predcurve_kpv}
\end{sidewaysfigure}

\begin{sidewaysfigure}[!htbp]
\centering
    \includegraphics[width=\textwidth]{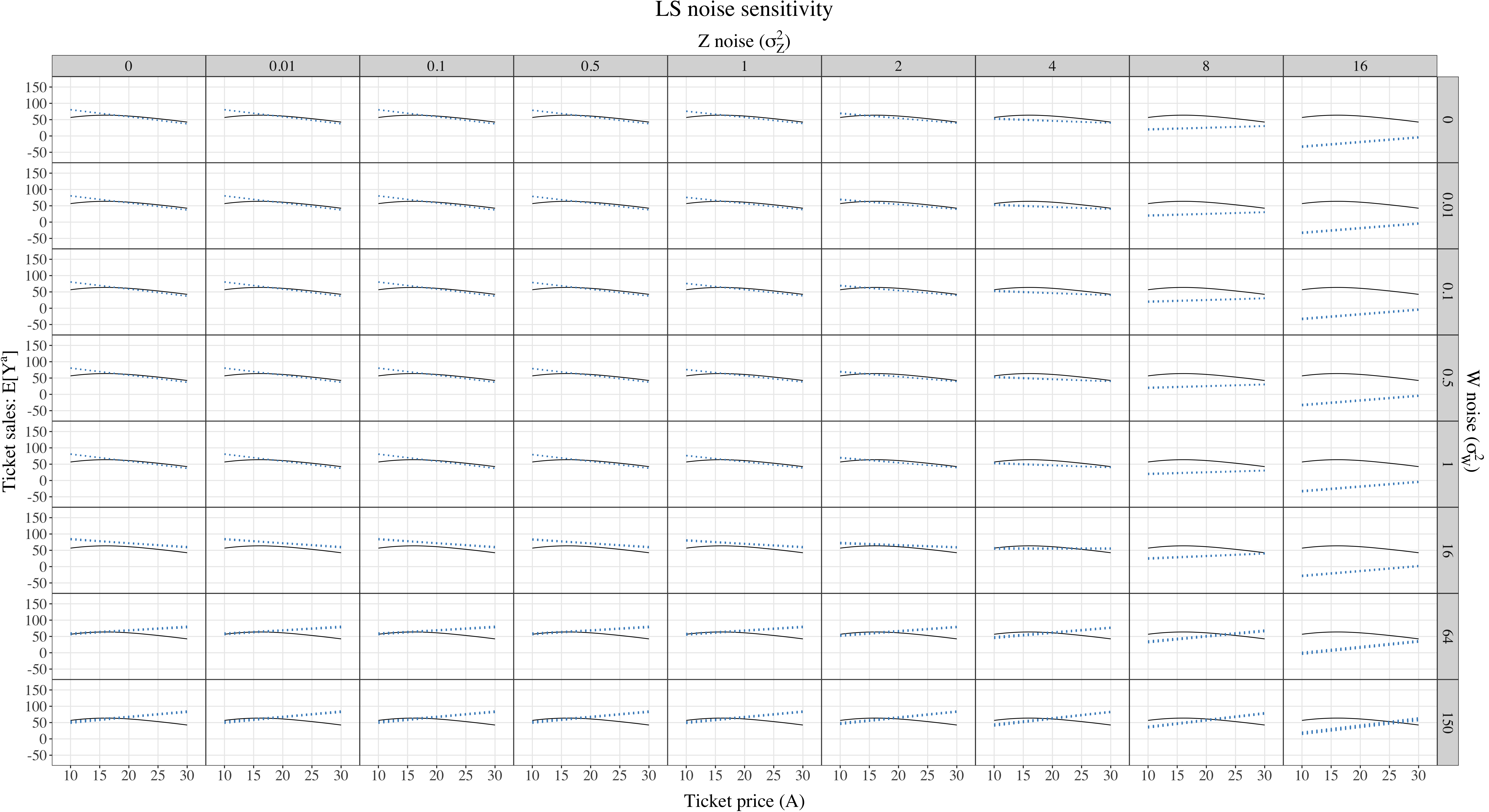}
    \caption{Predicted potential outcome curves for 20 replicates of Least Squares. Black curve is the ground truth $\mathbb{E}[Y^a]$.}
    \label{fig:demand_noise_predcurve_lr_awzy}
\end{sidewaysfigure}

\begin{sidewaysfigure}[!htbp]
\centering
    \includegraphics[width=\textwidth]{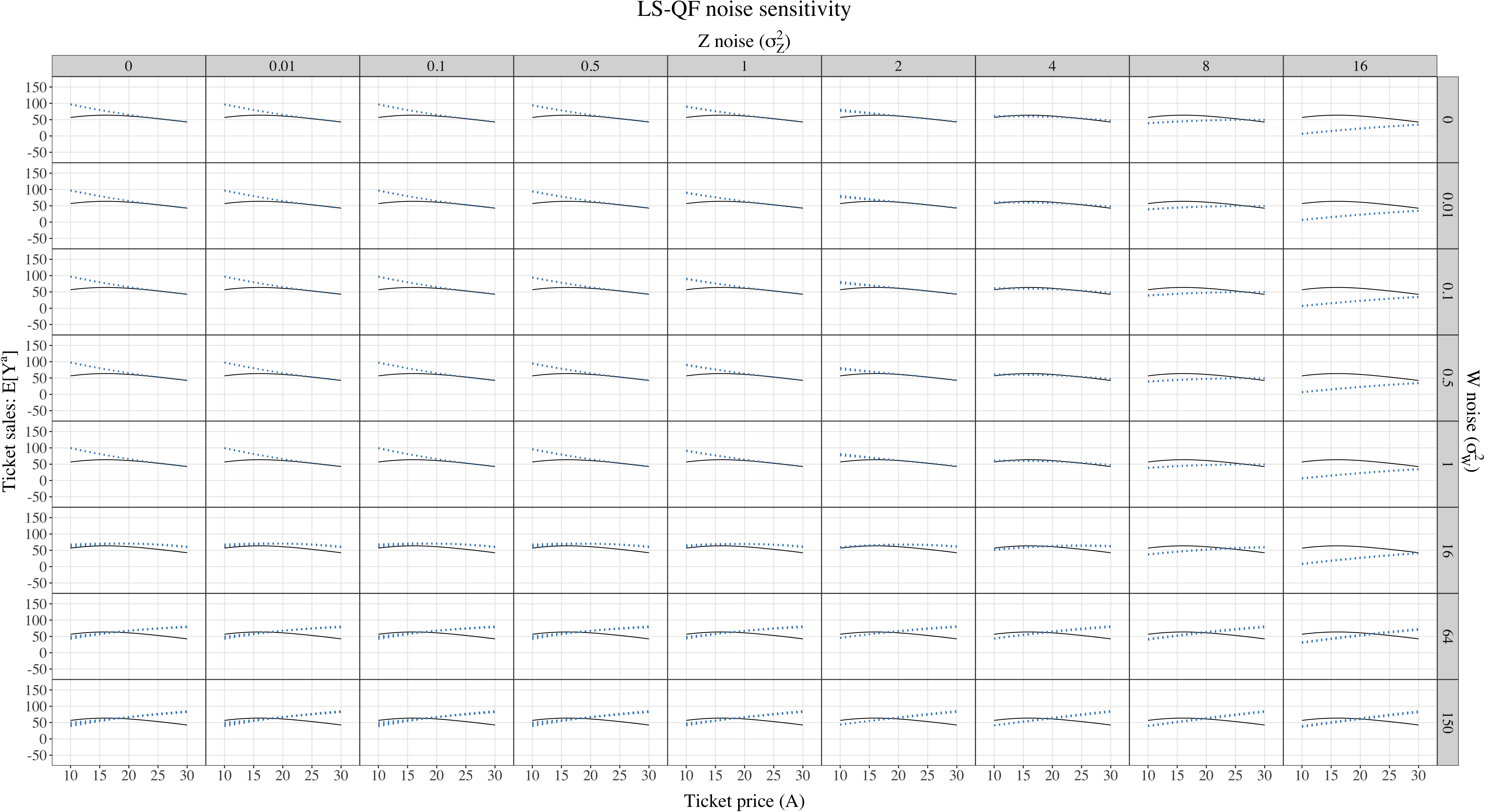}
    \caption{Predicted potential outcome curves for 20 replicates of LS-QF. Black curve is the ground truth $\mathbb{E}[Y^a]$.}
    \label{fig:demand_noise_predcurve_lr_awzy2}
\end{sidewaysfigure}

\begin{sidewaysfigure}[!htbp]
\centering
    \includegraphics[width=\textwidth]{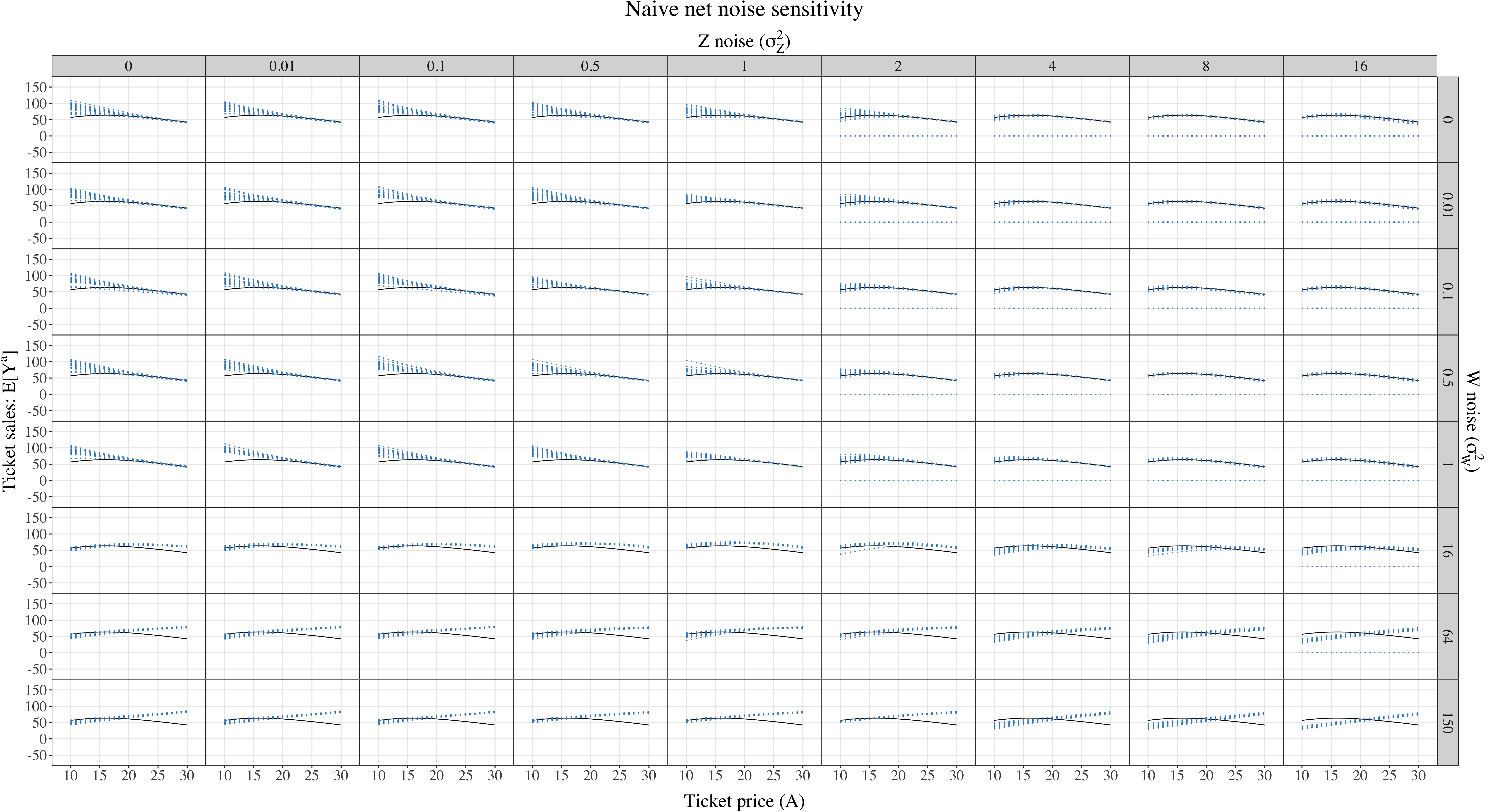}
    \caption{Predicted potential outcome curves for 20 replicates of Naive Net. Black curve is the ground truth $\mathbb{E}[Y^a]$.}
    \label{fig:demand_noise_predcurve_nn_awzy}
\end{sidewaysfigure}

\begin{sidewaysfigure}[!htbp]
\centering
    \includegraphics[width=\textwidth]{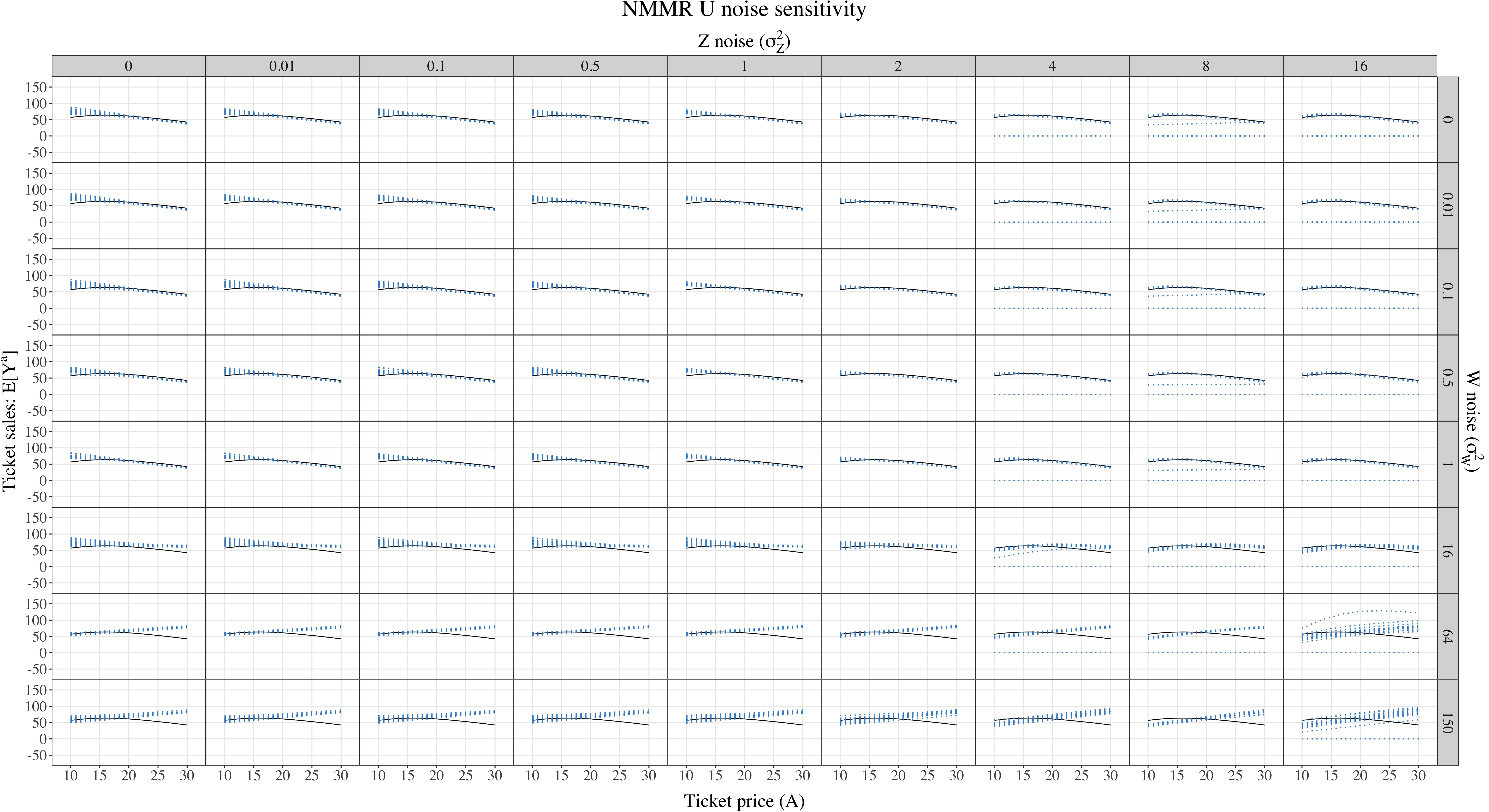}
    \caption{Predicted potential outcome curves for 20 replicates of NMMR-U. Black curve is the ground truth $\mathbb{E}[Y^a]$.}
    \label{fig:demand_noise_predcurve_nmmr_u}
\end{sidewaysfigure}

\begin{sidewaysfigure}[!htbp]
\centering
    \includegraphics[width=\textwidth]{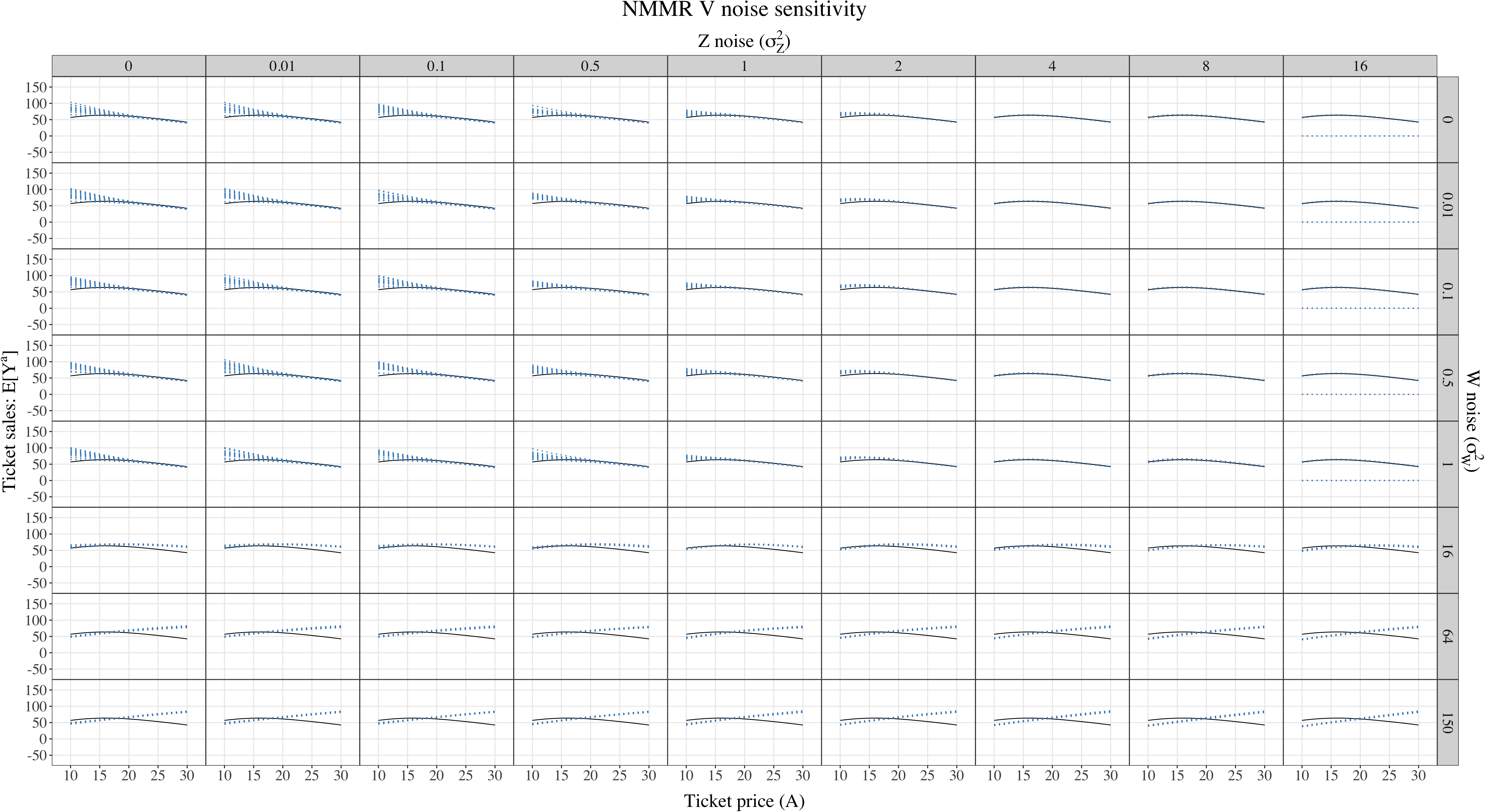}
    \caption{Predicted potential outcome curves for 20 replicates of NMMR-V. Black curve is the ground truth $\mathbb{E}[Y^a]$.}
    \label{fig:demand_noise_predcurve_nmmr_v}
\end{sidewaysfigure}

\end{document}